\newtheorem{theorem}{Theorem}
\newtheorem{definition}{Definition}
\newtheorem{remark}{Remark}
\newcommand{\baselinevf}{65.58\%}
\newcommand{\unguidedvf}{40.67\%}
\newcommand{\baselinemc}{93.64\%}
\newcommand{\unguidedmc}{14.12\%}
\newcommand{\baselinevfcrps}{84.83\%}
\newcommand{\unguidedvfcrps}{56.26\%}
\newcommand{\baselinemccrps}{92.55\%}
\newcommand{\unguidedmccrps}{14.10\%}
\newcommand{\std}[1]{{\tiny{$\pm$#1}}}
\newcommand{\best}[1]{{\textcolor{red}{#1}}}
\newcommand\blue[1]{{#1}}
\newcommand\neww[1]{{\color{black} #1}}
\newcommand\rev[1]{{\color{black} #1}}
\title{Pattern-Guided Diffusion Models}
\author{%
  Vivian Lin \\
  PRECISE Center\\
  University of Pennsylvania\\
  Philadelphia, PA \\
  \texttt{vilin@seas.upenn.edu} \\
  \And
  Kuk Jin Jang\thanks{Part of the work completed while at PRECISE Center, University of Pennsylvania.}\\
  Department of Computer Engineering\\
  Hongik University\\
  Seoul, South Korea\\
  \texttt{jangkj@hongik.ac.kr} \\
  \AND
  Wenwen Si \\
  PRECISE Center\\
  University of Pennsylvania\\
  Philadelphia, PA \\
  \texttt{wenwens@seas.upenn.edu} \\
  \And
  Insup Lee \\
  PRECISE Center\\
  University of Pennsylvania\\
  Philadelphia, PA\\
  \texttt{lee@cis.upenn.edu}
}
\begin{document}

\maketitle

\begin{abstract}
Diffusion models have shown promise in forecasting future data from multivariate time series. However, few existing methods account for recurring structures, or patterns, that appear within the data. We present Pattern-Guided Diffusion Models (PGDM), which leverage inherent patterns within temporal data for forecasting future time steps. PGDM first extracts patterns using archetypal analysis and estimates the most likely next pattern in the sequence. By guiding predictions with this pattern estimate, PGDM makes more realistic predictions that fit within the set of known patterns. We additionally introduce a novel uncertainty quantification technique based on archetypal analysis, and we dynamically scale the guidance level based on the pattern estimate uncertainty. We apply our method to two well-motivated forecasting applications, predicting visual field measurements and motion capture frames. On both, we show that pattern guidance improves PGDM's performance (MAE / CRPS) by up to~\unguidedvf{} / \unguidedvfcrps{} and~\unguidedmc{} / \unguidedmccrps{}, respectively. PGDM also outperforms baselines by up to~\baselinevf{} / \baselinevfcrps{} and~\baselinemc{} / \baselinemccrps{}.
\end{abstract}

\section{Introduction}\label{sec:introduction}
Diffusion models are a class of generative models that perform generation by iteratively removing noise from a noisy sample. These models are easier to train and generate higher quality images compared to the previous state-of-the-art, generative adversarial networks~\citep{dhariwal2021diffusion}. Recent work has found success in using diffusion models to forecast future steps of temporal data~\citep{chang2024transformer,feng2024latent,gu2022stochastic,hu2024towards,li2022generative,lv2024learning,rasul2021autoregressive,wen2023diffstg}. 
\blue{Such methods, however, rarely leverage the recurring structures that often manifest in temporal data. These appear, for example, in medical modalities due to the 
physiology and anatomy of the human body. Basketball videos also contain repeated structures due to standardized courts, player positions, and strategies.
The few diffusion-based forecasters that exploit these \textit{patterns} often overlook changes over time and uncertainties in pattern representation~\citep{wang2024egonav,westny2024diffusion,zhao2024diff}.
}

In this paper, we present Pattern-Guided Diffusion Models (PGDM) for forecasting temporal data with inherent patterns. 
\blue{
Using archetypal analysis~\citep{cutler1994archetypal}, we extract patterns from training data, then train a guidance function to predict future pattern contributions to the data.
PGDM then forecasts future points guided by these predictions.
To handle evolving patterns, we introduce a novel uncertainty metric that dynamically tunes the scale of pattern guidance.
}

We evaluate PGDM on two impactful applications. First, we consider the clinical application of visual field (VF) prediction. Visual field tests measure a patient's functional vision, and the resulting measurements manifest common patterns across patients due to eye anatomy. Furthermore, forecasting future VF measurements can serve as a decision aid for clinicians. On a real-world VF dataset, we find that pattern guidance improves the performance (MAE / CRPS) of PGDM by up to~\unguidedvf{} / \unguidedvfcrps{} on average, surpassing baseline models by up to~\baselinevf{} / \baselinevfcrps{} on average. Next, we consider the application of forecasting future motion capture frames for human motion prediction. Pose patterns frequently appear in common human movements, such as walking and running. Predicting human motion may aid advancements in human robot collaboration and autonomous driving. On motion capture frames for a variety of dance genres, we show that PGDM is able to leverage even the diverse, highly dynamic patterns that present in dance motion. Pattern guidance improves the performance (MAE / CRPS) of PGDM by up to~\unguidedmc{} / \unguidedmccrps{} on average, allowing PGDM to surpass baselines by up to~\baselinemc{} / \baselinemccrps{} on average.

In summary, our contributions are as follows.
\begin{enumerate}
    \item We present Pattern-Guided Diffusion Models (PGDM), which leverage inherent patterns within temporal data.
    \item We introduce a novel uncertainty quantification method based on archetypal analysis, and we show that this uncertainty metric captures geometric distance from the training set.
    \item We show that the proposed uncertainty quantification metric approximately lower bounds the error of the pattern predictions that guide PGDM.
    \item We propose a method to dynamically tune the level of pattern guidance based on the proposed uncertainty metric.
\end{enumerate}
\section{Related Works}\label{sec:related}
\blue{
Diffusion models have been widely used for temporal forecasting.
TimeGrad~\citep{rasul2021autoregressive} and LDT~\citep{feng2024latent} forecast multivariate time series conditioned on histories, while BVAE~\citep{li2022generative} uses a bi-directional VAE for the reverse diffusion process.
Models like USTD~\citep{hu2024towards} and DiffSTG~\citep{wen2023diffstg} capture spatial dependencies using spatio-temporal graphs.
Other applications include pedestrian trajectory~\citep{gu2022stochastic,lv2024learning} and medical sensor signal prediction~\citep{chang2024transformer}.
For a comprehensive overview, see~\citet{yang2024survey}.
}

Few such diffusion-based forecasters attempt to leverage patterns within the data. Hypothesizing that past patterns likely reappear later, Diff-MGR~\citep{zhao2024diff} conditions predictions on previous patterns. \citet{westny2024diffusion} also proposed to guide predictions of traffic trajectories using patterns, formalized as environment maps, as agent behaviors are often dictated by the environment (e.g., cars stay within the road lanes).
Similarly to Diff-MGR,~\citet{westny2024diffusion} assumes that manifested patterns will remain constant over time, as predictions are conditioned on a fixed map. \citet{wang2024egonav} instead proposed to guide predictions with dynamically changing patterns, captured by segmented real-time camera readings, as humans are most likely to walk towards specific destinations within a scene, such as a door, stairs, or a hallway.
\blue{
However, these approaches do not account for uncertainty in the guiding patterns. In contrast, our PGDM model adapts the level of pattern guidance based on the estimated reliability of dynamically evolving patterns.
}
\section{Background}\label{sec:background}
Let the data of interest be $x\in\mathbb{R}^d$ sampled from distribution $p(x)$, which arrives in a temporal sequence $\{x_t\}$ with time index $t$. We are concerned with such data that contains \textit{patterns}, or repeating structures. Given an observed history of length $T$ over time $t\in\{1,2,\dots,T\}$, we aim to predict a horizon of length $H$ over time $t\in\{T+1,T+2,\dots,T'\}$ with $T'=T+H$. Denote a set of $n$ of history and horizon pairs by $\{x^i_{1:T}, x^i_{T:T'}\}_{i=1}^n$. We use archetypal analysis, which identifies patterns resembling real data rather than an abstraction, to overcome the challenge of extracting useful patterns. Here, we provide a brief overview of diffusion models and archetypal analysis.

\subsection{Diffusion Models}
Diffusion models~\citep{ho2020denoising,sohl2015deep} aim to learn and generate data from a distribution $p(x)$ through a forward and reverse process,
in which noise is iteratively added to and removed from the data, respectively. Given $x_{t,0}\sim p(x_t)$ at time $t$, the fixed $S$-step forward process creates a sequence of increasingly noisy samples $x_{t,1}, x_{t,2}, \dots, x_{t,S}$. Note that here we use the notation $x_{t,s}$, where $t$ denotes the time index and $s$ denotes the diffusion step.
The noisy samples are drawn from the distribution $q\left(x_{t,s}\middle|x_{t,s-1}\right) \coloneq \mathcal{N}\left( \sqrt{1-\beta_s}x_{t,s-1}, \beta_s\mathbf{I}, \right)$,
where $\beta_1,\beta_2,\dots,\beta_S$ is a noise variance schedule. With appropriately chosen variance schedule, this distribution approaches a standard normal as $S\rightarrow \infty$. Conveniently, the noising step $x_{t,s}$ can be sampled in closed form given $x_{t,0}$, $q\left( x_{t,s} \middle| x_{t,0} \right) = \mathcal{N} \left( \sqrt{\bar{\alpha}_s} x_{t,0}, (1-\bar{\alpha}_s)\mathbf{I} \right)$, 
where $\alpha_s = 1-\beta_s$ and $\bar{\alpha}_s = \prod_{i=1}^s \alpha_i$.

Conversely, the reverse process removes noise from $x_{t,S} \sim \mathcal{N}(\mathbf{0}, \mathbf{I})$ to ultimately recover the training distribution. The goal is to learn the distribution $p_\theta \left( x_{t,s-1} \middle| x_{t,s}\right) \coloneq \mathcal{N} \left( \mu_\theta \left(x_{t,s}, s\right), \sigma_s^2\mathbf{I} \right)$.
The denoising parameters $\theta$ are learned by optimizing the evidence lower bound (ELBO) 
on negative log likelihood, $-\log p_\theta \left(x_{t,0}\right)$. At inference time, samples from the learned distribution $p_\theta\left(x_{t,0}\right)$ are generated by applying the reverse process over $S$ steps to noisy samples.


\subsection{Archetypal Analysis}
Archetypal analysis (AA)~\citep{cutler1994archetypal} extracts extremal patterns, or archetypes, from a given dataset. These archetypes are themselves a combination of the data and therefore are realistic and interpretable representations of the data's significant patterns. Figure~\ref{fig:uwhvf_aa} shows examples extracted from a visual field dataset. Furthermore, any point within the given dataset can be constructed as a combination of these archetypes, allowing for the contribution of each pattern to be quantified.

\begin{figure*}[!t]
    \centering
    \begin{subfigure}[b]{0.50\textwidth}
        \includegraphics[width=\linewidth]{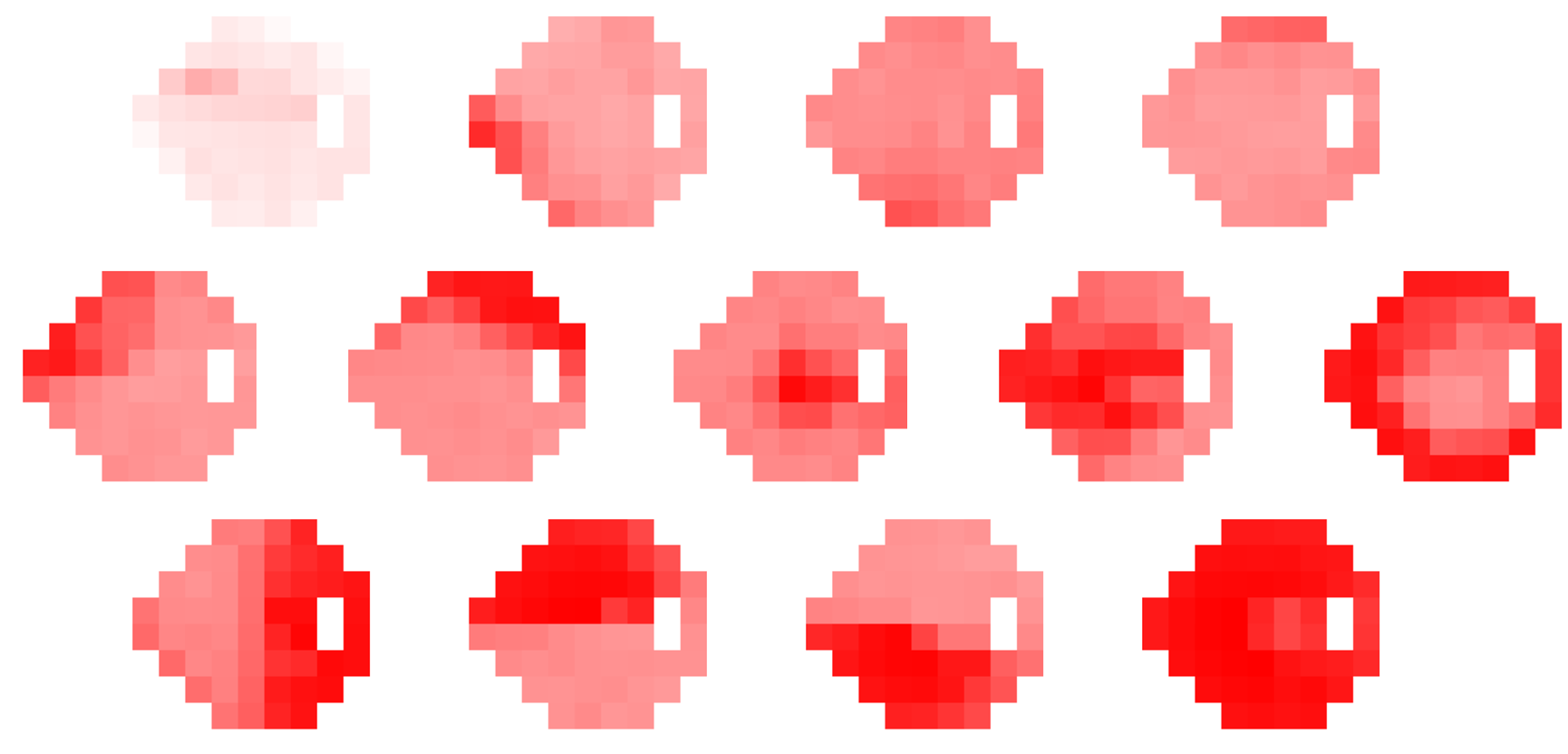}
        \caption{Example visual field archetypes. \label{fig:uwhvf_aa}}
    \end{subfigure}
    \hspace{0.1\textwidth}
    \begin{subfigure}[b]{0.35\textwidth}
        \includegraphics[width=\linewidth]{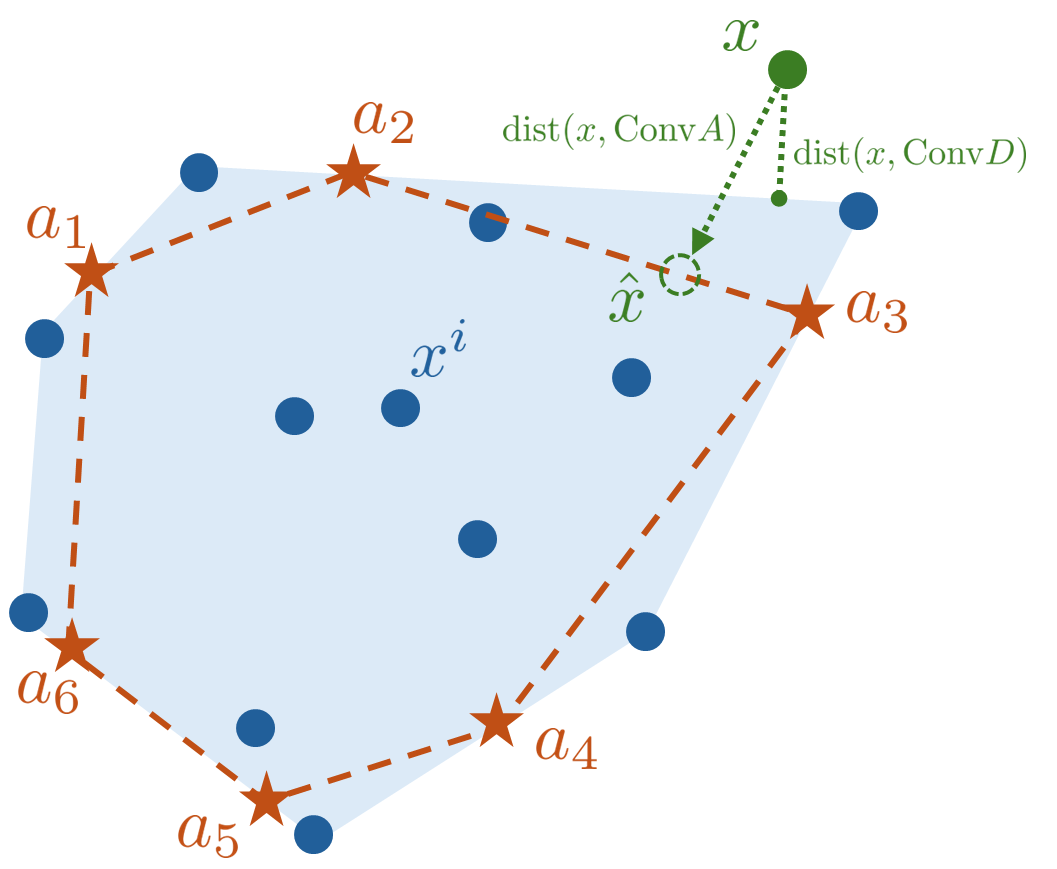}
        \caption{Geometric interpretation. \label{fig:aa}}
    \end{subfigure}
    \caption{\small Overview of archetypal analysis (AA). (a) Example archetypes extracted from a visual field dataset. The archetypes capture visual loss patterns that consistently appear across glaucoma patients. Darker regions indicate greater vision loss. (b) Given a dataset $D$ (blue dots), AA identifies a set of archetypes $A$ (red stars). Within ${\rm Conv}A\subseteq {\rm Conv}D$ (within red dashed line), any point can be reconstructed from $A$ without error. For any point $x \notin {\rm Conv}A$, the reconstruction error is the distance between $x$ and ${\rm Conv} A$.}
\end{figure*}

More formally, given dataset $D = \{x^i\}_{i=1}^n$, AA finds the $p$ archetypes $a_1,a_2, \dots a_p \in \mathbb{R}^d$ that minimize the residual sum of squares (RSS) error,
\begin{equation}\label{eq:rss}\textstyle
    \min_{c^i_j, a^i_j \;\forall i,j} \sum_{i=1}^n \left\Vert x^i - {\textstyle \sum_{j=1}^p} \;c_j^i a_j \right\Vert^2,
\end{equation}
subject to $c^i_j \geq 0 \; \forall j$ and $\sum_{j=1}^p c^i_j = 1$ for each $i\in\{1,2,\dots,n\}$, and where $a_j = \sum_{k=1}^n \beta_j^k x^k$ with $\beta_j^k \geq 0 \;\forall k$ and $\sum_{k=1}^n\beta_j^k=1$ for each $j \in \{1,,2,\dots,p\}$. That is, if zero reconstruction error is achieved, then each $x^i \in D$ can be reconstructed as a convex combination of the archetypes, and the archetypes are themselves a convex combination of the data. Then, given a set of identified archetypes, any new data point can be reconstructed as a convex combination of the archetypes, with coefficients found by minimizing the objective in \eqref{eq:rss} with fixed $a_1,a_2,\dots,a_p$.

Figure~\ref{fig:aa} visualizes a geometric interpretation of AA. Intuitively, AA identifies a region in which any point can be perfectly reconstructed by the archetypes. This region is the convex hull of the archetypes, denoted ${\rm Conv}A$ for the archetype set $A$. In the ideal case when $p=n$, the set of archetypes is exactly $D$, and the region of reconstructible points fully encompasses the dataset. In practice, $p < n$ is typically chosen, and the resulting archetypes instead lie on the boundary of ${\rm Conv}D$~\citep[Proposition~1]{cutler1994archetypal}. The archetype set therefore defines a reconstructible region that closely, but often not fully, captures the dataset $D$. Furthermore, for any $x\notin {\rm Conv}A$, the reconstruction error can be thought of as the distance between $x$ and ${\rm Conv}A$. In Section~\ref{sec:aauq}, we show that this reconstruction error can be used to estimate the distance between $x$ and the dataset $D$.
\section{Methods}\label{sec:methods}


Our goal is to learn the conditional distribution $p(x_{T:T'} | x_{1:T}, \hat{P}_{T:T'} )$, where $\hat{P}_{T,T'}$ represents the patterns estimated to manifest in the future. To determine a pattern representation space,
we first extract archetype set $A$ from the training data. The representation space is the contribution of archetypes to each data point. We then train our proposed Pattern-Guided Diffusion Model (PGDM) to learn the desired conditional distribution. 

Figure~\ref{fig:pgdm} summarizes PGDM, which is constructed from three key components. 1) A pattern guidance function predicts the patterns $\hat{P}$ that will appear over the future horizon. First, the input sequence $x_{1:T}$ is projected to the archetype space, where the projected pattern representation $c_A(x_{1:T})$ quantifies the contribution of each pattern to the data. The projected representation for the future steps is then predicted, and the resulting estimate $\hat{c}_A(x_{T:T'})$ is lifted back to the original data space as a predicted pattern $\hat{P}_{T:T'}$. 2) While prediction in the lower $p$-dimensional archetype space helps to evade the curse of dimensionality, the projection operation naturally incurs an information loss. By design, while any point $x_t\in{\rm Conv}A$ can be projected with no information loss, this cannot be said for a point $x_t \notin {\rm Conv}A$, which may appear when the data's patterns change over time. Figure~\ref{fig:aa_loss} visualizes this loss of information. We therefore introduce a novel Archetypal Analysis Uncertainty Quantification (AAUQ) technique, which we show captures geometric distance from the training set. This uncertainty metric also estimates a lower bound on the loss of the guidance function. 3) We use the predicted pattern from the guidance function as additional conditioning context for the diffusion model, following the general methodology of classifier-free diffusion guidance~\citep{ho2022classifier}. To account for uncertainties in the guidance function, we dynamically tune the guidance scale based on our uncertainty metric. We now describe each of the three components in further detail.

\begin{figure*}[!t]
    \centering
    \begin{subfigure}[h]{0.60\textwidth}
        \includegraphics[width=\linewidth]{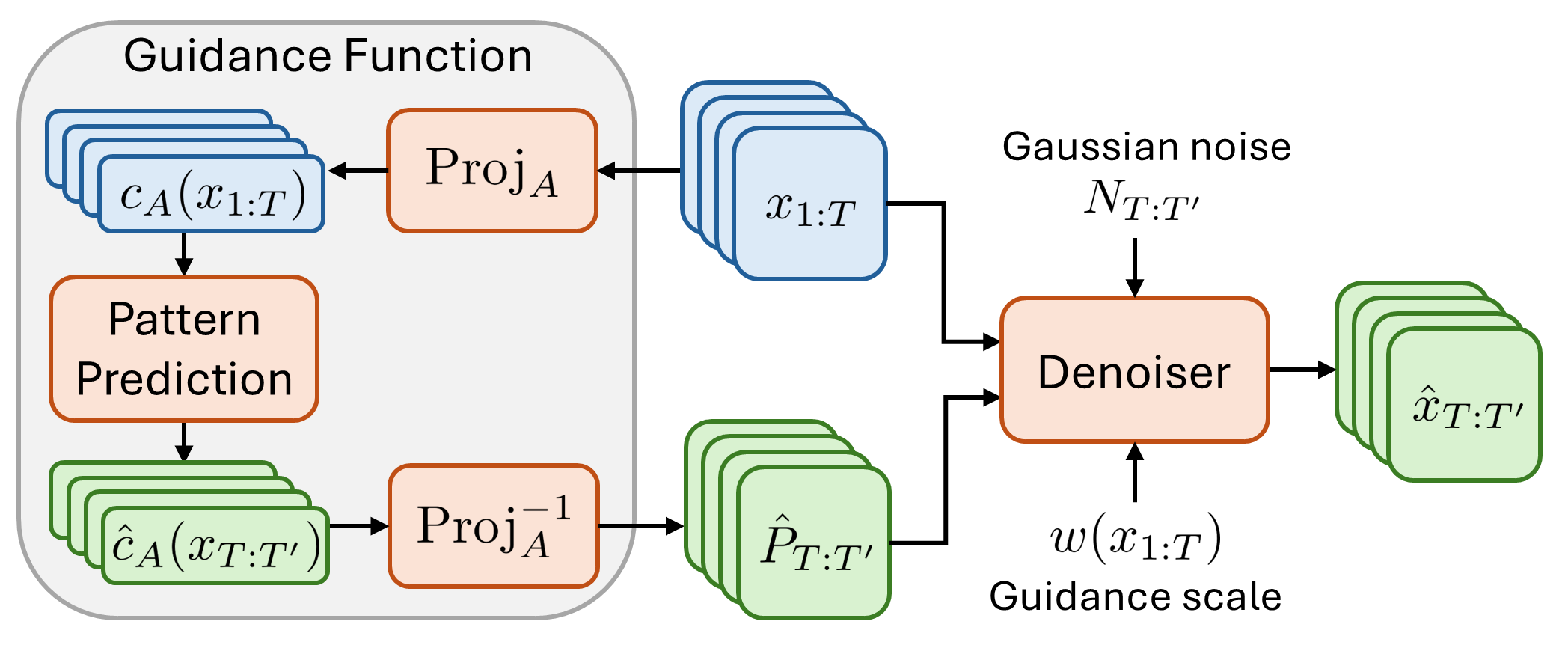}
        \caption{PGDM. \label{fig:pgdm}}
    \end{subfigure}
    \hspace{0.02\textwidth}
    \begin{subfigure}[h]{0.35\textwidth}
        \includegraphics[width=\linewidth]{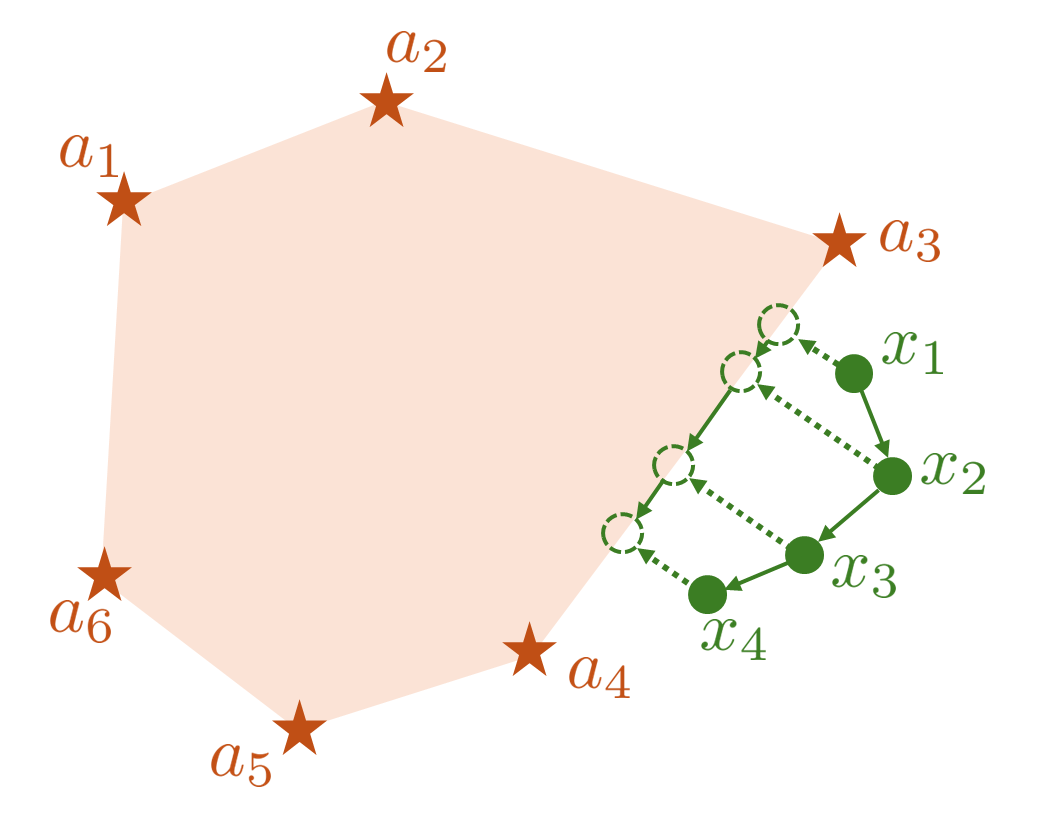}
        \caption{AA information loss. \label{fig:aa_loss}}
    \end{subfigure}
    \caption{\small (a) Overview of Pattern-Guided Diffusion Models. A pattern guidance function estimates the future patterns over a horizon. This prediction is performed in the archetype space, where the projected representation captures the contribution of each pattern to the data. The predicted future patterns are supplied as conditioning context to the diffusion model. The scale of the pattern guidance is dynamically tuned based on the uncertainty of the input sequence. (b) Projection into the lower-dimensional archetype space loses information for any point outside the convex hull of the archetypes ${\rm Conv} A$. In this example, the sequence $x_1,x_2,x_3,x_4 \notin {\rm Conv}A$ is perturbed to the boundary of ${\rm Conv}A$ after reconstruction, losing important dynamical information.}
\end{figure*}

\subsection{Guidance Function}\label{sec:guidance_fn}
First, we employ archetypal analysis to extract significant patterns from the training data. For convenience, when the meaning is clear, we overload the notation $A$ to indicate both the resulting set of $p<d$ archetypes $\{a_1,a_2,\dots,a_p\}\subset\mathbb{R}^d$ and the matrix constructed from these archetypes $A\in\mathbb{R}^{d\times p}$. These archetypes define an archetype, or pattern, space in $p$ dimensions. Given archetypes $A$, let the projection function that determines the pattern representation (i.e., the coefficients minimizing objective~\eqref{eq:rss}) be $c_A:\mathbb{R}^d\rightarrow\mathbb{R}^p$. For any point $x_t$, the reconstruction is $\hat{x}_t = Ac_A(x_t)$. We denote the error of this reconstruction as $L_{c_A}(x_t) = \Vert x_t - Ac_A(x_t) \Vert$.

Next, we train a lightweight neural network $f_A:\mathbb{R}^{p\times T} \rightarrow \mathbb{R}^{p\times H}$ to predict $H$ future pattern representations based on $T$ past pattern representations. The error of this prediction function is $L_{f_A}(c_{1:T}) = \Vert Ac_{T:T'} - Af_A(c_{1:T}) \Vert$.

The guidance function $f_G:\mathbb{R}^{d\times T} \rightarrow \mathbb{R}^{d \times H}$ is
\begin{equation}\label{eq:L_fG}
f_G (x_{1:T}) = A f_A \circ c_A(x_{1:T}).
\end{equation}
Note that $f_G(x_{1:T})$ is $\hat{P}_{T:T'}$ in Figure~\ref{fig:pgdm}. We now show that a lower bound on the guidance function error $L_{f_G}$ is a function of the projection error $L_{c_A}$ and pattern prediction error $L_{f_A}$.

\begin{theorem}[Bound on Guidance Function Error]\label{thm:bound}
    For any sequence $x_{1:T}$ and horizon $x_{T:T'}$ pair, let the error of guidance function $f_G$ defined in Equation~\eqref{eq:L_fG} be $L_{f_G}(x_{1:T}) = \left\Vert x_{T:T'} - A f_A \left(c_A(x_{1:T})\right)\right\Vert$.
    Then
    \begin{equation}\label{eq:L_fG_bnd}
        L_{f_G}(x_{1:T}) \geq L_{c_A}(x_{T:T'}) - L_{f_A}(x_{1:T}).
    \end{equation}
\end{theorem}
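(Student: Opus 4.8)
The plan is to derive the bound as a one-line consequence of the reverse triangle inequality, after inserting the reconstruction of the true future horizon, $A c_A(x_{T:T'})$, as an intermediate vertex. All three error quantities in the statement are norms of differences of vectors in $\mathbb{R}^{d \times H}$, and this reconstruction term is the natural common point that ties them together: it is the closest expressible point to $x_{T:T'}$ in the pattern representation, and it is also the target that $f_A$ attempts to predict.

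First I would rewrite $L_{f_G}(x_{1:T}) = \Vert x_{T:T'} - A f_A(c_A(x_{1:T})) \Vert$ and add and subtract $A c_A(x_{T:T'})$ inside the norm. Grouping the terms isolates two pieces: $x_{T:T'} - A c_A(x_{T:T'})$, whose norm is exactly the projection error $L_{c_A}(x_{T:T'})$ measuring how far the true future lies outside the reconstructible region $\mathrm{Conv}\,A$; and $A c_A(x_{T:T'}) - A f_A(c_A(x_{1:T}))$, whose norm is exactly the pattern-prediction error $L_{f_A}(x_{1:T})$ as defined earlier.

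The decisive step is then the reverse triangle inequality $\Vert u - v \Vert \geq \Vert u \Vert - \Vert v \Vert$, applied with $u = x_{T:T'} - A c_A(x_{T:T'})$ and $v = A f_A(c_A(x_{1:T})) - A c_A(x_{T:T'})$, so that $u - v = x_{T:T'} - A f_A(c_A(x_{1:T}))$. This yields $L_{f_G}(x_{1:T}) \geq L_{c_A}(x_{T:T'}) - L_{f_A}(x_{1:T})$ directly, matching Equation~\eqref{eq:L_fG_bnd}.

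There is no genuine technical obstacle; the only care required is in the bookkeeping. I would verify that all three norms are taken over the same space and that $A$ acts identically on the coefficient vectors in each term, so that the inserted vertex $A c_A(x_{T:T'})$ legitimately reproduces the definitions of $L_{c_A}$ and $L_{f_A}$. It is also worth flagging that the bound is only informative when $L_{c_A}(x_{T:T'}) > L_{f_A}(x_{1:T})$, i.e., when the intrinsic projection loss of the future horizon dominates the prediction error; this is precisely the regime in which the AAUQ metric of Section~\ref{sec:aauq} serves as a meaningful lower bound on the guidance error.
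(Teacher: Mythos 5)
Your proposal is correct and is essentially identical to the paper's proof in Appendix~\ref{app:bound_pf}: both insert the reconstruction $Ac_A(x_{T:T'})$ as the intermediate point and apply the reverse triangle inequality to obtain the bound. No further comment is needed.
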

\begin{proof}
Please see Appendix~\ref{app:bound_pf} for the proof.
\end{proof}
It is clear from Theorem~\ref{eq:L_fG_bnd} that, if the prediction function $f_A$ has a reasonably low error, then the error of the projection function $c_A$ can serve as an approximate lower bound for the error of the guidance function $f_G$. Therefore, an estimate of $L_{c_A}$ may quantify the degree to which PGDM should "trust" the guidance function, allowing the level of pattern guidance to be dynamically tuned. Next, we introduce a novel uncertainty quantification technique that can be used as a proxy for $L_{c_A}$.

\subsection{Uncertainty Quantification for Archetypal Analysis}\label{sec:aauq}
Projection to the lower-dimensional archetype space may lose information. We therefore introduce a novel uncertainty quantification metric based on archetypal analysis that captures this loss.

\begin{definition}[Archetypal Analysis Uncertainty Quantification]\label{def:aauq}
For any sequence $x_{1:T}$ and archetype set $A$, the uncertainty $u_A$ of the archetype projection is
    \begin{equation}\label{eq:aauq} \textstyle
        u_A(x_{1:T}) = \frac{1}{T}\sum_{t=1}^T\Vert x_t - A c_A(x_t)\Vert.
    \end{equation}
\end{definition}

Intuitively, this Archetypal Analysis Uncertainty Quantification (AAUQ) metric is simply the average reconstruction loss of the history sequence. AAUQ can also be geometrically interpreted as estimating the average distance of $x_{1:T}$ from the training dataset.

\begin{theorem}[AAUQ as Geometric Distance]\label{thm:aauq_dist}
    Assume that a set of archetypes $A = \{a_j\}_{j=1}^p$ is extracted from a dataset $D$. Define $d$ as the closest point in ${\rm Conv}D$ to $x_t$. For any $x_t$,
    \begin{equation}
        u_A(x_t) - \delta \leq {\rm dist}(x_t, {\rm Conv}D) \leq u_A(x_t) + \delta,
    \end{equation}
    \neww{where $\delta = \Vert Ac_A(x_t) - d \Vert$ and $\delta=0$ when $p=n$.}
\end{theorem}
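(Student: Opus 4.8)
The plan is to recognize that all three quantities appearing in the statement are distances among just three points of $\mathbb{R}^d$, so that both inequalities reduce to the triangle inequality. First I would name the three points: the query point $x_t$, its archetypal reconstruction $\hat{x}_t = A c_A(x_t)$, and the nearest point $d \in {\rm Conv}D$. Evaluating Definition~\eqref{eq:aauq} at a single step gives $u_A(x_t) = \Vert x_t - \hat{x}_t \Vert$; by definition of $d$ we have ${\rm dist}(x_t, {\rm Conv}D) = \Vert x_t - d \Vert$; and $\delta = \Vert \hat{x}_t - d \Vert$ by hypothesis. So the target inequalities only relate these three pairwise distances.

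The substantive observation, which I would establish next, is that $\hat{x}_t = A c_A(x_t)$ is exactly the Euclidean projection of $x_t$ onto ${\rm Conv}A$. This holds because $c_A$ minimizes the reconstruction objective \eqref{eq:rss} over the probability simplex with the archetypes fixed, and $\{A c : c \geq 0,\ \sum_j c_j = 1\} = {\rm Conv}A$; hence minimizing over $c$ coincides with minimizing $\Vert x_t - y \Vert$ over $y \in {\rm Conv}A$, whose unique minimizer is the metric projection onto this closed convex set. In particular $\hat{x}_t \in {\rm Conv}A \subseteq {\rm Conv}D$, so both $\hat{x}_t$ and $d$ lie in ${\rm Conv}D$, which is what makes $\delta$ a meaningful correction term.

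With the three points in hand, I would apply the triangle inequality twice. The inequality on $x_t, \hat{x}_t, d$ gives $\Vert x_t - d \Vert \leq \Vert x_t - \hat{x}_t \Vert + \Vert \hat{x}_t - d \Vert$, i.e. ${\rm dist}(x_t, {\rm Conv}D) \leq u_A(x_t) + \delta$, the upper bound. Rearranging the complementary inequality $\Vert x_t - \hat{x}_t \Vert \leq \Vert x_t - d \Vert + \Vert d - \hat{x}_t \Vert$ yields $u_A(x_t) - \delta \leq {\rm dist}(x_t, {\rm Conv}D)$, the lower bound. For the case $p = n$ I would invoke the fact recalled in Section~\ref{sec:background} that the archetype set then coincides with $D$, so ${\rm Conv}A = {\rm Conv}D$; consequently the projection of $x_t$ onto ${\rm Conv}A$ is the projection onto ${\rm Conv}D$, giving $\hat{x}_t = d$ and hence $\delta = \Vert \hat{x}_t - d \Vert = 0$.

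The hard part is the middle step: arguing that the archetypal coefficients realize the genuine metric projection onto ${\rm Conv}A$, rather than some other reconstruction, and that this projection collapses onto $d$ precisely when ${\rm Conv}A = {\rm Conv}D$. Once that identification is secured, both bounds follow immediately as a two-line application of the triangle inequality, and $\delta$ is seen to measure exactly the gap between projecting onto ${\rm Conv}A$ versus onto the full hull ${\rm Conv}D$.
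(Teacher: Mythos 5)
Your proposal is correct and follows essentially the same route as the paper's proof in Appendix~B: identify $\hat{x}_t = A c_A(x_t)$ as the metric projection onto ${\rm Conv}A$ (so $u_A(x_t) = \Vert x_t - \hat{x}_t\Vert$), relate the three pairwise distances among $x_t$, $\hat{x}_t$, and $d$ via the (reverse) triangle inequality, and dispatch the $p=n$ case by noting ${\rm Conv}A = {\rm Conv}D$ forces $\hat{x}_t = d$. Your two separate applications of the ordinary triangle inequality are just the unpacked form of the paper's single reverse-triangle-inequality step, so there is no substantive difference.
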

\begin{proof}
Please see Appendix~\ref{app:aauq_geometry_pf} for the proof.
\end{proof}

\begin{remark}
    In Theorem~\ref{thm:aauq_dist}, $\delta$ captures the ability of the archetypes to express the dataset $D$. This can be seen in the proof of Theorem~\ref{thm:aauq_dist}. This can also be understood from the example in Figure~\ref{fig:aa}, in which ${\rm dist}(x, {\rm Conv}A)$ is exactly $u_A(x)$, and $\delta$ is the distance between $\hat{x}$ and the point on the boundary of ${\rm Conv}D$. This has interesting implications for the geometric interpretation of AAUQ \neww{in the case that the archetypes perfectly reconstruct the data}, which occurs when the number of archetypes is equal to the size of the dataset. It is clear from Theorem~\ref{thm:aauq_dist} that if $p=n$,
    \begin{equation}\label{eq:equality}
        u_A(x_t) = {\rm dist}(x_t, {\rm Conv}D).
    \end{equation}
\end{remark}

\subsection{Pattern-Guided Diffusion Models}\label{sec:pgdm}
PGDM predicts future sequences $x_{T:T'}$ conditioned on the pattern prediction from the guidance function. We follow the methodology of classifier-free diffusion guidance, in which the model is trained with conditioning dropout. This effectively learns two denoising models, $\epsilon_\theta(z_{T:T',s},\: x_{1:T},\:\hat{P}_{T:T'})$ and $\epsilon_\theta(z_{T:T',s},\: x_{1:T},\:\emptyset)$, where $\emptyset$ is a null value and $z_{T:T',s}$ is the sample to be denoised at diffusion step $s$. Algorithm~\ref{alg:training} summarizes the training process of PGDM. For history and horizon sequences sampled in Line~\ref{line:sample}, the guidance conditioning is set to the pattern prediction or a null value in Line~\ref{line:pattern}. With the loss function on Line~\ref{line:grad}, the denoising model learns to estimate the noise added to $x_{T:,T'}$.

When generating samples with traditional classifier-free guidance, each denoising step uses a linear combination of the conditional and unconditional predictions:
\begin{equation*}
\hat{\epsilon}_\theta(z_{T:T',s}, \:x_{1:T}) = w \epsilon_\theta(z_{T:T',s}, \:x_{1:T},\:\hat{P}_{T:T'}) + (1-w) \epsilon_\theta(z_{T:T',s},\: x_{1:T},\:\emptyset),
\end{equation*}
where $w\geq0$ is the \textit{guidance scale}. When $w=0$, generation is unguided and sampled data are more diverse. The guidance level increases with $w$, leading to less diverse but higher quality samples. While traditional classifier-free guidance is performed with constant guidance scale $w$, we instead use a dynamic guidance scale $w(x_{1:T})$ that captures the trustworthiness of the guidance function:
\begin{equation}\label{eq:guidance}
\hat{\epsilon}_\theta(z_{T:T',s}, \:x_{1:T}) = w(x_{1:T}) \epsilon_\theta(z_{T:T',s}, \:x_{1:T},\:\hat{P}_{T:T'}) + (1-w(x_{1:T})) \epsilon_\theta(z_{T:T',s},\: x_{1:T},\:\emptyset),
\end{equation}

\begin{definition}[Dynamic Guidance Scale]
    Let $A$ be the set of archetypes extracted from the training dataset. Given a maximum guidance scale $\overline{w}$ and maximum tolerable uncertainty $\gamma$, the dynamic guidance scale for sequence $x_{1:T}$ is
    \begin{equation}\label{eq:w_x}\textstyle
        w(x_{1:T}, \overline{w}, \gamma) = {\rm ReLU}\left( -\frac{\overline{w}}{\gamma} u_A(x_{1:T}) + \overline{w} \right).
    \end{equation}
\end{definition}
Here, we measure the trustworthiness of our guidance function by our AAUQ uncertainty metric. The uncertainty $u_A(x_{1:T})$ is easy to compute and in practice, we find that the $u_A(x_{1:T})$ is a good proxy for estimating $L_{c_A}(x_{T:T'})$ and therefore the lower bound in Theorem~\ref{thm:bound}. Hence, we design the dynamic guidance scale so that, as uncertainty increases, the guidance scale $w \in (0, \overline{w})$ decreases. When the uncertainty exceeds $\gamma$, w = 0. \neww{In other words, PGDM follows the pattern guidance most strictly when the data is in-distribution. For out-of-distribution data with unseen patterns, PGDM relies less on pattern-guidance, reverting to a standard diffusion model in the extreme case.}

\begin{algorithm}[t!]
\caption{PGDM Training}\label{alg:training}
    \begin{algorithmic}[1]
     \For{all epochs}
      \State Sample $x_{1:T}$, $x_{T:T'}$ from the training set\label{line:sample}
      \State $s \sim {\rm Uniform}(1,\dots,S)$, $\epsilon \sim \mathcal{N}(\mathbf{0}, \mathbf{I})$
      \State $\hat{P}_{T:T'} = \emptyset$ with probability $p_{\rm drop}$, else $\hat{P}_{T:T'} = f_G(x_{1:T})$ from eqn.~\eqref{eq:L_fG}\label{line:pattern}
      \State Take gradient descent step on $
      \nabla_\theta \Vert \epsilon - \epsilon_\theta ( \sqrt{\overline{\alpha}_s} x_{T:T'} + \sqrt{1-\overline{\alpha}_s}\epsilon,\: x_{1:T},\: \hat{P}_{T:T'}) \Vert^2
      $\label{line:grad}
     \EndFor
    \end{algorithmic}
\end{algorithm}

\begin{algorithm}[t!]
\caption{PGDM Inference}\label{alg:inference}
    \begin{algorithmic}[1]
    \State {\bfseries given} $x_{1:T}$
     \State $x_{T:T',S} \sim \mathcal{N}(\mathbf{0},\mathbf{I})$\label{line:noise}
     \State $\hat{P}_{T:T'} = f_G(x_{1:T})$ from eqn.~\eqref{eq:L_fG}\label{line:guidance_fn}
     \For{s = S, \dots, 1}\label{line:loop}
      \State $n \sim \mathcal{N}(\mathbf{0}, \mathbf{I})$ if s > 1, else $n = 0$
      \State Compute $\hat{\epsilon}_\theta(z_{T:T',s}, \:x_{1:T})$ from eqn.~\eqref{eq:guidance}
      \State Sample $x_{T:T',s-1} = \frac{1}{\sqrt{\alpha_s}}\left( x_{T:T',s} - \frac{1-\alpha_s}{\sqrt{1-\overline{\alpha}_s}} \right)\hat{\epsilon}_\theta(z_{T:T',s}, \:x_{1:T}) + \sqrt{\beta_s}n$\label{line:reverse}
     \EndFor
     \State \rev{Compute $w^* = w(x_{1:T}, \overline{w^*}, \gamma)$ from eqn.~\eqref{eq:w_x}, for $\overline{w^*} \in [0, 1]$}\label{line:mix_scale}
     \State Mix $\hat{x}_{T:T'} = w^*\hat{P}_{T:T'} + (1-w^*)x_{T:T',0}$\label{line:mix}
    \end{algorithmic}
\end{algorithm}

Algorithm~\ref{alg:inference} summarizes the inference process of PGDM. In Line~\ref{line:noise}, noisy data is sampled from a standard normal distribution. In Line~\ref{line:guidance_fn}, the patterns of the future sequence are predicted. Then, over $S$ reverse diffusion steps in Lines~\ref{line:loop}--\ref{line:reverse}, the guided and unguided denoising models are combined with dynamic guidance scale to iteratively remove noise. The resulting sequence prediction is sampled from the learned distribution, which we expect to be tightly centered around the pattern prediction. Finally in Lines~\ref{line:mix_scale} and \ref{line:mix}, we mix the raw pattern prediction with the pattern-guided sequence prediction using a dynamic mixing scale \rev{with maximum value $\overline{w^*}$}. We include this mixing step to mitigate some potential practical challenges of PGDM \rev{(see Appendix~\ref{app:pattern_mixing} for a full discussion)}.
\section{Applications}
\neww{We validate PGDM on two applications, visual field prediction and human motion prediction. To help demonstrate the benefits of pattern guidance, we show the performance of two PGDM models selected from our hyperparameter search. The first model, PGDM$_{\rm MAE}$, achieved the lowest validation mean absolute error (MAE) with pattern guidance. The second, PGDM$_{\rm GDE}$, achieved the highest capacity for pattern guidance, or the greatest achievable improvement in MAE by using guided predictions ($\overline{w} > 0$) over unguided predictions ($\overline{w} = 0$).} \rev{That is, we selected PGDM$_{\rm GDE}$ as the model that achieved the largest gain in MAE by adding any level of guidance in the range of 1 to 5.}

For these two models, we compare performance with guidance to performance without guidance and multiple baselines. While we would ideally compare PGDM to baselines that use some pattern conditioning~\citep{wang2024egonav,westny2024diffusion,zhao2024diff}, most existing methods formalize patterns in a manner \rev{that does} not translate to our applications. For others, we were unable to obtain sufficient implementation details or code. Therefore, we instead select more general baseline techniques that do not use pattern conditioning. \rev{We compare PGDM to multiple diffusion-based baselines for forecasting, TimeGrad~\citep{rasul2021autoregressive}, CSDI~\citep{tashiro2021csdi}, and ARMD~\citep{gao2025auto}. While CSDI is a data imputation technique, it can easily be extended for forecasting. For the visual field application, we further compare PGDM to GenViT~\citep{yang2022your}, a diffusion model with a vision transformer backbone that has been applied to VF prediction by~\citet{tian2023glaucoma}. Finally, we perform multiple ablations, one serving as a proxy for evaluating~\citet{zhao2024diff}.}

\rev{We evaluate all models using mean absolute error (MAE). Additionally, to evaluate PGDM as a probabilistic forecaster, we measure the continuous ranked probability score (CRPS). Following~\citet{rasul2021autoregressive} and~\citet{tashiro2021csdi}, we report CRPS$_{\rm SUM}$, computed as the CRPS over the summed dimensions of the data.}

Source code is supplied in the supplementary material. Complete implementation details for pattern extraction and training, including hyperparameter selection, model architectures, compute resources, \rev{parameter counts, and inference latency} are provided in Appendix~\ref{app:implementation}.

\textbf{Visual Field Prediction.} Pattern-Guided Diffusion models are especially useful in medical settings, where data often reflects consistent patterns due to anatomy. For instance, 24-2 visual field (VF) tests measure light sensitivity in decibels (dB) at 52 central points of vision, with specific loss patterns linked to structural eye damage (e.g., nerve fiber bundle loss)~\citep{keltner2003classification}. Figure~\ref{fig:uwhvf_aa} illustrates archetypal patterns from VF data, where darker areas indicate reduced vision. Forecasting VF outcomes can support clinicians in diagnosis, progression identification, and treatment planning.

We evaluate PGDM on the public the University of Washington Humphrey Visual Field (UWHVF) dataset~\citep{montesano2022uwhvf}.
To the best of our knowledge, UWHVF is the only publicly available 24-2 VF dataset, containing 7,428 sequences from 3,871 patients. The UWHVF measurements capture the patient's light sensitivity compared to normative data, ranging from -38 dB to 50 dB. That is, a negative (positive) dB indicates worse (better) vision than typical.
Due to the few follow-up visits per patient, we predict $H=1$ step into the future based on the past $T=3$ steps. Additionally, as VF measurements are taken at non-constant time increments, we also condition predictions on the recorded age at each VF measurement and the desired time horizon for prediction. Thus, the one-step-ahead prediction can be made for an arbitrary length time period. We create multiple forecasting sequences from each patient in a sliding window fashion, resulting in 6,171 sequences.

\neww{\textbf{Human Motion Prediction.} We also apply PGDM to predict future frames of human motion capture data, a task relevant to domains including human robot interaction and autonomous driving~\citep{lyu20223d}. Human motion often involves repeated body positions when executing common movements (e.g., walking, running, dancing).
While our visual field application demonstrates PGDM's utility in the clinical domain, the motion prediction application presents a more challenging task with more rapidly evolving signals and longer prediction horizons.}

\neww{We evaluate PGDM on the AIST++ dataset~\citep{li2021ai}, which contains motion capture frames capturing 3D motion from 10 dance genres. Predicting dance motion is more challenging for PGDM compared to locomotion, as the variety of dance genres and styles leads to a rich set of patterns with less periodic progressions over time. The AIST++ dataset captures the skeleton with 3D pose data for 17 keypoints, which represent specific joints or locations in the body. For consistency across heights, we normalize all data to the scale [0, 100]. We predict $H=5$ steps into the future based on a past sequence of $T=3$ steps. We create multiple forecasting sequences from each motion capture video, resulting between $36,739$ and $105,504$ sequences across the 10 genres.}

\subsection{Results}\label{sec:results}

\textbf{Pattern Extraction.} We extract $p$ = 13 archetypal patterns from UWHVF, shown in Figure~\ref{fig:uwhvf_aa}. \neww{We extract between $p$ = 12 and $p$ = 22 archetypes for each genre of AIST++. Figure~\ref{fig:break_aa} shows the archetypes extracted from the break dancing frames (see Appendix~\ref{app:all_archs} for remaining genres). Appendix~\ref{app:components} reports the reconstruction error of the extracted patterns and the guidance function error.}

\textbf{AAUQ approximately lower bounds the guidance function error.} Motivated by Theorem~\ref{thm:bound}, PGDM uses AAUQ to determine the appropriate level of guidance.
In Figure~\ref{fig:aauq_loss},  we compare AAUQ measurements to the MAE of the guidance function. In both applications, we observe that AAUQ is indeed proportional to a linear lower bound on the guidance function error.

\begin{figure}[!t]
  \centering
\includegraphics[width=0.8\linewidth]{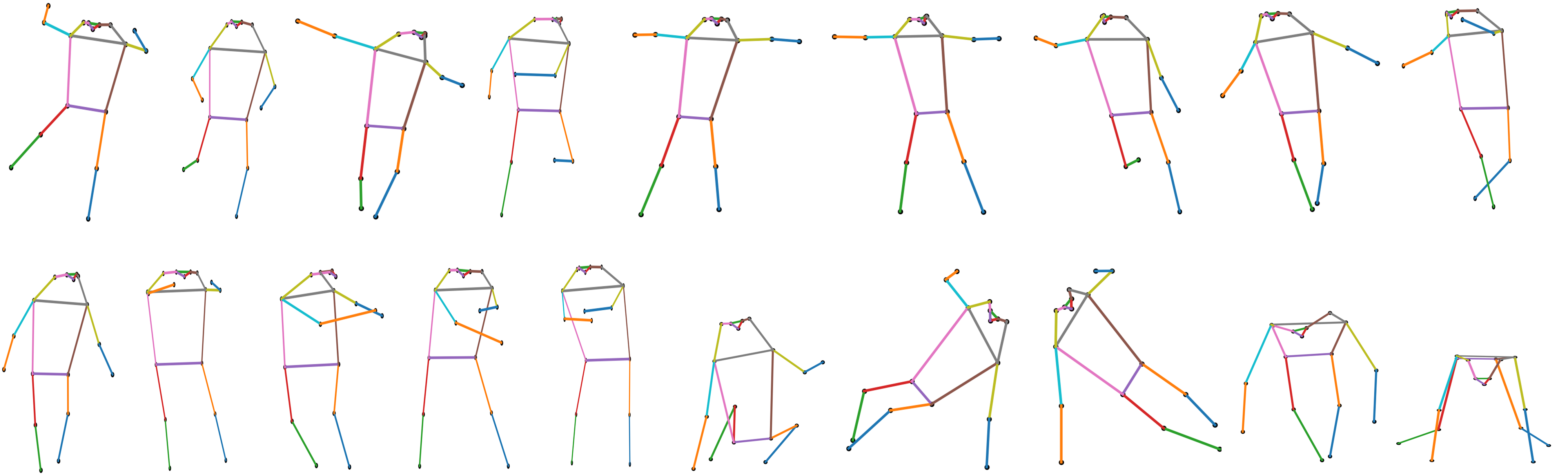}
  \caption{\small Nineteen archetypes extracted from AIST++ break dancing frames.}
  \label{fig:break_aa}
\end{figure}

\begin{figure*}[!t]
    \centering
    \begin{subfigure}[b]{0.20\textwidth}
        \includegraphics[width=\linewidth]{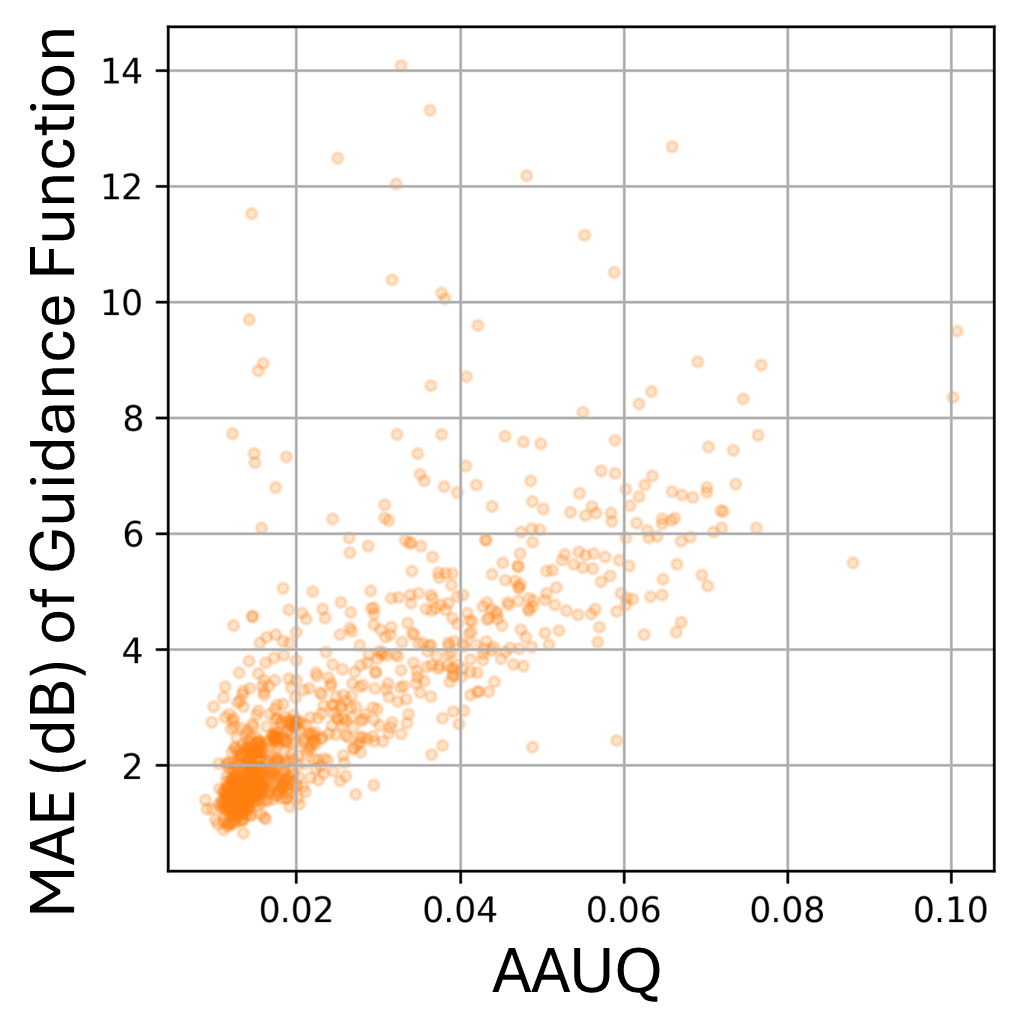}
        \caption{VF prediction.}
    \end{subfigure}
    \hspace{0.02\textwidth}
    \begin{subfigure}[b]{0.76\textwidth}
        \includegraphics[width=\linewidth]{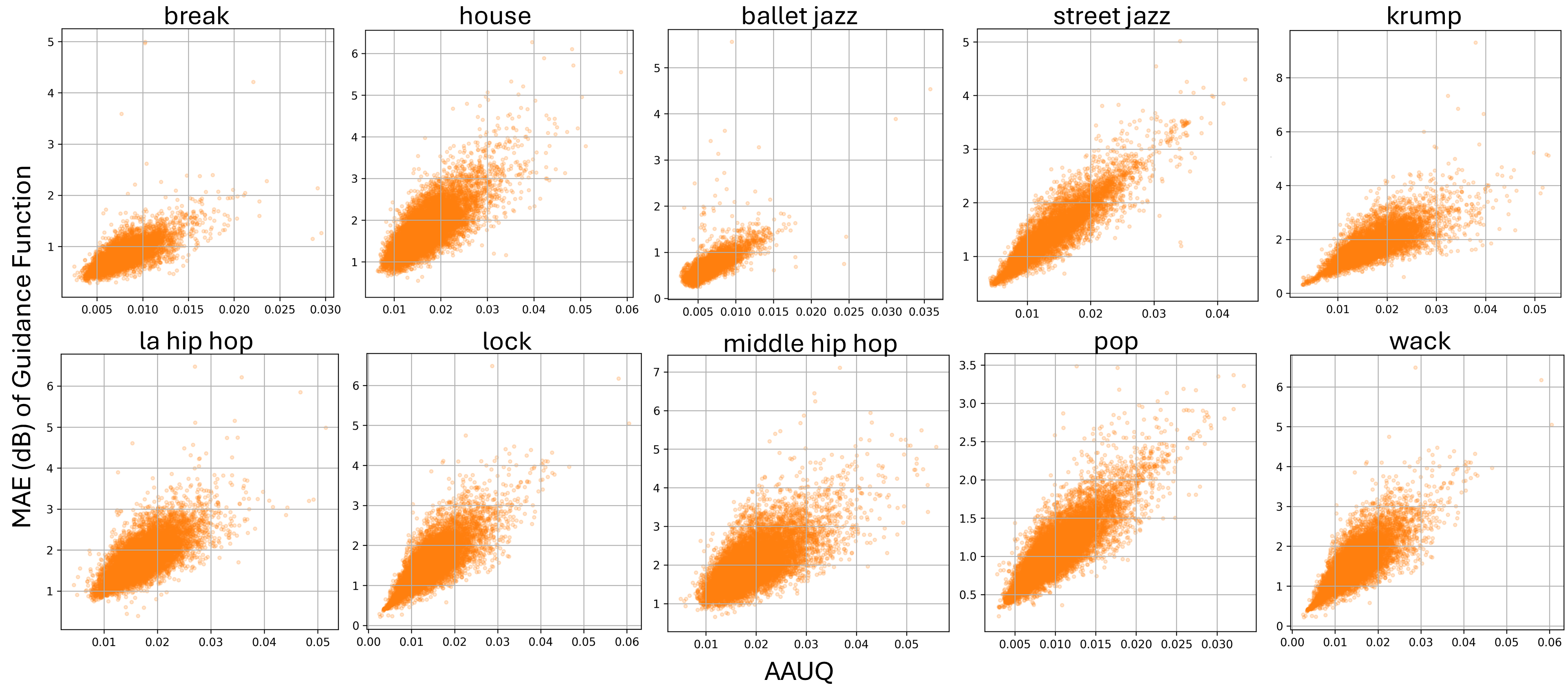}
        \caption{Motion prediction.}
    \end{subfigure}
    \caption{\small AAUQ approximately lower bounds the guidance function error for both applications.\label{fig:aauq_loss}}
\end{figure*}

\neww{\textbf{Pattern guidance improves predictions.} Tables~\ref{tab:diff_mae} \rev{and~\ref{tab:diff_crps}} report the MAE and CRPS of PGDM$_{\rm MAE}$ and PGDM$_{\rm GDE}$ with and without pattern guidance. Compared to unguided predictions, pattern guidance reduces prediction error and CRPS significantly. In particular, on UWHVF, guidance reduces the MAE / CRPS of PGDM$_{\rm GDE}$ by up to 40.67\% / 56.26\%. On AIST++, guidance achieves a reduction of up to 14.12\% / 14.10\%. To further study the impact of pattern guidance, we also show in Table~\ref{tab:uwhvf_model} the MAE of PGDM$_{\rm MAE}$ and PGDM$_{\rm GDE}$ on UWHVF with $\overline{w}=1,2,3,4,5$. Similar results on AIST++ and with CRPS, as well as qualitative examples, are shown in Appendix~\ref{app:more_pgdm}. In general, the standard deviation of MAE decreases as the guidance scale increases towards the optimal $\overline{w}$ choice, indicating that guidance improves both prediction consistency and quality. Notably, we also find that excessive pattern guidance may lead to diminishing returns. In practice, an appropriate $\overline{w}$ may be selected by a hyperparameter search.}

\begin{table}[t]
\caption{
Mean absolute error of PGDM$_{\rm MAE}$, PGDM$_{\rm GDE}$, and baselines for both the visual field prediction (UWHVF dataset) and the human motion prediction (10 dance genres from AIST++ dataset) applications. For PGDM$_{\rm MAE}$ and PGDM$_{\rm GDE}$ with $\overline{w}>0$, we show the guidance and mixing scale result that achieved the lowest error (for $\overline{w}$ and $\overline{w^*}$ choices, see Appendix~\ref{app:implementation}).
Mean and standard deviation are taken across five samples.\label{tab:diff_mae}}
\centering
    \small
    \begin{tabular}{c}
     \begin{tabular}{|cccccccc|}
     \hline
     \multicolumn{8}{|c|}{\rule{0pt}{2ex}\textbf{Visual Field Prediction}}\\
     \hline
     \multirow{2}{*}{\rule{0pt}{2ex}\textbf{GenViT}} & \multirow{2}{*}{\textbf{TimeGrad}} & \multirow{2}{*}{\textbf{CSDI}} & \multirow{2}{*}{\rev{\textbf{ARMD}}} & 
     \rule{0pt}{2ex}\textbf{PGDM$_{\rm GDE}$} & \textbf{PGDM$_{\rm GDE}$} & \textbf{PGDM$_{\rm MAE}$} & \textbf{PGDM$_{\rm MAE}$} \\
     \rule{0pt}{2ex}& & & & $\bm{\overline{w}=0}$ & $\bm{\overline{w}>0}$ & $\bm{\overline{w}=0}$ & $\bm{\overline{w}>0}$ \\
     \hline
     \rule{0pt}{2ex}8.61\std{0.0018} & 4.19\std{0.0327} & 3.19\std{0.0421} & 3.91\std{0.0083} & 5.20\std{0.0407} & 3.08\std{0.0153} & 3.75\std{0.0437} & \best{2.96\std{0.0117}} \\
     \hline
     \end{tabular}
     \\
     \\
     \begin{tabular}{cccccccc|}
     \hline
     \multicolumn{8}{|c|}{\rule{0pt}{2ex}\textbf{Human Motion Prediction}}\\
     \hline
     \multicolumn{1}{|c|}{\multirow{2}{*}{\rule{0pt}{2ex}\textbf{Genre}}} & \multirow{2}{*}{\textbf{TimeGrad}} & \multirow{2}{*}{\textbf{CSDI}} & \multirow{2}{*}{\rev{\textbf{ARMD}}} & 
     \rule{0pt}{2ex}\textbf{PGDM$_{\rm GDE}$} & \textbf{\rev{PGDM$_{\rm GDE}$}} & \textbf{PGDM$_{\rm MAE}$} & \textbf{\rev{PGDM$_{\rm MAE}$}} \\
      \multicolumn{1}{|c|}{\rule{0pt}{2ex}}& & & & $\bm{\overline{w}=0}$ & \rev{$\bm{\overline{w}>0}$} & $\bm{\overline{w}=0}$ & \rev{$\bm{\overline{w}>0}$} \\
     \hline
     \multicolumn{1}{|c|}{\rule{0pt}{2ex}Break} & 2.10\std{0.0064} & 0.47\std{0.0010} & 4.26\std{0.0026} & 0.52\std{0.0032} & 0.45\std{0.0015} & 0.41\std{0.0013} & \best{0.38\std{0.0009}} \\
     \hline
     \multicolumn{1}{|c|}{\rule{0pt}{2ex}House} & 3.71\std{0.0159} & 1.02\std{0.0045} & 9.11\std{0.0035} & 0.90\std{0.0013} & 0.82\std{0.0014} & 0.79\std{0.0017} & \best{0.74\std{0.0014}} \\
     \hline
     \multicolumn{1}{|c|}{\rule{0pt}{2ex}Ballet Jazz} & 1.32\std{0.0098} & 0.55\std{0.0054} & 3.38\std{0.0011} & 0.49\std{0.0004} & 0.42\std{0.0009} & 0.42\std{0.0008} & \best{0.38\std{0.0002}} \\
     \hline
     \multicolumn{1}{|c|}{\rule{0pt}{2ex}Street Jazz} & 1.65\std{0.0102} & 0.56\std{0.0054} & 7.57\std{0.0049} & 0.60\std{0.0016} & 0.54\std{0.0005} & 0.52\std{0.0017} & \best{0.48\std{0.0008}} \\
     \hline
     \multicolumn{1}{|c|}{\rule{0pt}{2ex}Krump} & 2.37\std{0.0067} & 0.77\std{0.0017} & 8.90\std{0.0046} & 0.88\std{0.0016} & 0.79\std{0.0005} & 0.77\std{0.0016} & \best{0.70\std{0.0013}} \\
     \hline
     \multicolumn{1}{|c|}{\rule{0pt}{2ex}LA Hip Hop} & 3.30\std{0.0157} & 0.78\std{0.0023} & 8.51\std{0.0032} & 0.90\std{0.0009} & 0.81\std{0.0009} & 0.80\std{0.0010} & \best{0.74\std{0.0006}} \\
     \hline
     \multicolumn{1}{|c|}{\rule{0pt}{2ex}Lock} & 3.03\std{0.0086} & 0.76\std{0.0028} & 7.65\std{0.0031} & 0.78\std{0.0017} & 0.71\std{0.0023} & 0.72\std{0.0011} & \best{0.67\std{0.0006}} \\
     \hline
     \multicolumn{1}{|c|}{\rule{0pt}{2ex}Mid. Hip Hop} & 3.35\std{0.0113} & 1.04\std{0.0048} & 8.98\std{0.0051} & 1.05\std{0.0034} & 0.95\std{0.0038} & 0.88\std{0.0014} & \best{0.82\std{0.0009}} \\
     \hline
     \multicolumn{1}{|c|}{\rule{0pt}{2ex}Pop} & 2.55\std{0.0105} & 0.70\std{0.0053} & 6.43\std{0.0019} & 0.52\std{0.0013} & 0.48\std{0.0018} & 0.47\std{0.0008} & \best{0.44\std{0.0018}} \\
     \hline
     \multicolumn{1}{|c|}{\rule{0pt}{2ex}Wack} & 1.03\std{0.0047} & 0.44\std{0.0042} & 3.39\std{0.0026} & 0.49\std{0.0054} & 0.45\std{0.0079} & 0.44\std{0.0044} & \best{0.39\std{0.0028}} \\
     \hline
     \end{tabular}
     \end{tabular}
\end{table}
\begin{table}[t]
\caption{\rev{
CRPS$_{\rm SUM}$ (lower is better) of PGDM$_{\rm MAE}$, PGDM$_{\rm GDE}$, and baselines for both the visual field prediction (UWHVF dataset) and the human motion prediction (10 dance genres from AIST++ dataset) applications. For PGDM$_{\rm MAE}$ and PGDM$_{\rm GDE}$ with $\overline{w}>0$, we show the guidance and mixing scale result that achieved the lowest error (for $\overline{w}$ and $\overline{w^*}$ choices, see Appendix~\ref{app:implementation}).
Mean and standard deviation are taken across three seeds.\label{tab:diff_crps}}}
\centering
    \small
    \begin{tabular}{c}
     \begin{tabular}{|cccccccc|}
     \hline
     \multicolumn{8}{|c|}{\rule{0pt}{2ex}\textbf{Visual Field Prediction}}\\
     \hline
     \multirow{2}{*}{\rule{0pt}{2ex}\textbf{GenViT}} & \multirow{2}{*}{\textbf{TimeGrad}} & \multirow{2}{*}{\textbf{CSDI}} & \multirow{2}{*}{\textbf{ARMD}} & \rule{0pt}{2ex}\textbf{PGDM$_{\rm GDE}$} & \textbf{PGDM$_{\rm GDE}$} & \textbf{PGDM$_{\rm MAE}$} & \textbf{PGDM$_{\rm MAE}$} \\
     \rule{0pt}{2ex}& & & & $\bm{\overline{w}=0}$ & $\bm{\overline{w}>0}$ & $\bm{\overline{w}=0}$ & $\bm{\overline{w}>0}$ \\
     \hline
     \rule{0pt}{2ex} 5.119\std{0.0004} & 0.751\std{0.0020} & \best{0.658\std{0.0025}} & 0.918\std{0.0029} & 1.887\std{0.0119} & 0.825\std{0.0032} & 0.794\std{0.0122} & 0.777\std{0.0040}\\
     \hline
     \end{tabular}
     \\
     \\
     \begin{tabular}{cccccccc|}
     \hline
     \multicolumn{8}{|c|}{\rule{0pt}{2ex}\textbf{Human Motion Prediction}}\\
     \hline
     \multicolumn{1}{|c|}{\multirow{2}{*}{\rule{0pt}{2ex}\textbf{Genre}}} & \multirow{2}{*}{\textbf{TimeGrad}} & \multirow{2}{*}{\textbf{CSDI}} & \multirow{2}{*}{\textbf{ARMD}} & 
     \rule{0pt}{2ex}\textbf{PGDM$_{\rm GDE}$} & \textbf{PGDM$_{\rm GDE}$} & \textbf{PGDM$_{\rm MAE}$} & \textbf{PGDM$_{\rm MAE}$} \\
      \multicolumn{1}{|c|}{\rule{0pt}{2ex}}& & & & $\bm{\overline{w}=0}$ & $\bm{\overline{w}>0}$ & $\bm{\overline{w}=0}$ & $\bm{\overline{w}>0}$ \\
     \hline
     \multicolumn{1}{|c|}{\rule{0pt}{2ex}Break} & 0.189\std{0.0010} & 0.051\std{0.0001} & 0.490\std{0.0006} & 0.049\std{0.0001} & 0.043\std{0.0001} & 0.039\std{0.0001} & \best{0.037\std{<0.0001}}\\
     \hline
     \multicolumn{1}{|c|}{\rule{0pt}{2ex}House} & 0.340\std{0.0011} & 0.106\std{0.0002} & 0.679\std{0.0008} & 0.085\std{0.0005} & 0.079\std{0.0003} & 0.077\std{0.0004} & \best{0.073\std{0.0002}} \\
     \hline
     \multicolumn{1}{|c|}{\rule{0pt}{2ex}Ballet Jazz} & 0.117\std{0.0005} & 0.053\std{0.0002} & 0.346\std{0.0005} & 0.044\std{0.0002} & 0.039\std{0.0002} & 0.039\std{0.0001} & \best{0.035\std{0.0002}} \\
     \hline
     \multicolumn{1}{|c|}{\rule{0pt}{2ex}Street Jazz} & 0.184\std{0.0002} & 0.058\std{0.0003} & 0.448\std{0.0004} & 0.055\std{0.0002} & 0.052\std{0.0001} & 0.050\std{0.0002} & \best{0.047\std{0.0002}} \\
     \hline
     \multicolumn{1}{|c|}{\rule{0pt}{2ex}Krump} & 0.298\std{0.0010} & 0.093\std{0.0003} & 0.911\std{0.0003} & 0.091\std{0.0004} & 0.078\std{0.0003} & 0.079\std{0.0003} & \best{0.071\std{0.0003}} \\
     \hline
     \multicolumn{1}{|c|}{\rule{0pt}{2ex}LA Hip Hop} & 0.331\std{0.0004} & 0.088\std{0.0002} & 0.752\std{0.0014} & 0.083\std{0.0007} & 0.077\std{0.0005} & 0.075\std{0.0006} & \best{0.071\std{0.0005}} \\
     \hline
     \multicolumn{1}{|c|}{\rule{0pt}{2ex}Lock}& 0.367\std{0.0004} & 0.080\std{0.0002} & 0.652\std{0.0014} & 0.073\std{0.0001} & 0.066\std{0.0001} & 0.068\std{0.0002} & \best{0.063\std{0.0001}}\\
     \hline
     \multicolumn{1}{|c|}{\rule{0pt}{2ex}Mid. Hip Hop} & 0.368\std{0.0018} & 0.109\std{0.0002} & 0.832\std{0.0005} & 0.097\std{0.0003} & 0.089\std{0.0003} & 0.085\std{0.0002} & \best{0.081\std{0.0003}} \\
     \hline
     \multicolumn{1}{|c|}{\rule{0pt}{2ex}Pop} & 0.252\std{0.0004} & 0.076\std{0.0005} & 0.460\std{0.0001} & 0.051\std{0.0002} & 0.047\std{0.0002} & 0.046\std{0.0002} & \best{0.045\std{0.0002}} \\
     \hline
     \multicolumn{1}{|c|}{\rule{0pt}{2ex}Wack} & 0.112\std{0.0005} & 0.051\std{0.0002} & 0.400\std{0.0012} & 0.049\std{0.0002} & 0.044\std{0.0002} & 0.043\std{0.0002} & \best{0.039\std{0.0003}} \\
     \hline
     \end{tabular}
     \end{tabular}
\end{table}
\begin{table}[!t]
\caption{
Mean absolute error (MAE) of PGDM$_{\rm MAE}$ and PGDM$_{\rm GDE}$ on the VF prediction application (UWHVF dataset) with varying levels of guidance. Percent improvements over baselines are shown in the $\Delta$ MAE (\%) columns. Mean and standard deviation are taken across five samples. \label{tab:uwhvf_model}}
\centering
    \small
     \begin{tabular}{cc|cccccc|}
     \hline
     \multicolumn{2}{|c|}{\rule{0pt}{2ex}\multirow{2}{*}{\textbf{Model}}} & \multirow{2}{*}{\textbf{MAE (dB)}} & \textbf{$\Delta$ MAE (\%)} & \textbf{$\Delta$ MAE (\%)} & \textbf{$\Delta$ MAE (\%)} & \rev{\textbf{$\Delta$ MAE (\%)}} & \textbf{$\Delta$ MAE (\%)} \\
     \multicolumn{2}{|c|}{} & & \textbf{vs. GenViT} & \textbf{vs. TimeGrad} & \textbf{vs. CSDI} & \rev{\textbf{vs. ARMD}} & \textbf{vs.} $\bm{\overline{w}=0}$\\
     \hline
     \multicolumn{1}{|c|}{\rule{0pt}{2ex}} & $\overline{w}=0$ & 3.75\std{0.0437} & 56.48\std{0.50} & 10.69\std{1.17} & -17.32\std{1.55} & 4.29\std{1.21} & -\\
     \cline{2-8}
     \multicolumn{1}{|c|}{\rule{0pt}{2ex}} & $\overline{w}=1$ & \best{2.96\std{0.0117}} & \best{65.58\std{0.14}} & \best{29.36\std{0.67}} & \best{7.20\std{1.25}} & \best{24.30\std{0.31}} & 20.90\std{0.71}\\
     \cline{2-8}
     \multicolumn{1}{|c|}{\rule{0pt}{2ex}PGDM$_{\rm MAE}$} & $\overline{w}=2$ & 2.97\std{0.0112} & 65.54\std{0.13} & 29.29\std{0.67} & 7.10\std{1.27} & 24.22\std{0.29} & 20.81\std{0.73}\\
     \cline{2-8}
     \multicolumn{1}{|c|}{\rule{0pt}{2ex}} & $\overline{w}=3$ & 2.97\std{0.0108} & 65.51\std{0.13} & 29.23\std{0.67} & 7.03\std{1.29} & 24.15\std{0.26} & 20.75\std{0.76}\\
     \cline{2-8}
     \multicolumn{1}{|c|}{\rule{0pt}{2ex}} & $\overline{w}=4$ & 2.97\std{0.0107} & 65.48\std{0.13} & 29.17\std{0.67} & 6.95\std{1.31} & 24.09\std{0.09} & 20.68\std{0.78}\\
     \cline{2-8}
     \multicolumn{1}{|c|}{\rule{0pt}{2ex}} & $\overline{w}=5$ & 2.97\std{0.0104} & 65.45\std{0.12} & 29.10\std{0.66} & 6.86\std{1.32} & 24.02\std{0.02} & 20.60\std{0.80}\\
     \hline
     \hline
     \multicolumn{1}{|c|}{\rule{0pt}{2ex}} & $\overline{w}=0$ & 5.20\std{0.0407} & 39.60\std{0.47} & -23.95\std{1.10} & -62.83\std{2.59} & -32.84\std{1.05} & -\\
     \cline{2-8}
     \multicolumn{1}{|c|}{\rule{0pt}{2ex}} & $\overline{w}=1$ & 3.16\std{0.0212} & 63.28\std{0.24} & 24.65\std{0.76} & 1.01\std{1.44} & 19.25\std{0.58} & 39.21\std{0.23}\\
     \cline{2-8}
     \multicolumn{1}{|c|}{\rule{0pt}{2ex}PGDM$_{\rm GDE}$} & $\overline{w}=2$ & 3.13\std{0.0201} & 63.67\std{0.23} & 25.44\std{0.76} & 2.06\std{1.43} & 20.10\std{0.55} & 39.85\std{0.23}\\
     \cline{2-8}
     \multicolumn{1}{|c|}{\rule{0pt}{2ex}} & $\overline{w}=3$ & 3.10\std{0.0187} & 63.93\std{0.21} & 25.99\std{0.74} & 2.77\std{1.41} & 20.68\std{0.51} & 40.29\std{0.24}\\
     \cline{2-8}
     \multicolumn{1}{|c|}{\rule{0pt}{2ex}} & $\overline{w}=4$ & 3.09\std{0.0170} & 64.09\std{0.20} & 26.32\std{0.72} & 3.20\std{1.39} & 21.03\std{0.47} & 40.55\std{0.25}\\
     \cline{2-8}
     \multicolumn{1}{|c|}{\rule{0pt}{2ex}} & $\overline{w}=5$ & 3.08\std{0.0153} & 64.16\std{0.18} & 26.47\std{0.70} & 3.40\std{1.39} & 21.19\std{0.43} & \best{40.67\std{0.27}}\\
     \hline
     \end{tabular}
\end{table}

\neww{\textbf{PGDM outperforms baselines.} Tables~\ref{tab:diff_mae} \rev{and~\ref{tab:diff_crps}} also compare the performance of PGDM$_{\rm MAE}$ and PGDM$_{\rm GDE}$ to our baselines. Across the board, PGDM$_{\rm MAE}$ with pattern guidance achieves significantly lower MAE than baselines. On UWHVF, PGDM$_{\rm MAE}$ with guidance surpasses GenViT, TimeGrad, CSDI, \rev{and ARMD} by up to 65.68\%, 29.36\%, 7.20\%, \rev{and 24.30\%}, respectively. On AIST++, PGDM$_{\rm MAE}$ surpasses TimeGrad, CSDI, \rev{and ARMD} by up to 82.60\%, 36.71\%, \rev{93.64\%}. \rev{In terms of CRPS, on AIST++, PGDM$_{\rm MAE}$ with guidance outperforms TimeGrad, CSDI, and ARMD by up to 82.87\%, 40.86\%, and 92.55\%, respectively. On UWHVF, PGDM outperforms GenViT and ARMD by up to 84.83\% and 15.38\%, but does not outperform TimeGrad and CSDI. We note that all models achieve significantly higher CRPS on UWHVF than AIST++, indicating that this small dataset provides limited signal for learning well-calibrated predictive distributions. We emphasize that PGDM achieves the lowest MAE on UWHVF, demonstrating that explicit pattern modeling improves point prediction even when the distribution itself is challenging to estimate.}}

\textbf{\rev{PGDM can outperform baselines with guidance alone.}} We note that PGDM$_{\rm MAE}$ achieves lower MAE than baselines even without guidance ($\overline{w} = 0$) on some AIST++ genres (e.g., break dancing). In these cases, to better demonstrate that PGDM's performance comes from pattern guidance, rather than model training or architecture design alone, we also emphasize the results for PGDM$_{\rm GDE}$ in Tables~\ref{tab:diff_mae} and~\ref{tab:diff_crps}. Even when unguided PGDM$_{\rm GDE}$ performs worse than baselines, pattern guidance almost always reduces the error of PGDM$_{\rm GDE}$ to a lower or competitive level compared to baselines. These results demonstrate that pattern guidance is essential for higher quality predictions.

\subsection{Ablations}
\rev{For all ablations, we report full results with additional discussion in Appendix~\ref{app:ablations}.}

\rev{\textbf{Impact of pattern mixing.} We study the impact of pattern mixing on the MAE and CRPS$_{\rm SUM}$ of PGDM by varying the maximum mixing scale $\overline{w^*}$ from 0.0 to 1.0 while keeping the maximum guidance scale $\overline{w}$ fixed. On UWHVF, we find that a strong mixing signal substantially improves performance. In contrast, on AIST++, a weaker mixing scale is preferable, and an excessive mixing scale can degrade performance. This indicates that the pattern prediction model itself makes more accurate predictions on the small UWHVF dataset, even without the downstream denoiser. This suggests that explicit pattern modeling is particularly beneficial in low-data regimes.}

\rev{\textbf{Impact of dynamic scaling.} We additionally perform ablations to demonstrate the benefits of dynamic scaling. We compare the MAE and CRPS$_{\rm SUM}$ of PGDM with a dynamic scale to that with a constant scale, holding $\overline{w}$ and $\overline{w^*}$ fixed. We find that dynamic scaling consistently performs comparably or better than constant scaling. Dynamic scaling likely yields the greatest improvements over constant scaling when novel patterns are seen inference time.}

\rev{\textbf{Impact of pattern prediction.} Finally, we compare PGDM with and without its pattern prediction model $f_A$. PGDM \textit{without} pattern prediction inherently follows the assumption that patterns remain constant over time. This ablation therefore serves as a proxy for Diff-MGR~\citep{zhao2024diff}, which follows this same assumption but cannot be evaluated directly due to unavailable code and implementation details. PGDM with pattern prediction consistently outperforms PGDM without pattern prediction, most significantly when patterns change rapidly over time. This result highlights the importance of the pattern prediction model in accounting for the highly dynamic patterns that commonly appear in realistic temporal data.}

\section{Conclusions}\label{sec:conclusions}
In this paper, we proposed Pattern-Guided Diffusion Models (PGDM), which leverage inherent archetypal patterns to forecast future steps from multivariate time series data. PGDM is guided by a pattern guidance function that predicts future patterns within the data. To estimate the trustworthiness of this guidance function, we introduced a novel uncertainty quantification metric that approximately lower bounds the guidance function error. Finally, we proposed to dynamically tune the level to which PGDM follows the pattern guidance based on this uncertainty metric. We found that PGDM outperforms baseline models, and pattern guidance improves the prediction quality of PGDM. \neww{Two limitations of PGDM present interesting avenues for future work. First, PGDM has less benefit for out-of-distribution data exhibiting unseen patterns. Second, the use of AAUQ as an approximate lower-bound for guidance function error assumes that the temporal data is relatively continuous and does not rapidly change between the observed history and target prediction window. In some cases, this assumption is violated (e.g., rapidly changing signals sampled with a low frequency). Based on these limitations, PGDM may be further improved by updating the set of extracted patterns at inference time and accounting for signal dynamics when calculating the guidance scale.}

\section*{Acknowledgements}
This work was supported in part by ARO MURI W911NF-20-1-0080 and a gift from AWS AI to Penn Engineering’s ASSET Center for Trustworthy AI. Any opinions, findings, conclusions or recommendations expressed in this material are those of the authors and do not necessarily reflect the views the Army Research Office (ARO), the Department of Defense, or the United States Government. We thank Dr. Lama Al-Aswad for introducing us to the VF prediction problem.

\bibliographystyle{plainnat}
\bibliography{references}

\appendix

\section{Proof of Theorem~\ref{thm:bound}}\label{app:bound_pf}
For convenience, let $X = x_{1:T}$ and $Y = x_{T:T'}$. Then we have
\begin{align*}
    L_{f_G}(X) &= \left\Vert Y - A f_A \left(c_A(X)\right)\right\Vert\\ 
    &= \left\Vert Y - A f_A \left(c_A(X)\right) + Ac_A(Y) - Ac_A(Y) \right\Vert\\ 
    &\geq \left\Vert Y - Ac_A(Y) \right\Vert - \left\Vert A f_A\left(c_A(X)\right) - Ac_A(Y) \right\Vert\\ 
    &= L_{c_A}(Y) - L_{f_A}(X).
\end{align*}
The inequality above follows from the reverse triangle inequality.
\section{Proof of Theorem~\ref{thm:aauq_dist}}\label{app:aauq_geometry_pf}
First, observe that $u_A(x_t)$ is the distance between $x_t$ and the set ${\rm Conv}A$. This can be seen by noting that
\begin{equation}
    c_A(x_t) = {\rm arg}\min_{c} \left\Vert x_t - {\textstyle Ac} \right\Vert = {\rm arg}\min_{\bar{x} \in {\rm Conv} A} \left\Vert x_t - \bar{x} \right\Vert,
\end{equation}
where $c\in\mathbb{R}^p$ has positive elements that sum to one. Let $\hat{x}=Ac_A(x_t)$. Recall from Definition~\ref{def:aauq} that $u_A(x_t) = \Vert x_t - \hat{x} \Vert$.

Similarly, ${\rm dist}(x_t, {\rm Conv}D)$ is defined as
\begin{equation*}
   {\rm dist}(x_t, {\rm Conv}D) = \min_{\bar{d} \in {\rm Conv} D} \left\Vert x_t - \bar{d} \right\Vert.
\end{equation*}
Let $d$ be such that ${\rm dist}(x_t, {\rm Conv}D) = \left\Vert x_t - d \right\Vert$.

The remainder of the proof follows from a straightforward application of the reverse triangle inequality:
\begin{align*}
    \left\Vert \hat{x}-d \right\Vert &= \left\Vert \hat{x}-d+x_t-x_t \right\Vert\\ 
    &\geq \left| \left\Vert x_t-d \right\Vert - \left\Vert x_t-\hat{x} \right\Vert \right|. 
\end{align*}
Then, we have
\begin{align*}
    -\left\Vert \hat{x}-d \right\Vert \leq \left\Vert x_t-d \right\Vert - \left\Vert x_t-\hat{x} \right\Vert \leq \left\Vert \hat{x}-d \right\Vert.
\end{align*}
With some rearranging, we arrive at Equation~\eqref{eq:aauq} by letting $\delta = \Vert \hat{x} - d \Vert$.

Now note that if $p=n$, then selecting $A=D$ minimizes the archetypal analysis objective~\eqref{eq:rss}~\citep[Proposition~1]{cutler1994archetypal} with RSS of 0, and $A$ fully expresses $D$. Then ${\rm Conv} A = {\rm Conv} D$ and $\hat{x}=d$. Finally, $\delta=0$. We briefly remark that $\delta$ therefore captures the expressiveness of the archetypes. 
\section{Pattern Mixing}\label{app:pattern_mixing}
In Lines~\ref{line:mix_scale} and \ref{line:mix} of Algorithm~\ref{alg:inference}, we include an additional pattern mixing step in our sequence prediction process. The final PGDM prediction is a linear combination of the raw pattern prediction from the guidance function and the the pattern-guided output of the diffusion model. We include this step to overcome some of the practical challenges of PGDM. In practice, we find that PGDM's capacity for pattern guidance is highly dependent on appropriate architecture design. We therefore include pattern mixing as an additional step to overcome this challenge.

While pattern mixing improves prediction quality, pattern guidance is still necessary. Figure~\ref{fig:mixing} illustrates the impact of pattern guidance and pattern mixing. Without guidance, the model may make highly varied predictions that are far from the groundtruth. With pattern guidance, PGDM narrows the distribution of predictions and shifts it towards the ground truth. Pattern mixing further shifts the distribution, without affecting sample diversity. Our results demonstrate exactly this. The unguided PGDM prediction has higher error and variance. With guidance and mixing, the error and standard deviation are significantly reduced, demonstrating that both pattern guidance and pattern mixing aid in improving predictions.

\begin{figure}[!h] 
    \centering
    \includegraphics[width=0.9\columnwidth]{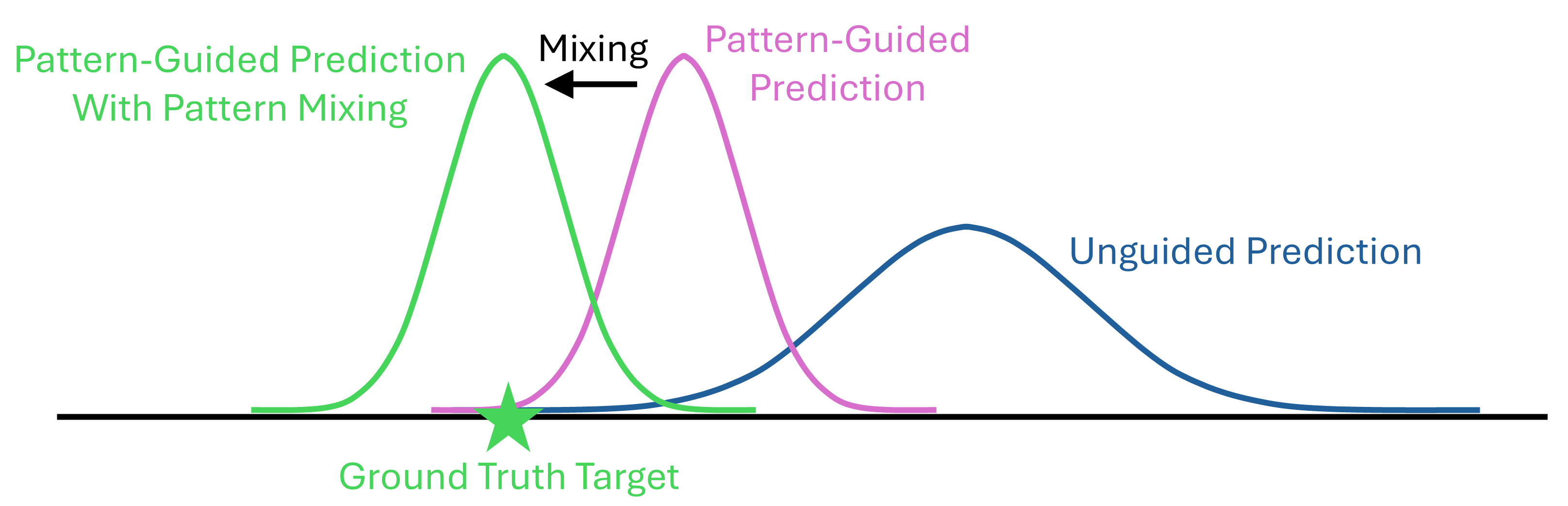}
    \caption{Impacts of pattern guidance and pattern mixing.
    \label{fig:mixing}}
\end{figure}
\section{Implementation Details}\label{app:implementation}

For both case studies, we split our data into 70\% training, 15\% validation, and 15\% test sets. For the motion capture data, we remove rotations around the vertical axis and supply the isolated rotation angles as additional inputs to PGDM and the baseline models. This normalizes the direction in which the motion capture skeletons are facing, allowing for more straightforward pattern extraction.

To train our guidance function, we first extract archetypal patterns from the most recent example $x_T$ of each sequence in the training set. By extracting archetypes from only a single point in each sequence, we avoid leakage between the training, validation, and test sets. We select the number of archetypes $p$ by a hyperparameter search through $p=2,\dots,25$ with selection criterion following~\citet{elze2015patterns}. We then train our pattern prediction model (see Figure~\ref{fig:arch} for architecture) to predict the pattern representation of each sequence. We train the model with the Adam optimizer on a KL-divergence loss function with the hyperparameters shown in Table~\ref{tab:hpams} and patience 20. These hyperparameters were selected over a search of batch size 32 to 64 and learning rate $10^{-4}$ to $5 \times 10^{-4}$, with mean absolute error (MAE) as selection criterion.

\begin{figure}[!t] 
    \centering
    \includegraphics[width=\columnwidth]{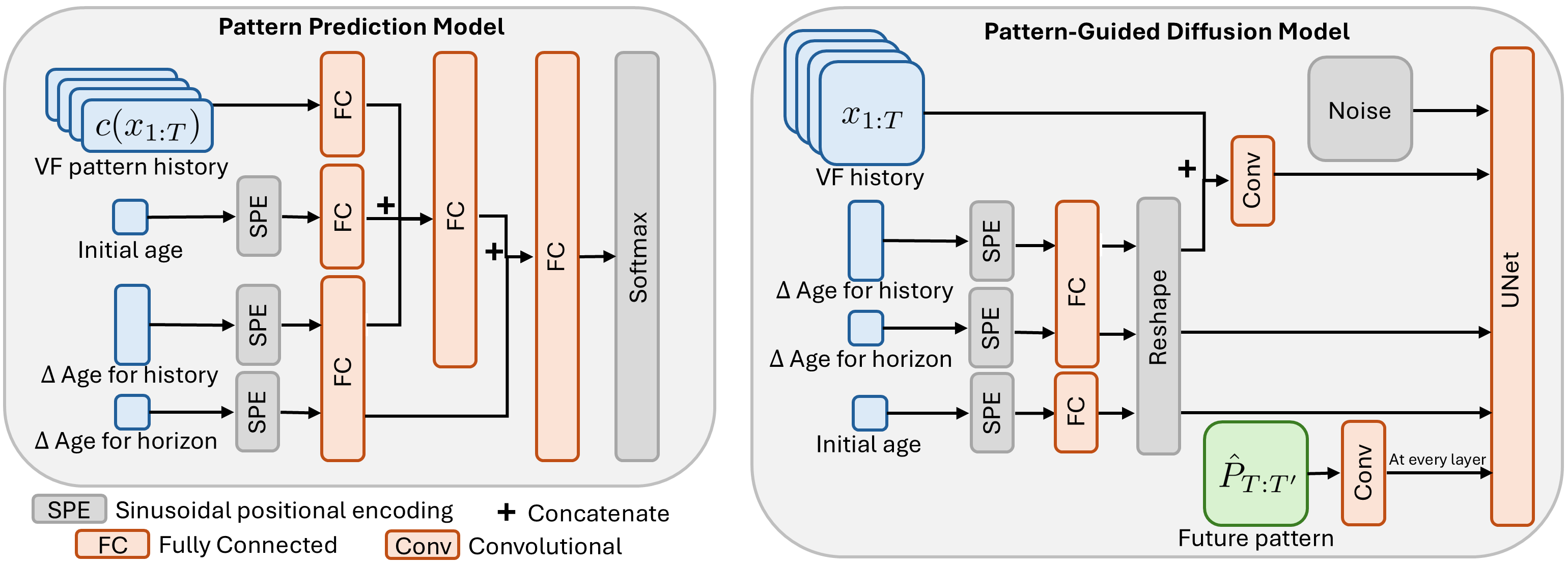}
    \caption{Pattern prediction model and pattern-guided diffusion model for visual field prediction.
    \label{fig:arch}}
\end{figure}

We train our diffusion model (see Figure~\ref{fig:arch} for architecture) with the Adam optimizer on a mean square error loss function with the hyperparameters shown in Table~\ref{tab:hpams}. For the VF prediction application, these hyperparameters were selected over a search of batch size 32 to 64, learning rate $10^{-5}$ to $5\times10^{-5}$, and 100 to 1000 epochs. For the motion prediction application, the model was trained with learning rate scheduling, and the hyperparameters were selected over a search of batch size 32 to 64, learning rate $5 \times 10^{-4}$ to $10^{-3}$ and 200 to 300 epochs. In both applications, for our selection criterion, we measure MAE with maximum guidance scale $\bar{w}=1,\dots,10$ and no pattern mixing, and we choose only from models with the highest capacity for pattern guidance (i.e., error continues to reduce with increasing $\bar{w}$). Of these, we select the models with lowest achievable MAE over the tested range of $\bar{w}$. To better evaluate the full effect of pattern guidance on model performance, we also select models with the highest impact of pattern guidance over the tested range of $\bar{w}$ (e.g., the greatest achievable percent decrease in error from applying guidance). For all PGDM models, we train with conditioning dropout probability $p_{\rm drop} = 0.2$. We evaluate with maximum tolerable uncertainty $\gamma$ = 0.1, 0.03, 0.06, 0.04, 0.04, 0.05, 0.05, 0.06, 0.06, 0.03, and 0.05 for UWHVF, break, house, ballet jazz, street jazz, krump, LA hip hop, lock, middle hip hop, pop, and wack, respectively. These were chosen based on the range of uncertainties on the validation data. In Table~\ref{tab:wboth}, we report the choices of $\overline{w}$ and $\overline{w^*}$ that we use to generate Table~\ref{tab:diff_mae}.

\begin{table}[!t]
\caption{Hyperparameter choices for pattern prediction model and diffusion model. \label{tab:hpams}}
\centering
    \small
     \begin{tabular}{c|cc|ccc|ccc|}
     \cline{2-9}
     & \multicolumn{2}{|c|}{\rule{0pt}{2ex}\textbf{Pattern Prediction Model}} & \multicolumn{3}{|c|}{\textbf{PGDM$_{\rm \bm{MAE}}$}} & \multicolumn{3}{|c|}{\textbf{PGDM$_{\rm \bm{GDE}}$}} \\
     & \textbf{Batch Size} & \textbf{LR} & \textbf{Batch Size} & \textbf{LR} & \textbf{Epochs} & \textbf{Batch Size} & \textbf{LR} & \textbf{Epochs}\\
    \hline
    \hline
    \multicolumn{1}{|c|}{\textbf{UWHVF}} & 32 & $1 \times 10^{-4}$ & 32 & $5 \times 10^{-5}$ & 200 & 64 & $1 \times 10^-5$ & 100 \\
    \hline
    \hline
    \multicolumn{1}{|c|}{\textbf{Break}} & 32 & $5 \times 10^{-4}$ & 32 & $1 \times 10^{-3}$ & 300 & 64 & $5 \times 10^{-4}$ & 200\\
    \hline
    \multicolumn{1}{|c|}{\textbf{House}} & 64 & $5 \times 10^{-4}$ & 32 & $1 \times 10^{-3}$ & 300 & 32 & $5 \times 10^{-4}$ & 200\\
    \hline
    \multicolumn{1}{|c|}{\textbf{Ballet Jazz}} &  64 & $5 \times 10^{-4}$ & 32 & $1 \times 10^{-3}$ & 300 & 64 & $5 \times 10^{-4}$ & 300\\
    \hline
    \multicolumn{1}{|c|}{\textbf{Street Jazz}} &  64 & $5 \times 10^{-4}$ & 32 & $1 \times 10^{-3}$ & 300 & 64 & $1 \times 10^{-3}$ & 200\\
    \hline
    \multicolumn{1}{|c|}{\textbf{Krump}} &  64 & $5 \times 10^{-4}$ & 32 & $1 \times 10^{-3}$ & 300 & 32 & $5 \times 10^{-4}$ & 200\\
    \hline
    \multicolumn{1}{|c|}{\textbf{LA Hip Hop}} & 64 & $5 \times 10^{-4}$ & 32 & $1 \times 10^{-3}$ & 300 & 32 & $5 \times 10^{-4}$ & 200 \\
    \hline
    \multicolumn{1}{|c|}{\textbf{Lock}} &  64 & $5 \times 10^{-4}$ & 32 & $1 \times 10^{-3}$ & 300 & 32 & $1 \times 10^{-3}$ & 200\\
    \hline
    \multicolumn{1}{|c|}{\textbf{Middle Hip Hop}} &  64 & $5 \times 10^{-4}$ & 32 & $1 \times 10^{-3}$ & 300 & 64 & $5 \times 10^{-4}$ & 200\\
    \hline
    \multicolumn{1}{|c|}{\textbf{Pop}} &  32 & $5 \times 10^{-4}$ & 32 & $1 \times 10^{-3}$ & 300 & 32 & $5 \times 10^{-4}$ & 200\\
    \hline
    \multicolumn{1}{|c|}{\textbf{Wack}} &  32 & $5 \times 10^{-4}$  & 32 & $1 \times 10^{-3}$ & 300 & 32 & $5 \times 10^{-4}$ & 300 \\
    \hline
     \end{tabular}
\end{table}

\begin{table}[!t]
\caption{\rev{Selected values of $\overline{w}$ and $\overline{w^*}$ for Table~\ref{tab:diff_mae}.} \label{tab:wboth}}
\centering
    \small
     \begin{tabular}{c|cc|cc|}
     \cline{2-5}
     & \multicolumn{2}{|c|}{\rule{0pt}{2ex}\textbf{PGDM$_{\rm MAE}$}} & \multicolumn{2}{|c|}{\textbf{PGDM$_{\rm GDE}$}}\\
     \rule{0pt}{2ex} & \bm{$\overline{w}$} & \bm{$\overline{w^*}$} & \bm{$\overline{w}$} & \bm{$\overline{w^*}$} \\
    \hline
    \hline
    \multicolumn{1}{|c|}{\rule{0pt}{2ex}\textbf{UWHVF}} & 1 & 1.0 & 5 & 1.0 \\
    \hline
    \hline
    \multicolumn{1}{|c|}{\rule{0pt}{2ex}\textbf{Break}} & 1 & 0.2 & 2 & 0.2 \\
    \hline
    \multicolumn{1}{|c|}{\rule{0pt}{2ex}\textbf{House}} & 1 & 0.2 & 1 & 0.2 \\
    \hline
    \multicolumn{1}{|c|}{\rule{0pt}{2ex}\textbf{Ballet Jazz}} & 1 & 0.2 & 1 & 0.2 \\
    \hline
    \multicolumn{1}{|c|}{\rule{0pt}{2ex}\textbf{Street Jazz}} & 2 & 0.0 & 2 & 0.0 \\
    \hline
    \multicolumn{1}{|c|}{\rule{0pt}{2ex}\textbf{Krump}} & 2 & 0.0 & 1 & 0.2\\
    \hline
    \multicolumn{1}{|c|}{\rule{0pt}{2ex}\textbf{LA Hip Hop}} & 1 & 0.2 & 1 & 0.2 \\
    \hline
    \multicolumn{1}{|c|}{\rule{0pt}{2ex}\textbf{Lock}} & 1 & 0.2 & 1 & 0.2 \\
    \hline
    \multicolumn{1}{|c|}{\rule{0pt}{2ex}\textbf{Middle Hip Hop}} & 1 & 0.2 & 1 & 0.2 \\
    \hline
    \multicolumn{1}{|c|}{\rule{0pt}{2ex}\textbf{Pop}} & 1 & 0.2 & 1 & 0.2 \\
    \hline
    \multicolumn{1}{|c|}{\rule{0pt}{2ex}\textbf{Wack}} & 1 & 0.2 & 1 & 0.2\\
    \hline
     \end{tabular}
\end{table}

For our baselines TimeGrad, CSDI, and ARMD, we select hyperparameters following the published implementation details. For the GenViT model, we select hyperparameters from a hyperparameter search, as those published in~\citet{tian2023glaucoma} were for a simpler task with $H=1$ and $T=1$. We train with batch size 16, learning rate $10^{-5}$, and 50 epochs. These hyperparameters were selected from a search over batch size 8 to 16, learning rate $10^{-5}$ to $5 \times 10^{-5}$, and 10 to 50 epochs, with MAE as selection criterion. These ranges were chosen to remain consistent with the settings of \citet{tian2023glaucoma}.

We use default architectures for all baseline models except ARMD. The original ARMD implementation uses a single layer linear network, with layer size equal to the prediction horizon, which can be extremely small in our application settings and therefore insufficiently expressive. For a fairer comparison, we instead apply a lightweight two layer MLP with hidden dimension 32 and layer normalization. In addition, ARMD’s evolution (i.e., forward) and devolution (i.e., reverse) processes assume equal history and prediction lengths. To accommodate unequal horizons, we adapt both processes so that a window of length $H$ slides from time step 1 through $T$. Finally, following ARMD’s DDIM-style sampling procedure, we adopt $\eta$ = 0.1 to introduce a small amount of stochasticity. These minimal adjustments preserve the core design of ARMD while enabling its application to our forecasting setting.

All experiments were performed on a machine with 42 GB of GPU memory. Each model requires less than 1 GB. \rev{We report parameter counts for each model in Table~\ref{tab:counts} and inference latency in Table~\ref{tab:timing}. The sampling efficiency of our model can be substantially improved using accelerated samplers such as DDIM. Exploring these optimizations is beyond the scope of this work.}

\begin{table}[!t]
\caption{\rev{Parameter counts.} \label{tab:counts}}
\centering
    \small
     \begin{tabular}{c|c|c|c|c|c|c|}
     \cline{2-7}
     & \rule{0pt}{2ex}\textbf{PGDM} & \textbf{PGDM} & \multirow{ 2}{*}{\textbf{TimeGrad}} & \multirow{ 2}{*}{\textbf{CSDI}} & 
     \multirow{ 2}{*}{\textbf{ARMD}} &
     \multirow{ 2}{*}{\textbf{GenViT}} \\
     & \textbf{Pattern Predictor} & \textbf{Denoiser} & & & & \\
    \hline
    \hline
    \multicolumn{1}{|c|}{\textbf{UWHVF}} & 61,581 & 394,567 & 60,103 & 610,945 & 375 & 4,393,874 \\
    \hline
    \hline
    \multicolumn{1}{|c|}{\rule{0pt}{2ex}\textbf{Break}} & 270,687 & \multirow{ 10}{*}{418,483} & \multirow{ 10}{*}{61,251} & \multirow{ 10}{*}{610,961} & \multirow{ 10}{*}{613} & \multirow{ 10}{*}{-} \\
    \cline{1-2}
    \multicolumn{1}{|c|}{\rule{0pt}{2ex}\textbf{House}} & 282,222 & & & & & \\
    \cline{1-2}
    \multicolumn{1}{|c|}{\rule{0pt}{2ex}\textbf{Ballet Jazz}} & 255,307 & & & & & \\
    \cline{1-2}
    \multicolumn{1}{|c|}{\rule{0pt}{2ex}\textbf{Street Jazz}} & 251,462 & & & & & \\
    \cline{1-2}
    \multicolumn{1}{|c|}{\rule{0pt}{2ex}\textbf{Krump}} & 282,222 & & & & & \\
    \cline{1-2}
    \multicolumn{1}{|c|}{\rule{0pt}{2ex}\textbf{LA Hip Hop}} & 282,222 & & & & &\\
    \cline{1-2}
    \multicolumn{1}{|c|}{\rule{0pt}{2ex}\textbf{Lock}} & 282,222 & & & & & \\
    \cline{1-2}
    \multicolumn{1}{|c|}{\rule{0pt}{2ex}\textbf{Middle Hip Hop}} & 282,222 & & & & & \\
    \cline{1-2}
    \multicolumn{1}{|c|}{\rule{0pt}{2ex}\textbf{Pop}} & 270,687 & & & & & \\
    \cline{1-2}
    \multicolumn{1}{|c|}{\rule{0pt}{2ex}\textbf{Wack}} & 243,772 & & & & & \\
    \hline
     \end{tabular}
\end{table}

\begin{table}[!t]
\caption{\rev{Inference latency (ms). Latency is computed by measuring the average wall-clock time of a forward pass with batch size 32 (over 10 repeated runs on a fixed batch after warmup) and dividing the resulting batch latency by 32. Latency on the AIST++ dataset is measured using the break genre.} \label{tab:timing}}
\centering
    \small
     \begin{tabular}{c|c|c|c|c|c|}
     \cline{2-6}
     & \rule{0pt}{2ex}\textbf{PGDM} & \textbf{TimeGrad} & \textbf{CSDI} & 
     \textbf{ARMD} &
     \textbf{GenViT} \\
    \hline
    \multicolumn{1}{|c|}{\rule{0pt}{2ex}\textbf{UWHVF}} & 742.60 & <0.01 & 10.63 & 0.04 & 314.28\\
    \hline
    \multicolumn{1}{|c|}{\rule{0pt}{2ex}\textbf{AIST++}} & 732.30 & <0.01 & 27.20 & 0.03 & -\\
    \hline
     \end{tabular}
\end{table}
\section{Patterns Extracted from AIST++}\label{app:all_archs}
Figures~\ref{fig:house_aa} to~\ref{fig:wack_aa} show the patterns extracted from each genre of the AIST++ dataset.

\begin{figure}[!h]
  \centering
  \includegraphics[width=0.9\linewidth]{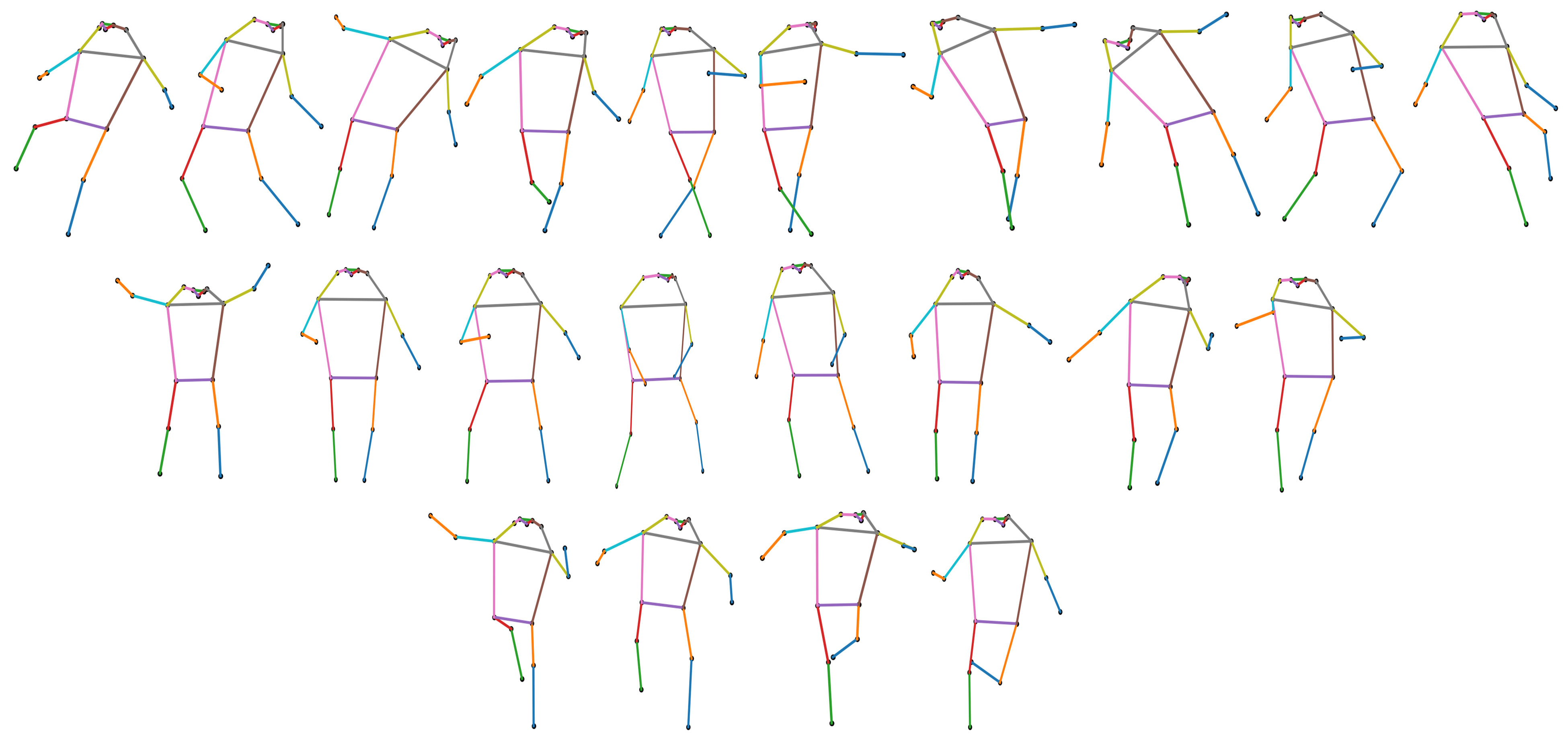}
  \captionof{figure}{Twenty two archetypes extracted from AIST++ house dancing frames.}
  \label{fig:house_aa}
\end{figure}

\begin{figure}[!h]
  \centering
  \includegraphics[width=0.8\linewidth]{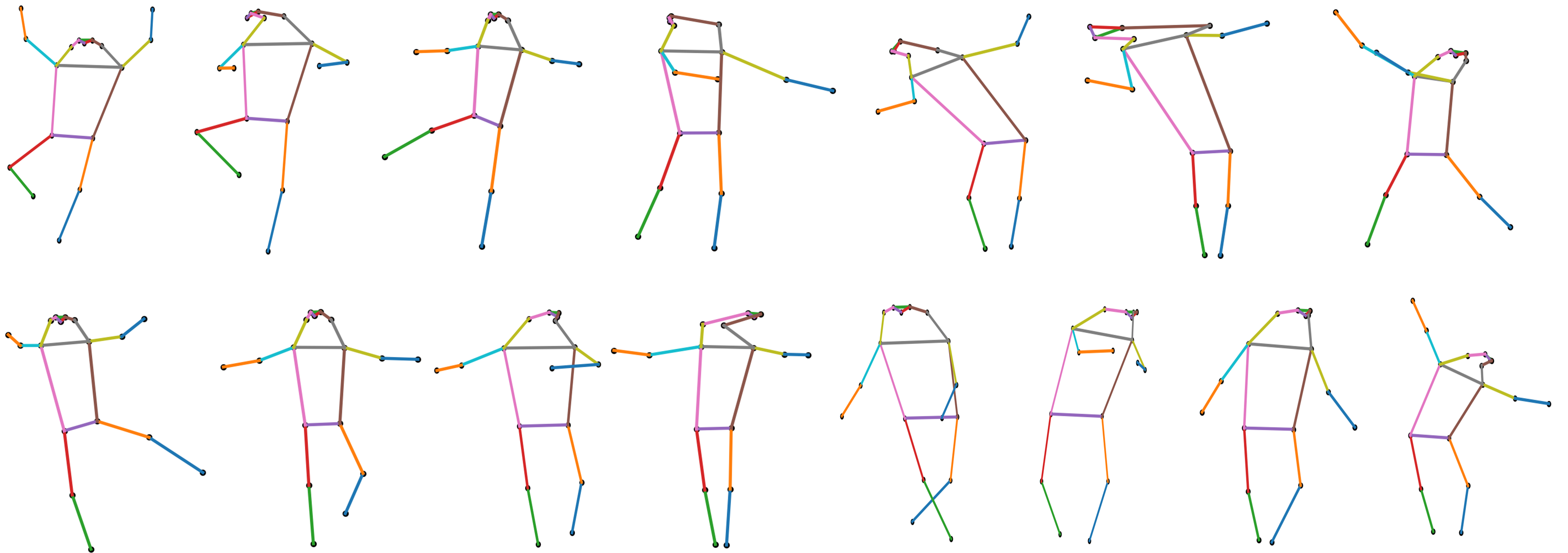}
  \captionof{figure}{Fifteen archetypes extracted from AIST++ ballet jazz dancing frames.}
  \label{fig:ballet_jazz_aa}
\end{figure}

\begin{figure}[h]
  \centering
  \includegraphics[width=0.8\linewidth]{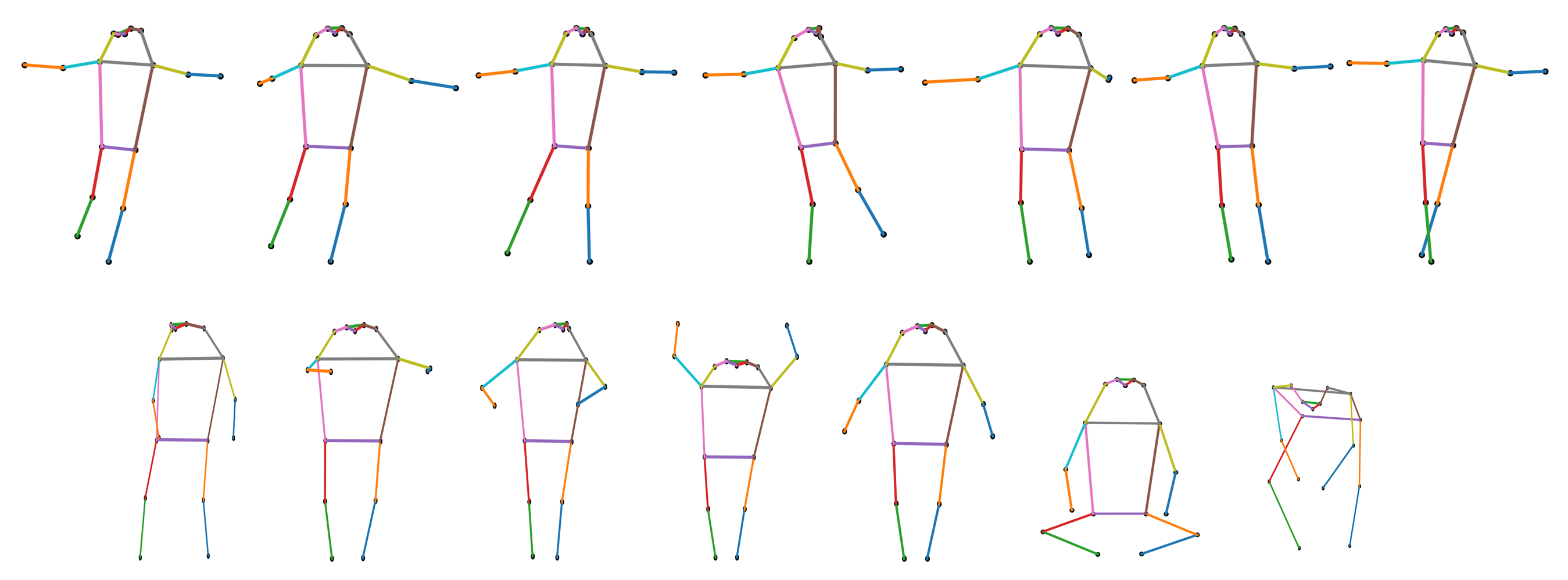}
  \captionof{figure}{Fourteen archetypes extracted from AIST++ street jazz dancing frames.}
  \label{fig:street_jazz_aa}
\end{figure}

\begin{figure}[h]
  \centering
  \includegraphics[width=0.8\linewidth]{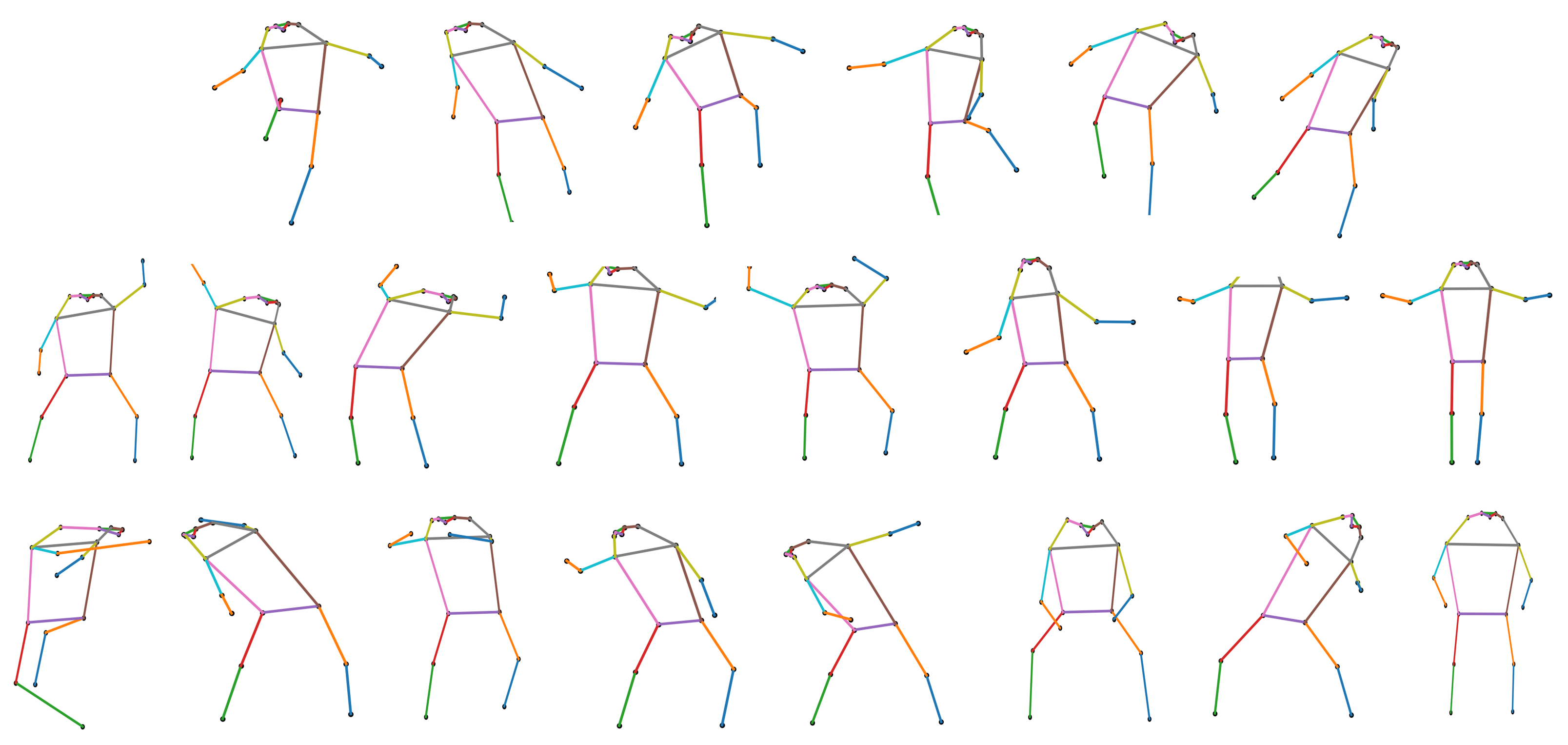}
  \captionof{figure}{Twenty two archetypes extracted from AIST++ krump dancing frames.}
  \label{fig:krump_aa}
\end{figure}

\begin{figure}[h]
  \centering
  \includegraphics[width=0.8\linewidth]{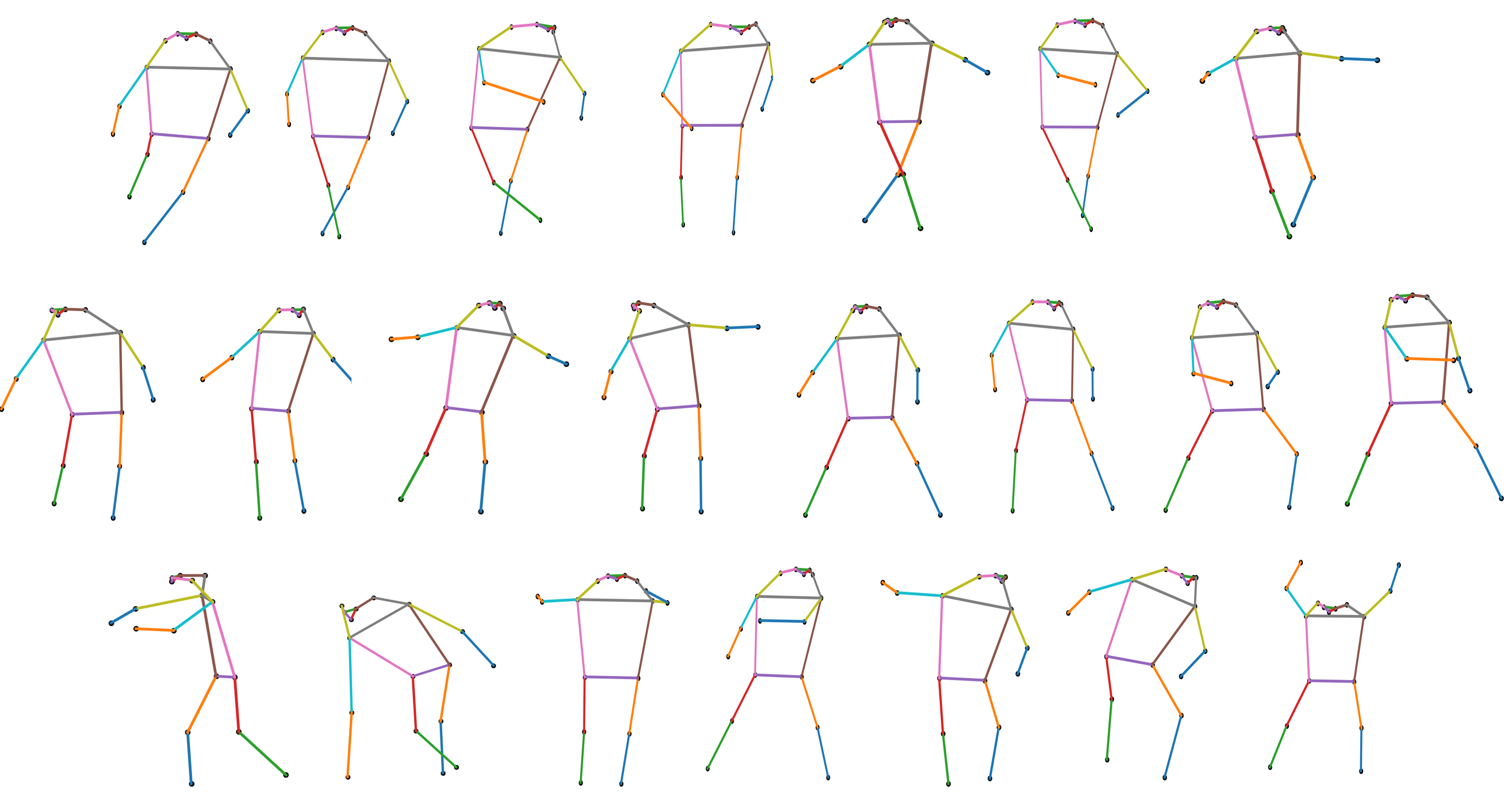}
  \captionof{figure}{Twenty two archetypes extracted from AIST++ LA hip hop dancing frames.}
  \label{fig:la_hip_hop_aa}
\end{figure}

\begin{figure}[h]
  \centering
  \includegraphics[width=0.9\linewidth]{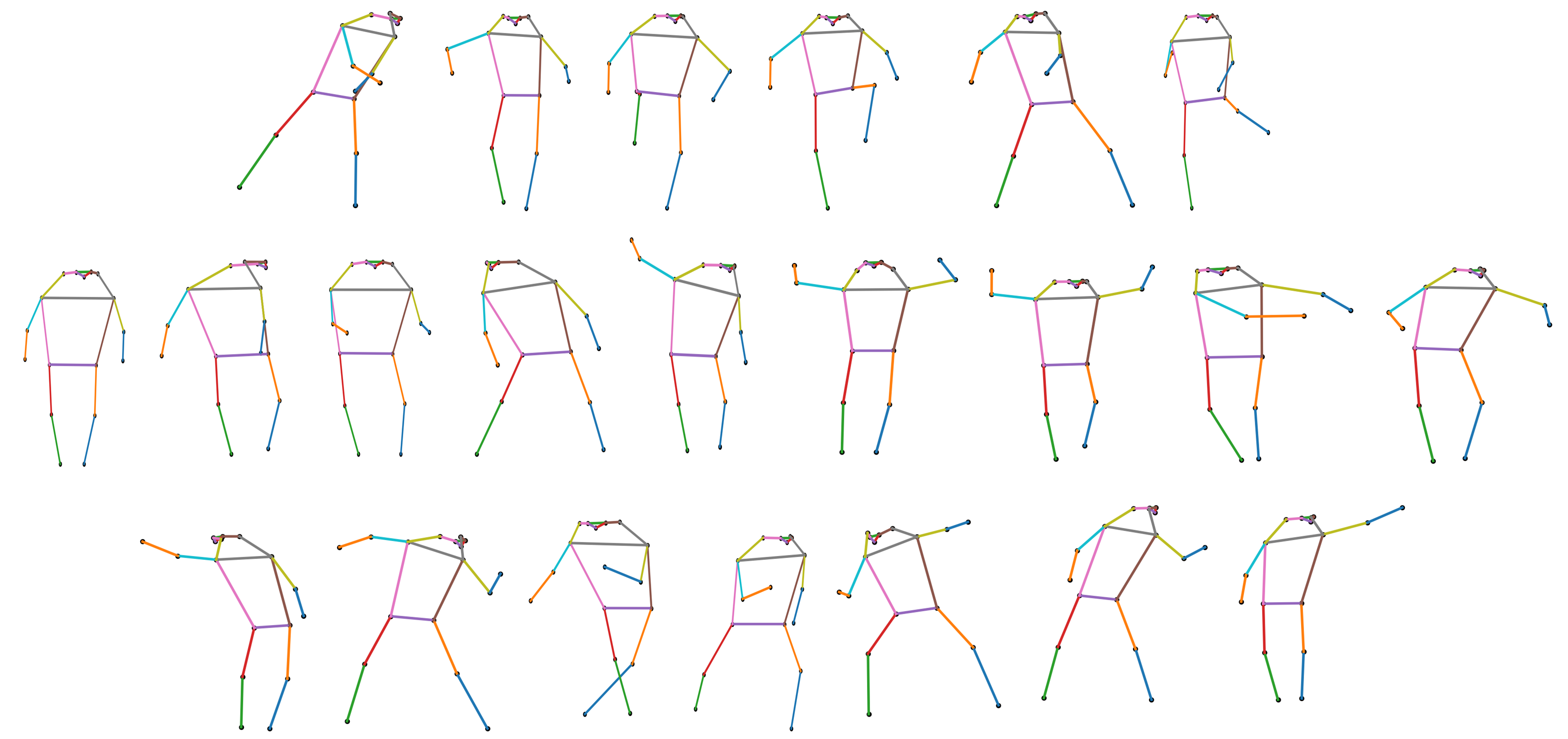}
  \captionof{figure}{Twenty two archetypes extracted from AIST++ lock dancing frames.}
  \label{fig:lock_aa}
\end{figure}

\begin{figure}[h]
  \centering
  \includegraphics[width=0.8\linewidth]{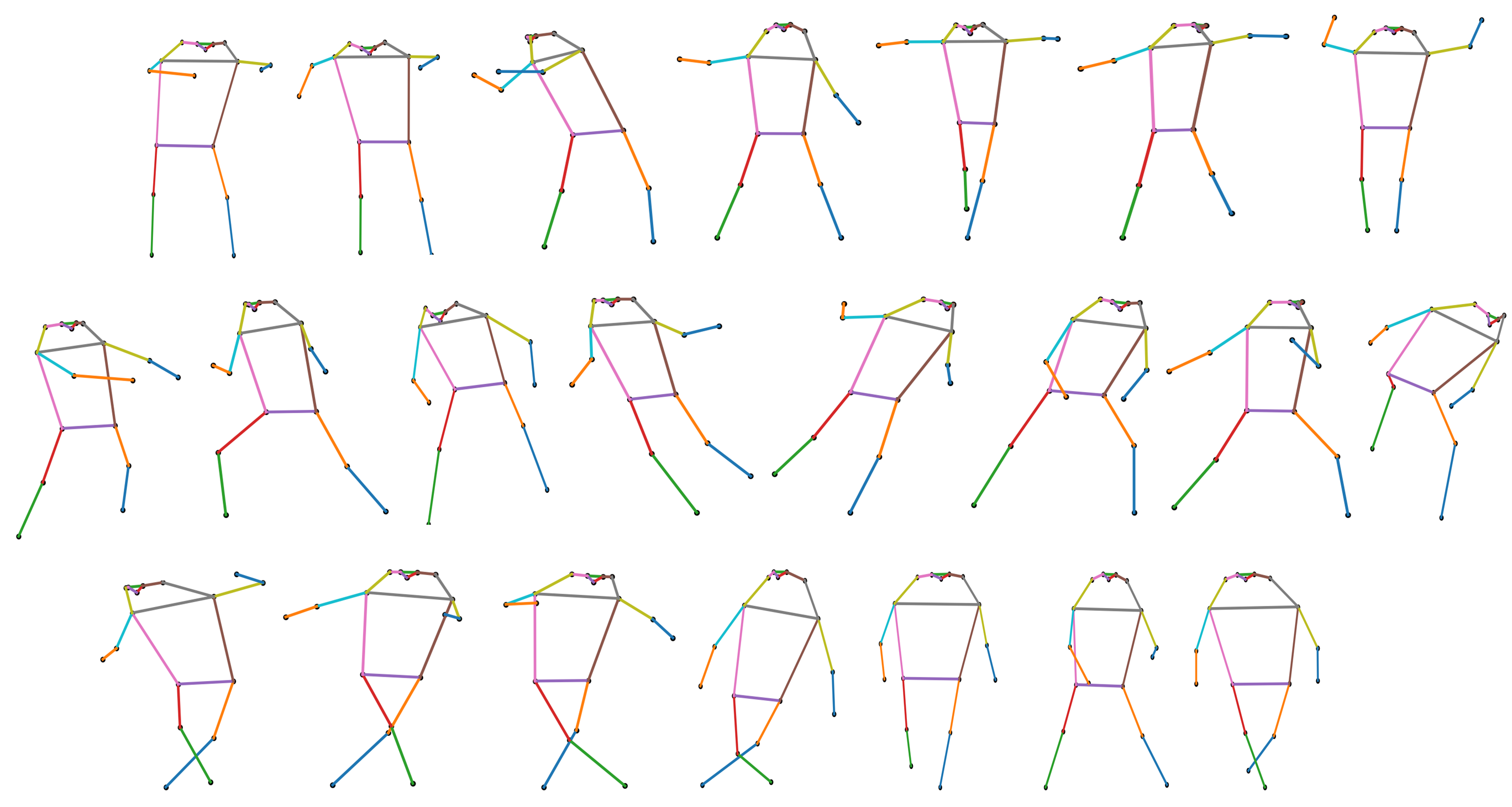}
  \captionof{figure}{Twenty two archetypes extracted from AIST++ middle hip hop dancing frames.}
  \label{fig:middle_hip_hop_aa}
\end{figure}

\begin{figure}[h]
  \centering
  \includegraphics[width=0.8\linewidth]{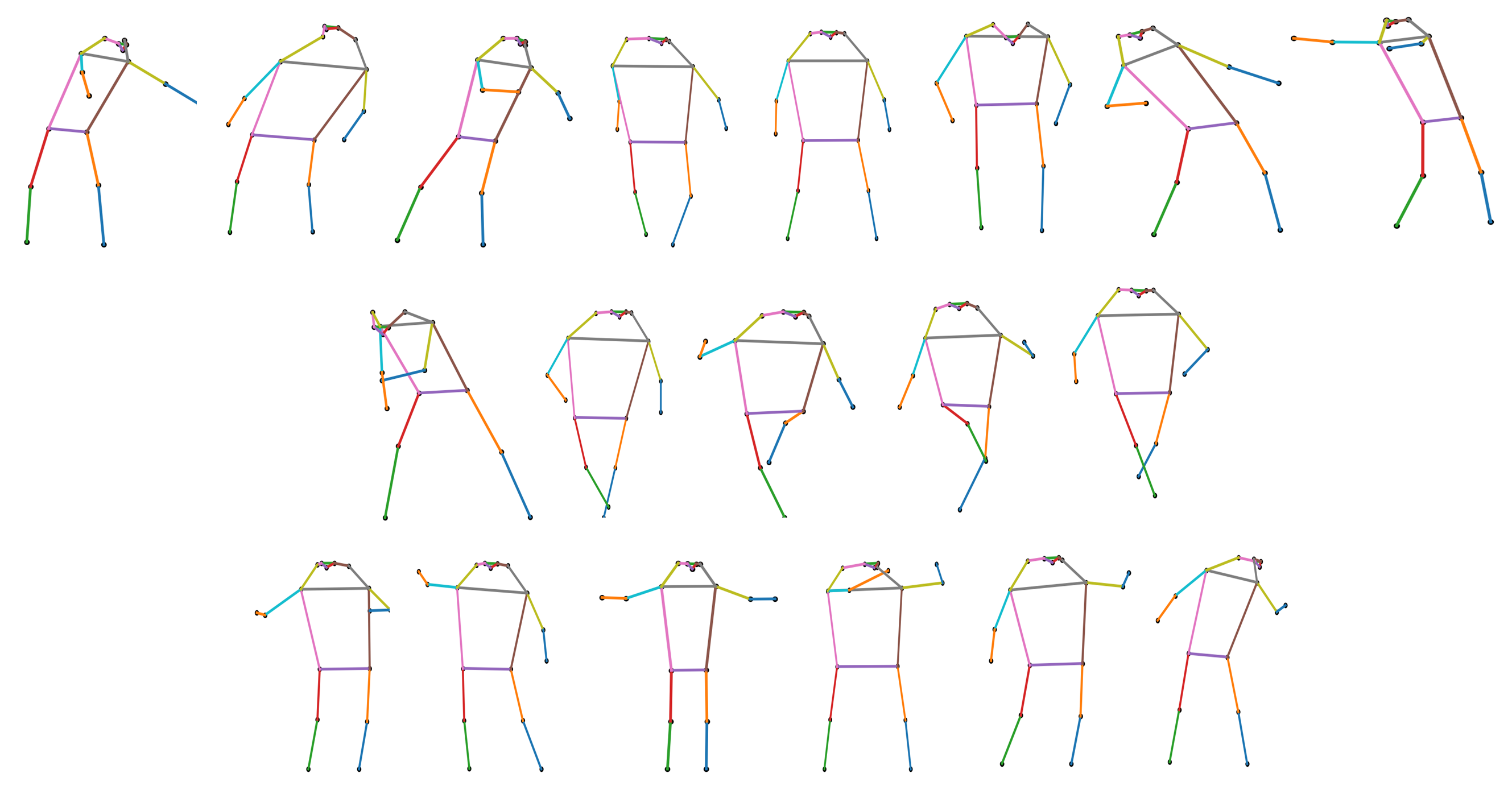}
  \captionof{figure}{Nineteen archetypes extracted from AIST++ pop dancing frames.}
  \label{fig:pop_aa}
\end{figure}

\begin{figure}[h]
  \centering
  \includegraphics[width=0.65\linewidth]{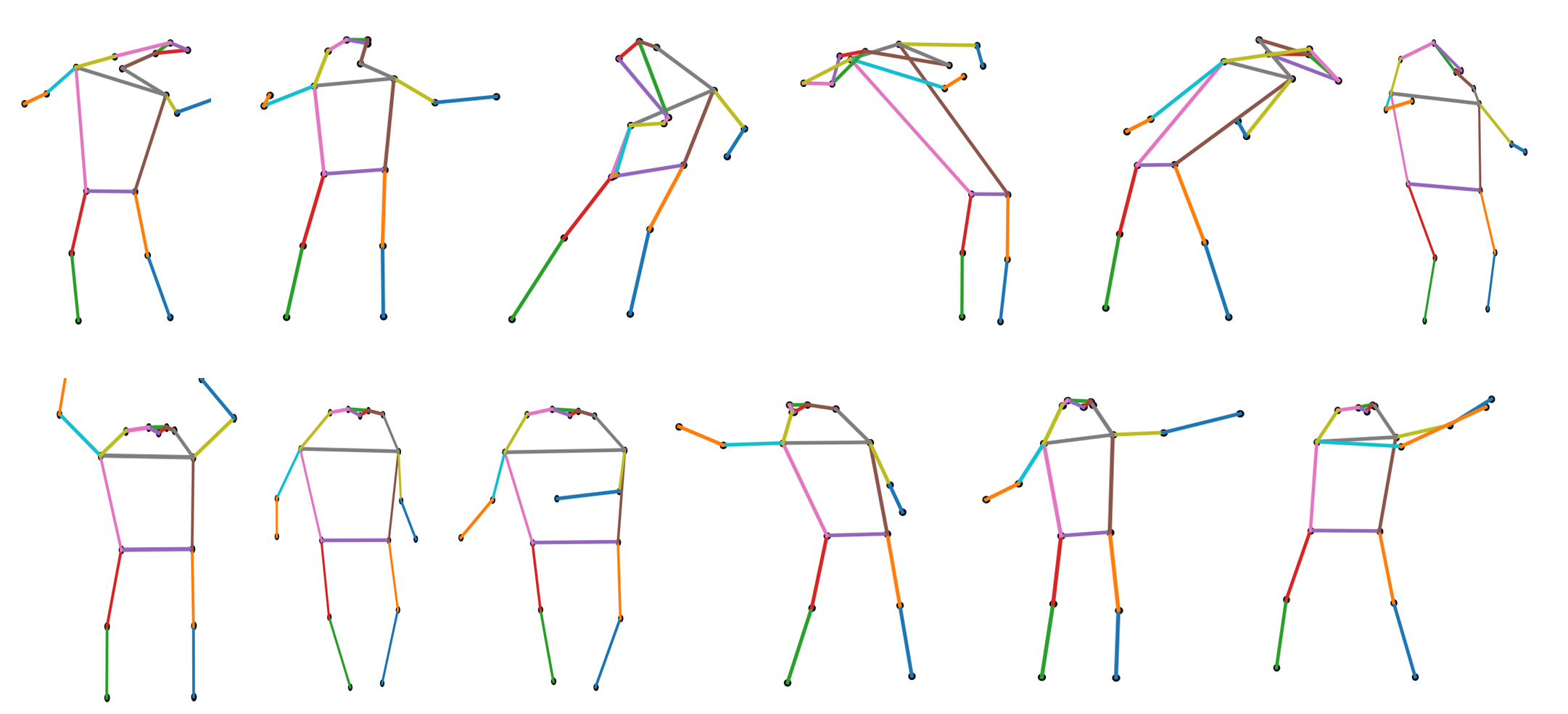}
  \captionof{figure}{Twelve archetypes extracted from AIST++ wack dancing frames.}
  \label{fig:wack_aa}
\end{figure}
\clearpage
\section{Evaluation of PGDM Components}\label{app:components}
Table~\ref{tab:intermediate} shows the reconstruction error of the extracted patterns, the error of the pattern prediction model, and the error of the guidance function.
\begin{table}[!h]
\caption{Mean absolute error (MAE) of the guidance function and its components. Note that the pattern prediction is performed in the pattern representation space, with range $(0,1)$. \label{tab:intermediate}}
\centering
    \small
    \setlength{\tabcolsep}{3pt}
     \begin{tabular}{c|ccc|}
     \cline{2-4}
      & \rule{0pt}{2ex}\textbf{Archetypal Analysis} & \textbf{Pattern Prediction} & \textbf{Guidance Function}\\
     \hline
     \hline
     \multicolumn{1}{|c|}{\rule{0pt}{2ex}\textbf{UWHVF}} & 2.3146 & 0.0466 & 3.1560 \\
     \hline
     \hline
     \multicolumn{1}{|c|}{\rule{0pt}{2ex}\textbf{Break}} & 0.7929 & 0.0111 & 0.8440 \\
     \hline
     \multicolumn{1}{|c|}{\rule{0pt}{2ex}\textbf{House}} & 1.6591 & 0.0111 & 1.7636 \\
     \hline
     \multicolumn{1}{|c|}{\rule{0pt}{2ex}\textbf{Ballet Jazz}} & 0.6269 & 0.0156 & 0.6698 \\
     \hline
     \multicolumn{1}{|c|}{\rule{0pt}{2ex}\textbf{Street Jazz}} & 1.3009 & 0.0108 & 1.3438 \\
     \hline
     \multicolumn{1}{|c|}{\rule{0pt}{2ex}\textbf{Krump}} & 1.6768 & 0.0085 & 1.7677 \\
     \hline
     \multicolumn{1}{|c|}{\rule{0pt}{2ex}\textbf{LA Hip Hop}} & 1.6707 & 0.0107 & 1.7622 \\
     \hline
     \multicolumn{1}{|c|}{\rule{0pt}{2ex}\textbf{Lock}} & 1.3892 & 0.0102 & 1.4743 \\
     \hline
     \multicolumn{1}{|c|}{\rule{0pt}{2ex}\textbf{Middle Hip Hop}} & 1.8489 & 0.0113 & 1.9682 \\
     \hline
     \multicolumn{1}{|c|}{\rule{0pt}{2ex}\textbf{Pop}} & 0.9886 & 0.0106 & 1.0572 \\
     \hline
     \multicolumn{1}{|c|}{\rule{0pt}{2ex}\textbf{Wack}} & 0.5353 & 0.0166 & 0.5973 \\
     \hline
     \end{tabular}
\end{table}
\section{Impact of Pattern Guidance}\label{app:more_pgdm}
In the main text, we observed that pattern guidance reduces the error of PGDM's predictions. To further illustrate this point, qualitative examples for both applications are shown in Figure~\ref{fig:qual}. For VF prediction, we show five example $H=1$ step-ahead predictions from PGDM$_{\rm GDE}$.
When pattern guidance is not used ($\overline{w}=0$), PGDM makes a noisy prediction based only on the past visual field data. When pattern guidance is added ($\overline{w}=5$), PGDM incorporates the pattern prediction in its forecast. The outcome resembles a mixture of the pattern prediction and the unguided prediction (see Ex. 2 of~\ref{fig:qual_uw}). For motion prediction, we show one example $H=5$ step ahead prediction for PGDM$_{\rm GDE}$. In this example, we highlight the bent right leg of the skeleton. Without pattern guidance ($\overline{w}=0$), the model predicts nearly no motion in the leg across the horizon. In contrast, the guidance function predicts a set of patterns that change over time, matching the moving right leg of the ground truth frames. When guidance is used ($\overline{w}=2$), PGDM incorporates this motion into its prediction and forecasts more accurate future frames.

\begin{figure*}[!h]
    \centering
    \begin{subfigure}[h]{0.5\textwidth}
        \includegraphics[width=\linewidth]{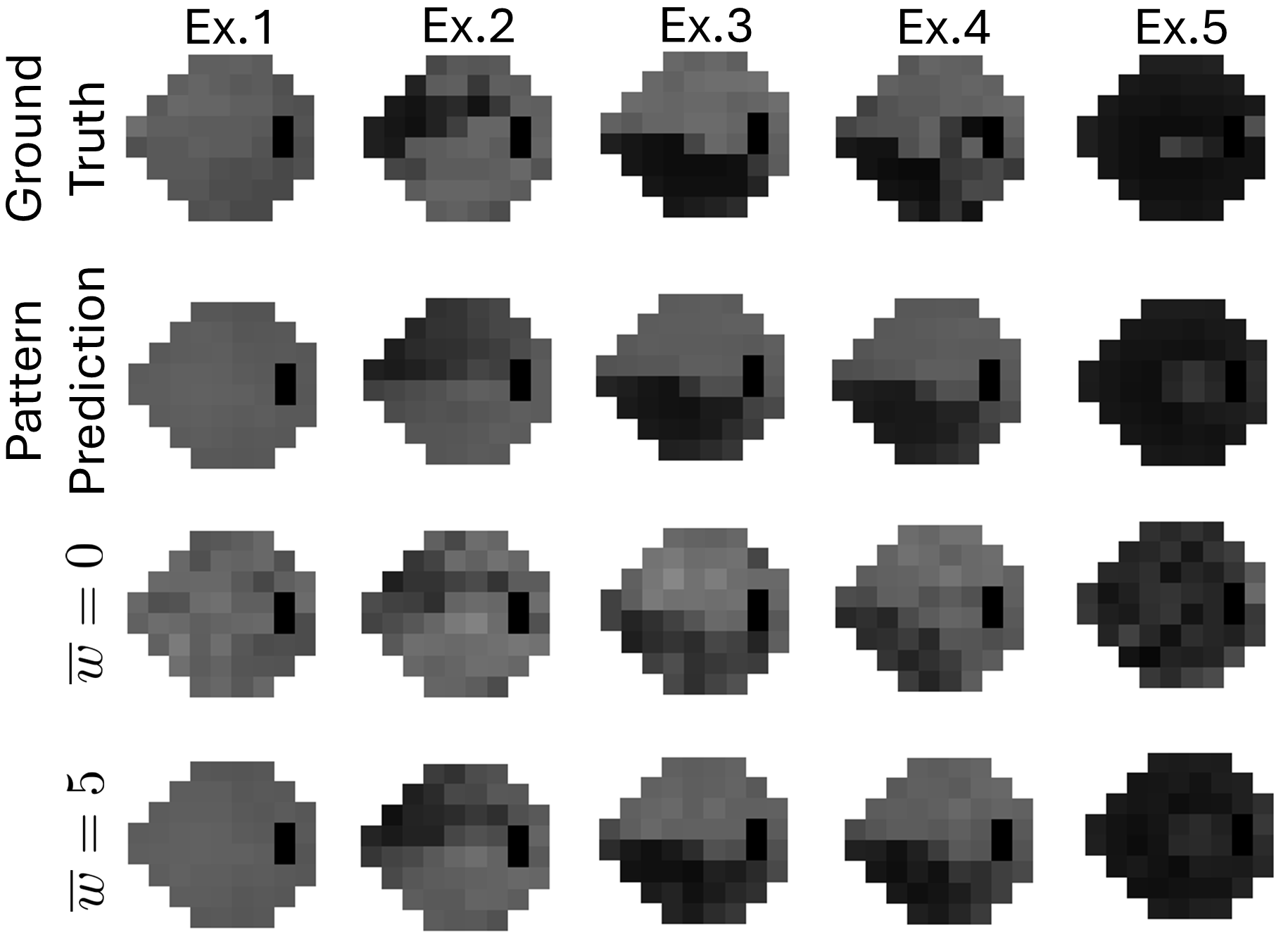}
        \caption{VF prediction.\label{fig:qual_uw}}
    \end{subfigure}
    \hspace{0.05\textwidth}
    \begin{subfigure}[h]{0.4\textwidth}
        \includegraphics[width=\linewidth]{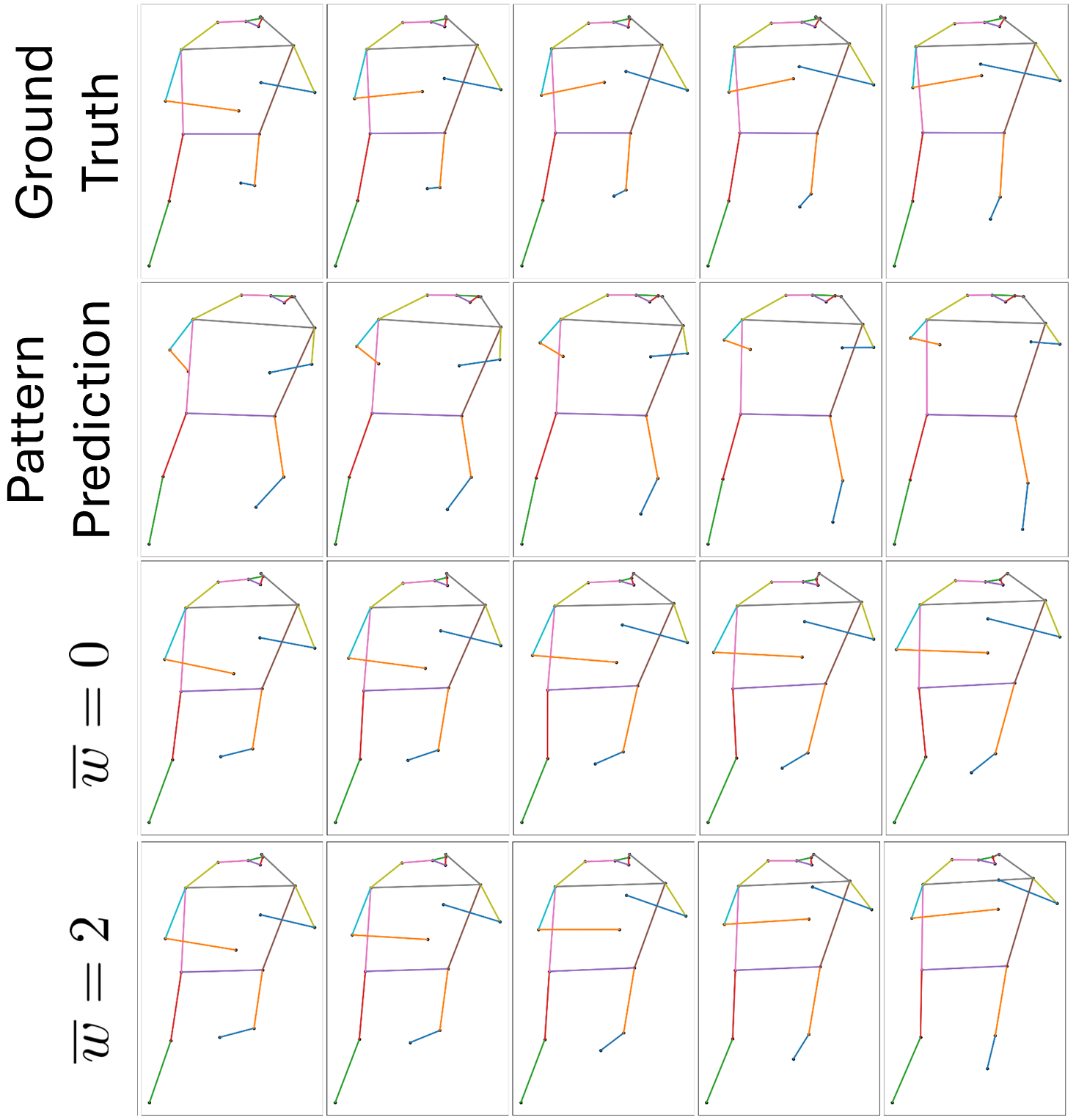}
        \caption{Motion prediction.}
    \end{subfigure}
    \caption{Qualitative examples of pattern guidance for PGDM$_{\rm GDE}$ on a) the visual field prediction application and b) the human motion prediction application.\label{fig:qual}}
\end{figure*}

In Tables~\ref{tab:break_model} to~\ref{tab:wack_model}, we show the quantitative effect of pattern guidance levels $\overline{w}=1,2,3,4,5$ on PGDM's MAE for the human motion prediction applications, holding the mixing scale $\overline{w^*}$ constant at its optimal choice. Each table corresponds to one of the dance genres in the AIST++ dataset. \rev{Table~\ref{tab:crps_guidance} additionally shows the effect of pattern guidance levels on the CRPS of PGDM for both the visual field and human motion prediction applications.}\footnote{\rev{To avoid redundancy, we report CRPS alone and do not report percent improvements over baselines. Trends apparent in the reported MAE results are consistent with those reported in the CRPS results. Compared to unguided predictions, PGDM$_{\rm GDE}$ achieves the greatest reduction in CRPS using guidance (56.26\% on UWHVF and 14.10\% on the LA hip hop genre of AIST++). Compared to baselines, PGDM$_{\rm MAE}$ can surpass baselines by up to 84.83\% on UWHVF and 92.55\% on the break dancing genre of AIST++.}} In general PGDM$_{\rm MAE}$ and PGDM$_{\rm GDE}$  achieve their best performances with relatively light guidance. Beyond this point, the pattern guidance has diminishing returns, even increasing the prediction error when the guidance scale is too high. In practice, the appropriate $\overline{w}$ may be selected in a manner similar to a hyperparameter search. We also observe that, in most cases, the standard deviation of the MAE decreases as $\overline{w}$ increases up to the optimal $\bar{w}$. This indicates that pattern guidance improves both the quality and the consistency of predictions.

\begin{table}[!h]
\caption{
Mean absolute error (MAE) of PGDM$_{\rm MAE}$, PGDM$_{\rm GDE}$, and baselines on the break dancing genre of the AIST++ dataset. Percent improvements over baselines are shown in the $\Delta$ MAE (\%) columns. Mean and standard deviation are taken across five samples. \label{tab:break_model}}
\centering
    \small
     \begin{tabular}{cc|ccccc|}
     \hline
     \multicolumn{2}{|c|}{\rule{0pt}{2ex}\multirow{2}{*}{\textbf{Model}}} & \multirow{2}{*}{\textbf{MAE (dB)}} & \textbf{$\Delta$ MAE (\%)} & \textbf{$\Delta$ MAE (\%)} & \rev{\textbf{$\Delta$ MAE (\%)}} & \textbf{$\Delta$ MAE (\%)} \\
     \multicolumn{2}{|c|}{} & & \textbf{vs. TimeGrad} & \textbf{vs. CSDI} & \rev{\textbf{vs. ARMD}} & \textbf{vs.} $\bm{\overline{w}=0}$\\
     \hline
     \multicolumn{2}{|c|}{\rule{0pt}{2ex}TimeGrad} & 2.10\std{0.0064} & - & - & - & - \\
     \hline
     \multicolumn{2}{|c|}{\rule{0pt}{2ex}CSDI} & 0.47\std{0.0010} & - & - & - & - \\
     \hline
     \multicolumn{2}{|c|}{\rule{0pt}{2ex}ARMD} & 4.26\std{0.0026} & - & - & - & - \\
     \hline
     \hline
     \multicolumn{1}{|c|}{\rule{0pt}{2ex}} & $\overline{w}=0$ & 0.41\std{0.0013} & 80.33\std{0.10} & 12.54\std{0.15} & 90.31\std{0.03} & - \\
     \cline{2-7}
     \multicolumn{1}{|c|}{\rule{0pt}{2ex}} & $\overline{w}=1$ & \best{0.38\std{0.0009}} & \best{81.91\std{0.07}} & \best{19.54\std{0.14}} & \best{91.08\std{0.02}} & 8.01\std{0.14} \\
     \cline{2-7}
     \multicolumn{1}{|c|}{\rule{0pt}{2ex}PGDM$_{\rm MAE}$} & $\overline{w}=2$ & \best{0.38\std{0.0009}} & 81.80\std{0.07} & 19.04\std{0.16} & 91.03\std{0.02} & 7.43\std{0.16}\\
     \cline{2-7}
     \multicolumn{1}{|c|}{\rule{0pt}{2ex}} & $\overline{w}=3$ & 0.40\std{0.0007} & 81.16\std{0.06} & 16.22\std{0.17} & 90.72\std{0.02} & 4.21\std{0.28}\\
     \cline{2-7}
     \multicolumn{1}{|c|}{\rule{0pt}{2ex}} & $\overline{w}=4$ & 0.42\std{0.0009} & 80.19\std{0.09} & 11.91\std{0.19} & 90.24\std{0.02} & -0.72\std{0.33}\\
     \cline{2-7}
     \multicolumn{1}{|c|}{\rule{0pt}{2ex}} & $\overline{w}=5$ & 0.44\std{0.0010} & 79.02\std{0.09} & 6.71\std{0.14} & 89.66\std{0.02} & -6.66\std{0.32} \\
     \hline
     \hline
     \multicolumn{1}{|c|}{\rule{0pt}{2ex}} & $\overline{w}=0$ & 0.52\std{0.0032} & 75.25\std{0.18} & -10.06\std{0.76} & 87.80\std{0.07} & - \\
     \cline{2-7}
     \multicolumn{1}{|c|}{\rule{0pt}{2ex}} & $\overline{w}=1$ & 0.45\std{0.0023} & 78.32\std{0.13} & 3.60\std{0.64} & 89.32\std{0.05} & 12.41\std{0.29}\\
     \cline{2-7}
     \multicolumn{1}{|c|}{\rule{0pt}{2ex}PGDM$_{\rm GDE}$} & $\overline{w}=2$ & 0.45\std{0.0015} & 78.75\std{0.11} & 5.48\std{0.45} & 89.53\std{0.03} & \best{14.12\std{0.35}} \\
     \cline{2-7}
     \multicolumn{1}{|c|}{\rule{0pt}{2ex}} & $\overline{w}=3$ & 0.46\std{0.0008} & 78.24\std{0.09} & 3.24\std{0.28} & 89.28\std{0.02} & 12.08\std{0.44} \\
     \cline{2-7}
     \multicolumn{1}{|c|}{\rule{0pt}{2ex}} & $\overline{w}=4$ & 0.48\std{0.0010} & 77.34\std{0.11} & -0.79\std{0.21} & 88.83\std{0.02} & 8.42\std{0.63} \\
     \cline{2-7}
     \multicolumn{1}{|c|}{\rule{0pt}{2ex}} & $\overline{w}=5$ & 0.50\std{0.0012} & 76.21\std{0.12} & -5.81\std{0.21} & 88.27\std{0.03} & 3.86\std{0.74}\\
     \hline
     \end{tabular}
\end{table}
\begin{table}[!h]
\caption{
Mean absolute error (MAE) of PGDM$_{\rm MAE}$, PGDM$_{\rm GDE}$, and baselines on the house dancing genre of the AIST++ dataset. Percent improvements over baselines are shown in the $\Delta$ MAE (\%) columns. Mean and standard deviation are taken across five samples. \label{tab:house model}}
\centering
    \small
     \begin{tabular}{cc|ccccc|}
     \hline
     \multicolumn{2}{|c|}{\rule{0pt}{2ex}\multirow{2}{*}{\textbf{Model}}} & \multirow{2}{*}{\textbf{MAE (dB)}} & \textbf{$\Delta$ MAE (\%)} & \textbf{$\Delta$ MAE (\%)} & \rev{\textbf{$\Delta$ MAE (\%)}} & \textbf{$\Delta$ MAE (\%)} \\
     \multicolumn{2}{|c|}{} & & \textbf{vs. TimeGrad} & \textbf{vs. CSDI} & \rev{\textbf{vs. ARMD}} & \textbf{vs.} $\bm{\overline{w}=0}$\\
     \hline
     \multicolumn{2}{|c|}{\rule{0pt}{2ex}TimeGrad} & 3.71\std{0.0159} & - & - & - & - \\
     \hline
     \multicolumn{2}{|c|}{\rule{0pt}{2ex}CSDI} & 1.02\std{0.0045} & - & - & - & - \\
      \hline
     \multicolumn{2}{|c|}{\rule{0pt}{2ex}ARMD} & 9.11\std{0.0035} & - & - & - & - \\
     \hline
     \hline
     \multicolumn{1}{|c|}{\rule{0pt}{2ex}} & $\overline{w}=0$ & 0.79\std{0.0017} & 78.63\std{0.12} & 22.11\std{0.35} & 91.30\std{0.02} & -\\
     \cline{2-7}
     \multicolumn{1}{|c|}{\rule{0pt}{2ex}} & $\overline{w}=1$ & \best{0.74\std{0.0014}} & \best{79.95\std{0.11}} & \best{26.93\std{0.34}} & \best{91.84\std{0.02}} & 6.19\std{0.10} \\
     \cline{2-7}
     \multicolumn{1}{|c|}{\rule{0pt}{2ex}PGDM$_{\rm MAE}$} & $\overline{w}=2$ & 0.75\std{0.0011} & 79.75\std{0.11} & 26.20\std{0.31} & 91.76\std{0.01} & 6.96\std{0.09} \\
     \cline{2-7}
     \multicolumn{1}{|c|}{\rule{0pt}{2ex}} & $\overline{w}=3$ & 0.78\std{0.0012} & 79.03\std{0.12} & 23.57\std{0.29} & 91.46\std{0.01} & 1.87\std{0.17} \\
     \cline{2-7}
     \multicolumn{1}{|c|}{\rule{0pt}{2ex}} & $\overline{w}=4$ & 0.82\std{0.0014} & 78.01\std{0.12} & 19.85\std{0.34} & 91.05\std{0.02} & -2.90\std{0.20} \\
     \cline{2-7}
     \multicolumn{1}{|c|}{\rule{0pt}{2ex}} & $\overline{w}=5$ & 0.86\std{0.0019} & 76.80\std{0.14} & 15.43\std{0.41} & 90.56\std{0.02} & -8.57\std{0.28} \\
     \hline
     \hline
     \multicolumn{1}{|c|}{\rule{0pt}{2ex}} & $\overline{w}=0$ & 0.90\std{0.0013} & 75.64\std{0.11} & 11.19\std{0.39} & 90.08\std{0.08} & - \\
     \cline{2-7}
     \multicolumn{1}{|c|}{\rule{0pt}{2ex}} & $\overline{w}=1$ & 0.82\std{0.0014} & 77.88\std{0.12} & 19.38\std{0.35} & 91.00\std{0.02} & \best{9.22\std{0.12}} \\
     \cline{2-7}
     \multicolumn{1}{|c|}{\rule{0pt}{2ex}PGDM$_{\rm GDE}$} & $\overline{w}=2$ & 0.82\std{0.0017} & 77.81\std{0.12} & 19.13\std{0.40} & 90.97\std{0.02} & 8.94\std{0.25} \\
     \cline{2-7}
     \multicolumn{1}{|c|}{\rule{0pt}{2ex}} & $\overline{w}=3$ & 0.85\std{0.0017} & 77.09\std{0.13} & 16.51\std{0.42} & 90.67\std{0.02} & 5.98\std{0.23} \\
     \cline{2-7}
     \multicolumn{1}{|c|}{\rule{0pt}{2ex}} & $\overline{w}=4$ & 0.89\std{0.0019} & 76.09\std{0.15} & 12.84\std{0.29} & 90.27\std{0.02} & 1.85\std{0.18} \\
     \cline{2-7}
     \multicolumn{1}{|c|}{\rule{0pt}{2ex}} & $\overline{w}=5$ & 0.93\std{0.0015} & 74.96\std{0.14} & 8.73\std{0.28} & 89.81\std{0.02} & -2.78\std{0.15} \\
     \hline
     \end{tabular}
\end{table}
\begin{table}[!h]
\caption{
Mean absolute error (MAE) of PGDM$_{\rm MAE}$, PGDM$_{\rm GDE}$, and baselines on the ballet jazz dancing genre of the AIST++ dataset. Percent improvements over baselines are shown in the $\Delta$ MAE (\%) columns. Mean and standard deviation are taken across five samples. \label{tab:ballet_jazz_model}}
\centering
    \small
     \begin{tabular}{cc|ccccc|}
     \hline
     \multicolumn{2}{|c|}{\rule{0pt}{2ex}\multirow{2}{*}{\textbf{Model}}} & \multirow{2}{*}{\textbf{MAE (dB)}} & \textbf{$\Delta$ MAE (\%)} & \textbf{$\Delta$ MAE (\%)} & \rev{\textbf{$\Delta$ MAE (\%)}} & \textbf{$\Delta$ MAE (\%)} \\
     \multicolumn{2}{|c|}{} & & \textbf{vs. TimeGrad} & \textbf{vs. CSDI} & \rev{\textbf{vs. ARMD}} & \textbf{vs.} $\bm{\overline{w}=0}$\\
     \hline
     \multicolumn{2}{|c|}{\rule{0pt}{2ex}TimeGrad} & 1.32\std{0.0098} & - & - & - & - \\
     \hline
     \multicolumn{2}{|c|}{\rule{0pt}{2ex}CSDI} & 0.55\std{0.0054} & - & - & -& - \\
     \hline
     \multicolumn{2}{|c|}{\rule{0pt}{2ex}ARMD} & 3.38\std{0.0011} & - & - & - & - \\
     \hline
     \hline
     \multicolumn{1}{|c|}{\rule{0pt}{2ex}} & $\overline{w}=0$ & 0.42\std{0.0008} & 67.94\std{0.19} & 22.83\std{0.84} & 87.53\std{0.02} & -\\
     \cline{2-7}
     \multicolumn{1}{|c|}{\rule{0pt}{2ex}} & $\overline{w}=1$ & \best{0.38\std{0.0002}} & \best{71.13\std{0.21}} & 
     \best{30.50\std{0.67}} & \best{88.77\std{0.01}} & 9.94\std{0.16} \\
     \cline{2-7}
     \multicolumn{1}{|c|}{\rule{0pt}{2ex}PGDM$_{\rm MAE}$} & $\overline{w}=2$ & 0.39\std{0.0005} & 70.53\std{0.24} & 29.07\std{0.64} & 88.54\std{0.02} & 8.08\std{0.24} \\
     \cline{2-7}
     \multicolumn{1}{|c|}{\rule{0pt}{2ex}} & $\overline{w}=3$ & 0.41\std{0.0006} & 68.59\std{0.27} & 24.39\std{0.69} & 87.79\std{0.02} & 2.02\std{0.30} \\
     \cline{2-7}
     \multicolumn{1}{|c|}{\rule{0pt}{2ex}} & $\overline{w}=4$ & 0.45\std{0.0011} & 65.89\std{0.29} & 17.90\std{0.82} & 86.74\std{0.03} & -6.39\std{0.42} \\
     \cline{2-7}
     \multicolumn{1}{|c|}{\rule{0pt}{2ex}} & $\overline{w}=5$ & 0.49\std{0.0012} & 62.85\std{0.30} & 10.58\std{0.93} & 85.55\std{0.03} &	-15.87\std{0.45} \\
     \hline
     \hline
     \multicolumn{1}{|c|}{\rule{0pt}{2ex}} & $\overline{w}=0$ & 0.49\std{0.0004} & 62.75\std{0.30} & 10.34\std{0.83} & 85.52\std{0.02} & -\\
     \cline{2-7}
     \multicolumn{1}{|c|}{\rule{0pt}{2ex}} & $\overline{w}=1$ & 0.42\std{0.0009} & 67.74\std{0.27} & 22.34\std{0.81} & 87.45\std{0.03} & \best{13.39\std{0.17}} \\
     \cline{2-7}
     \multicolumn{1}{|c|}{\rule{0pt}{2ex}PGDM$_{\rm GDE}$} & $\overline{w}=2$ & 0.43\std{0.0010} & 67.15\std{0.29} & 20.92\std{0.81} & 87.23\std{0.03} & 11.81\std{0.20} \\
     \cline{2-7}
     \multicolumn{1}{|c|}{\rule{0pt}{2ex}} & $\overline{w}=3$ & 0.46\std{0.0010} & 64.86\std{0.32} & 15.40\std{0.89} & 86.33\std{0.03} & 5.65\std{0.24} \\
     \cline{2-7}
     \multicolumn{1}{|c|}{\rule{0pt}{2ex}} & $\overline{w}=4$ & 0.50\std{0.0008} & 61.66\std{0.33} & 7.70\std{0.93} & 	85.09\std{0.03} & -2.94\std{0.21} \\
     \cline{2-7}
     \multicolumn{1}{|c|}{\rule{0pt}{2ex}} & $\overline{w}=5$ & 0.55\std{0.0011} & 57.90\std{0.38} & -1.34\std{1.04} & 83.63\std{0.04} &	-13.02\std{0.26} \\
     \hline
     \end{tabular}
\end{table}
\begin{table}[!h]
\caption{
Mean absolute error (MAE) of PGDM$_{\rm MAE}$, PGDM$_{\rm GDE}$, and baselines on the street jazz dancing genre of the AIST++ dataset. Percent improvements over baselines are shown in the $\Delta$ MAE (\%) columns. Mean and standard deviation are taken across five samples. \label{tab:street_jazz_model}}
\centering
    \small
     \begin{tabular}{cc|ccccc|}
     \hline
     \multicolumn{2}{|c|}{\rule{0pt}{2ex}\multirow{2}{*}{\textbf{Model}}} & \multirow{2}{*}{\textbf{MAE (dB)}} & \textbf{$\Delta$ MAE (\%)} & \textbf{$\Delta$ MAE (\%)} & \rev{\textbf{$\Delta$ MAE (\%)}} & \textbf{$\Delta$ MAE (\%)} \\
     \multicolumn{2}{|c|}{} & & \textbf{vs. TimeGrad} & \textbf{vs. CSDI} & \rev{\textbf{vs. ARMD}} & \textbf{vs.} $\bm{\overline{w}=0}$\\
     \hline
     \multicolumn{2}{|c|}{\rule{0pt}{2ex}TimeGrad} & 1.65\std{0.0102} & - & - & -& - \\
     \hline
     \multicolumn{2}{|c|}{\rule{0pt}{2ex}CSDI} & 0.56\std{0.0054} & - & - & -& - \\
     \hline
     \multicolumn{2}{|c|}{\rule{0pt}{2ex}ARMD} & 7.57\std{0.0049} & - & - & - & - \\
     \hline
     \hline
     \multicolumn{1}{|c|}{\rule{0pt}{2ex}} & $\overline{w}=0$ & 0.52\std{0.0017} & 68.58\std{0.25} & 6.58\std{0.79} & 93.14\std{0.02} & - \\
     \cline{2-7}
     \multicolumn{1}{|c|}{\rule{0pt}{2ex}} & $\overline{w}=1$ & 0.48\std{0.0010} & 70.66\std{0.23} & 12.76\std{0.76} & 93.59\std{0.01} & 6.62\std{0.19} \\
     \cline{2-7}
     \multicolumn{1}{|c|}{\rule{0pt}{2ex}PGDM$_{\rm MAE}$} & $\overline{w}=2$ & \best{0.48\std{0.0008}} & \best{70.87\std{0.22}} & \best{13.39\std{0.80}} & \best{93.64\std{0.01}} & 7.29\std{0.18} \\
     \cline{2-7}
     \multicolumn{1}{|c|}{\rule{0pt}{2ex}} & $\overline{w}=3$ & 0.49\std{0.0009} & 70.23\std{0.22} & 11.49\std{0.85} & 93.50\std{0.01} & 5.26\std{0.20} \\
     \cline{2-7}
     \multicolumn{1}{|c|}{\rule{0pt}{2ex}} & $\overline{w}=4$ & 0.51\std{0.0009} & 69.07\std{0.23} & 8.03\std{0.91} & 93.25\std{0.01} & 1.56\std{0.22} \\
     \cline{2-7}
     \multicolumn{1}{|c|}{\rule{0pt}{2ex}} & $\overline{w}=5$ & 0.54\std{0.0009} & 67.53\std{0.25} & 3.45\std{0.95} & 92.91\std{0.01} & -3.34\std{0.25} \\
     \hline
     \hline
     \multicolumn{1}{|c|}{\rule{0pt}{2ex}} & $\overline{w}=0$ & 0.60\std{0.0016} & 63.85\std{0.26} & -7.49\std{1.02} & 92.11\std{0.02} & - \\
     \cline{2-7}
     \multicolumn{1}{|c|}{\rule{0pt}{2ex}} & $\overline{w}=1$ & 0.55\std{0.0007} & 66.66\std{0.23} & 0.88\std{0.99} & 92.72\std{0.01} & 7.79\std{0.19} \\
     \cline{2-7}
     \multicolumn{1}{|c|}{\rule{0pt}{2ex}PGDM$_{\rm GDE}$} & $\overline{w}=2$ & 0.54\std{0.0005} & 67.12\std{0.20} & 2.25\std{0.97} & 92.82\std{0.01} & \best{9.06\std{0.19}} \\
     \cline{2-7}
     \multicolumn{1}{|c|}{\rule{0pt}{2ex}} & $\overline{w}=3$ & 0.55\std{0.0005} & 66.41\std{0.21} & 0.12\std{0.96} & 92.67\std{0.01} & 7.08\std{0.17} \\
     \cline{2-7}
     \multicolumn{1}{|c|}{\rule{0pt}{2ex}} & $\overline{w}=4$ & 0.58\std{0.0004} & 65.08\std{0.22} & -3.82\std{0.97} & 92.38\std{0.01} & 3.41\std{0.21} \\
     \cline{2-7}
     \multicolumn{1}{|c|}{\rule{0pt}{2ex}} & $\overline{w}=5$ & 0.60\std{0.0005} & 63.39\std{0.22} & -8.84\std{1.00} & 92.01\std{0.01} & -1.25\std{0.21} \\
     \hline
     \end{tabular}
\end{table}
\begin{table}[!h]
\caption{
Mean absolute error (MAE) of PGDM$_{\rm MAE}$, PGDM$_{\rm GDE}$, and baselines on the krump dancing genre of the AIST++ dataset. Percent improvements over baselines are shown in the $\Delta$ MAE (\%) columns. Mean and standard deviation are taken across five samples. \label{tab:krump_model}}
\centering
    \small
     \begin{tabular}{cc|ccccc|}
     \hline
     \multicolumn{2}{|c|}{\rule{0pt}{2ex}\multirow{2}{*}{\textbf{Model}}} & \multirow{2}{*}{\textbf{MAE (dB)}} & \textbf{$\Delta$ MAE (\%)} & \textbf{$\Delta$ MAE (\%)} & \rev{\textbf{$\Delta$ MAE (\%)}} & \textbf{$\Delta$ MAE (\%)} \\
     \multicolumn{2}{|c|}{} & & \textbf{vs. TimeGrad} & \textbf{vs. CSDI} & \rev{\textbf{vs. ARMD}} & \textbf{vs.} $\bm{\overline{w}=0}$\\
    \hline
     \multicolumn{2}{|c|}{\rule{0pt}{2ex}TimeGrad} & 2.37\std{0.0067} & - & - & -& - \\
     \hline
     \multicolumn{2}{|c|}{\rule{0pt}{2ex}CSDI} & 0.77\std{0.0017} & - & - & -& - \\
     \hline
     \multicolumn{2}{|c|}{\rule{0pt}{2ex}ARMD} & 8.90\std{0.0046} & - & - & - & - \\
     \hline
     \hline
     \multicolumn{1}{|c|}{\rule{0pt}{2ex}} & $\overline{w}=0$ & 0.77\std{0.0016} & 67.56\std{0.13} & -0.04\std{0.31} & 91.36\std{0.02} & - \\
     \cline{2-7}
     \multicolumn{1}{|c|}{\rule{0pt}{2ex}} & $\overline{w}=1$ & 0.71\std{0.0014} & 70.27\std{0.13} & 8.30\std{0.31} & 92.08\std{0.01} & 8.33\std{0.08} \\
     \cline{2-7}
     \multicolumn{1}{|c|}{\rule{0pt}{2ex}PGDM$_{\rm MAE}$} & $\overline{w}=2$ & \best{0.70\std{0.0013}} & \best{70.40\std{0.13}} & \best{8.70\std{0.31}} & \best{92.11\std{0.01}} & 8.73\std{0.12} \\
     \cline{2-7}
     \multicolumn{1}{|c|}{\rule{0pt}{2ex}} & $\overline{w}=3$ & 0.73\std{0.0011} & 69.31\std{0.13} & 5.33\std{0.30} & 91.82\std{0.01} & 5.37\std{0.12} \\
     \cline{2-7}
     \multicolumn{1}{|c|}{\rule{0pt}{2ex}} & $\overline{w}=4$ & 0.77\std{0.0010} & 67.53\std{0.13} & -0.14\std{0.28} & 91.35\std{0.01} & -0.11\std{0.13} \\
     \cline{2-7}
     \multicolumn{1}{|c|}{\rule{0pt}{2ex}} & $\overline{w}=5$ & 0.82\std{0.0008} & 65.26\std{0.12} & -7.15\std{0.25} & 90.74\std{0.01} & -7.11\std{0.16} \\
     \hline
     \hline
     \multicolumn{1}{|c|}{\rule{0pt}{2ex}} & $\overline{w}=0$ & 0.88\std{0.0016} & 62.85\std{0.12} & -14.59\std{0.26} & 90.10\std{0.02} & - \\
     \cline{2-7}
     \multicolumn{1}{|c|}{\rule{0pt}{2ex}} & $\overline{w}=1$ & 0.79\std{0.0005} & 66.85\std{0.10} & -2.23\std{0.20} & 91.17\std{0.01} & \best{10.79\std{0.11}} \\
     \cline{2-7}
     \multicolumn{1}{|c|}{\rule{0pt}{2ex}PGDM$_{\rm GDE}$} & $\overline{w}=2$ & 0.79\std{0.0005} & 66.83\std{0.10} & -2.32\std{0.24} & 91.16\std{0.01} & 10.71\std{0.11} \\
     \cline{2-7}
     \multicolumn{1}{|c|}{\rule{0pt}{2ex}} & $\overline{w}=3$ & 0.81\std{0.0006} & 65.81\std{0.11} & -5.46\std{0.27} & 90.89\std{0.01} & 7.97\std{0.12} \\
     \cline{2-7}
     \multicolumn{1}{|c|}{\rule{0pt}{2ex}} & $\overline{w}=4$ & 0.85\std{0.0007} & 64.38\std{0.11} & -9.86\std{0.28} & 90.51\std{0.01} & 4.13\std{0.12} \\
     \cline{2-7}
     \multicolumn{1}{|c|}{\rule{0pt}{2ex}} & $\overline{w}=5$ & 0.88\std{0.0007} & 62.76\std{0.11} & -14.84\std{0.30} & 90.08\std{0.01} & -0.22\std{0.13} \\
     \hline
     \end{tabular}
\end{table}
\begin{table}[!h]
\caption{
Mean absolute error (MAE) of PGDM$_{\rm MAE}$, PGDM$_{\rm GDE}$, and baselines on the LA hip hop dancing genre of the AIST++ dataset. Percent improvements over baselines are shown in the $\Delta$ MAE (\%) columns. Mean and standard deviation are taken across five samples. \label{tab:la_hip_hop_model}}
\centering
    \small
     \begin{tabular}{cc|ccccc|}
     \hline
     \multicolumn{2}{|c|}{\rule{0pt}{2ex}\multirow{2}{*}{\textbf{Model}}} & \multirow{2}{*}{\textbf{MAE (dB)}} & \textbf{$\Delta$ MAE (\%)} & \textbf{$\Delta$ MAE (\%)} & \rev{\textbf{$\Delta$ MAE (\%)}} & \textbf{$\Delta$ MAE (\%)} \\
     \multicolumn{2}{|c|}{} & & \textbf{vs. TimeGrad} & \textbf{vs. CSDI} & \rev{\textbf{vs. ARMD}} & \textbf{vs.} $\bm{\overline{w}=0}$\\
    \hline
     \multicolumn{2}{|c|}{\rule{0pt}{2ex}TimeGrad} & 3.30\std{0.0157} & - & - & -& - \\
     \hline
     \multicolumn{2}{|c|}{\rule{0pt}{2ex}CSDI} & 0.78\std{0.0023} & - & - & -& - \\
     \hline
     \multicolumn{2}{|c|}{\rule{0pt}{2ex}ARMD} & 8.51\std{0.0032} & - & - & - & - \\
     \hline
     \hline
     \multicolumn{1}{|c|}{\rule{0pt}{2ex}} & $\overline{w}=0$ & 0.80\std{0.0010} & 75.83\std{0.13} & -1.95\std{0.36} & 90.62\std{0.01} & - \\
     \cline{2-7}
     \multicolumn{1}{|c|}{\rule{0pt}{2ex}} & $\overline{w}=1$ & \best{0.74\std{0.0006}} & \best{77.58\std{0.11}} & \best{5.46\std{0.33}} & \best{91.30\std{0.01}} & 7.26\std{0.06} \\
     \cline{2-7}
     \multicolumn{1}{|c|}{\rule{0pt}{2ex}PGDM$_{\rm MAE}$} & $\overline{w}=2$ & 0.75\std{0.0006} & 77.41\std{0.11} & 4.71\std{0.33} & 91.23\std{0.01} & 6.53\std{0.08} \\
     \cline{2-7}
     \multicolumn{1}{|c|}{\rule{0pt}{2ex}} & $\overline{w}=3$ & 0.77\std{0.0007} & 76.61\std{0.11} & 1.37\std{0.35} & 90.92\std{0.01} & 3.25\std{0.10} \\
     \cline{2-7}
     \multicolumn{1}{|c|}{\rule{0pt}{2ex}} & $\overline{w}=4$ & 0.81\std{0.0007} & 75.48\std{0.11} & -3.41\std{0.36} & 90.48\std{0.01} & -1.44\std{0.11} \\
     \cline{2-7}
     \multicolumn{1}{|c|}{\rule{0pt}{2ex}} & $\overline{w}=5$ & 0.85\std{0.0007} & 74.12\std{0.12} & -9.14\std{0.37} & 89.96\std{0.01} & -7.05\std{0.12} \\
     \hline
     \hline
     \multicolumn{1}{|c|}{\rule{0pt}{2ex}} & $\overline{w}=0$ & 0.90\std{0.0009} & 72.82\std{0.15} & -14.62\std{0.34} & 89.45\std{0.01} & - \\
     \cline{2-7}
     \multicolumn{1}{|c|}{\rule{0pt}{2ex}} & $\overline{w}=1$ & 0.81\std{0.0009} & 75.42\std{0.13} & -3.65\std{0.32} & 90.46\std{0.01} & \best{9.57\std{0.03}} \\
     \cline{2-7}
     \multicolumn{1}{|c|}{\rule{0pt}{2ex}PGDM$_{\rm GDE}$} & $\overline{w}=2$ & 0.82\std{0.0006} & 75.28\std{0.13} & -4.27\std{0.33} & 90.41\std{0.01} & 9.03\std{0.05} \\
     \cline{2-7}
     \multicolumn{1}{|c|}{\rule{0pt}{2ex}} & $\overline{w}=3$ & 0.85\std{0.0005} & 74.34\std{0.13} & -8.23\std{0.33} & 90.04\std{0.01} & 5.58\std{0.07} \\
     \cline{2-7}
     \multicolumn{1}{|c|}{\rule{0pt}{2ex}} & $\overline{w}=4$ & 0.89\std{0.0007} & 73.00\std{0.14} & -13.87\std{0.40} & 89.52\std{0.01} & 0.66\std{0.10} \\
     \cline{2-7}
     \multicolumn{1}{|c|}{\rule{0pt}{2ex}} & $\overline{w}=5$ & 0.94\std{0.0009} & 71.44\std{0.15} & -20.47\std{0.44} & 88.92\std{0.01} & -5.10\std{0.10} \\
     \hline
     \end{tabular}
\end{table}
\begin{table}[!h]
\caption{
Mean absolute error (MAE) of PGDM$_{\rm MAE}$, PGDM$_{\rm GDE}$, and baselines on the lock dancing genre of the AIST++ dataset. Percent improvements over baselines are shown in the $\Delta$ MAE (\%) columns. Mean and standard deviation are taken across five samples. \label{tab:lock_model}}
\centering
    \small
     \begin{tabular}{cc|ccccc|}
     \hline
     \multicolumn{2}{|c|}{\rule{0pt}{2ex}\multirow{2}{*}{\textbf{Model}}} & \multirow{2}{*}{\textbf{MAE (dB)}} & \textbf{$\Delta$ MAE (\%)} & \textbf{$\Delta$ MAE (\%)} & \rev{\textbf{$\Delta$ MAE (\%)}} & \textbf{$\Delta$ MAE (\%)} \\
     \multicolumn{2}{|c|}{} & & \textbf{vs. TimeGrad} & \textbf{vs. CSDI} & \rev{\textbf{vs. ARMD}} & \textbf{vs.} $\bm{\overline{w}=0}$\\
    \hline
     \multicolumn{2}{|c|}{\rule{0pt}{2ex}TimeGrad} & 3.03\std{0.0086} & - & - & - & - \\
     \hline
     \multicolumn{2}{|c|}{\rule{0pt}{2ex}CSDI} & 0.76\std{0.0028} & - & - & - & - \\
     \hline
     \multicolumn{2}{|c|}{\rule{0pt}{2ex}ARMD} & 7.65\std{0.0031} & - & - & - & - \\
     \hline
     \hline
     \multicolumn{1}{|c|}{\rule{0pt}{2ex}} & $\overline{w}=0$ & 0.72\std{0.0011} & 76.12\std{0.10} & 4.24\std{0.23} & 90.53\std{0.01} & - \\
     \cline{2-7}
     \multicolumn{1}{|c|}{\rule{0pt}{2ex}} & $\overline{w}=1$ & \best{0.67\std{0.0006}} & \best{78.05\std{0.07}} & \best{11.98\std{0.28}} & \best{91.29\std{0.01}} & 8.07\std{0.12} \\
     \cline{2-7}
     \multicolumn{1}{|c|}{\rule{0pt}{2ex}PGDM$_{\rm MAE}$} & $\overline{w}=2$ & 0.68\std{0.0003} & 77.70\std{0.06} & 10.60\std{0.32} & 91.15\std{0.01} & 6.63\std{0.16} \\
     \cline{2-7}
     \multicolumn{1}{|c|}{\rule{0pt}{2ex}} & $\overline{w}=3$ & 0.71\std{0.0007} & 76.67\std{0.07} & 6.47\std{0.29} & 90.75\std{0.01} & 2.32\std{0.13} \\
     \cline{2-7}
     \multicolumn{1}{|c|}{\rule{0pt}{2ex}} & $\overline{w}=4$ & 0.75\std{0.0008} & 75.26\std{0.09} & 0.81\std{0.27} & 90.19\std{0.01} & -3.59\std{0.12} \\
     \cline{2-7}
     \multicolumn{1}{|c|}{\rule{0pt}{2ex}} & $\overline{w}=5$ & 0.80\std{0.0011} & 73.57\std{0.09} & -5.98\std{0.38} & 89.51\std{0.02} & -10.68\std{0.24} \\
     \hline
     \hline
     \multicolumn{1}{|c|}{\rule{0pt}{2ex}} & $\overline{w}=0$ & 0.78\std{0.0017} & 74.17\std{0.09} & -3.57\std{0.29} & 89.75\std{0.03} & - \\
     \cline{2-7}
     \multicolumn{1}{|c|}{\rule{0pt}{2ex}} & $\overline{w}=1$ & 0.71\std{0.0023} & 76.69\std{0.12} & 6.54\std{0.23} & 90.75\std{0.03} & \best{9.76\std{0.22}} \\
     \cline{2-7}
     \multicolumn{1}{|c|}{\rule{0pt}{2ex}PGDM$_{\rm GDE}$} & $\overline{w}=2$ & 0.72\std{0.0020} & 76.43\std{0.11} & 5.50\std{0.34} & 90.65\std{0.03} & 8.76\std{0.30} \\
     \cline{2-7}
     \multicolumn{1}{|c|}{\rule{0pt}{2ex}} & $\overline{w}=3$ & 0.75\std{0.0015} & 75.37\std{0.11} & 1.23\std{0.22} & 90.23\std{0.02} & 4.64\std{0.18} \\
     \cline{2-7}
     \multicolumn{1}{|c|}{\rule{0pt}{2ex}} & $\overline{w}=4$ & 0.79\std{0.0013} & 73.83\std{0.10} & -4.95\std{0.26} & 89.62\std{0.02} & -1.33\std{0.13} \\
     \cline{2-7}
     \multicolumn{1}{|c|}{\rule{0pt}{2ex}} & $\overline{w}=5$ & 0.85\std{0.0008} & 71.88\std{0.09} & -12.73\std{0.35} & 88.85\std{0.01} & -8.85\std{0.17} \\
     \hline
     \end{tabular}
\end{table}
\begin{table}[!h]
\caption{
Mean absolute error (MAE) of PGDM$_{\rm MAE}$, PGDM$_{\rm GDE}$, and baselines on the middle hip hop dancing genre of the AIST++ dataset. Percent improvements over baselines are shown in the $\Delta$ MAE (\%) columns. Mean and standard deviation are taken across five samples. \label{tab:middle_hip_hop_model}}
\centering
    \small
     \begin{tabular}{cc|ccccc|}
     \hline
     \multicolumn{2}{|c|}{\rule{0pt}{2ex}\multirow{2}{*}{\textbf{Model}}} & \multirow{2}{*}{\textbf{MAE (dB)}} & \textbf{$\Delta$ MAE (\%)} & \textbf{$\Delta$ MAE (\%)} & \rev{\textbf{$\Delta$ MAE (\%)}} & \textbf{$\Delta$ MAE (\%)} \\
     \multicolumn{2}{|c|}{} & & \textbf{vs. TimeGrad} & \textbf{vs. CSDI} & \rev{\textbf{vs. ARMD}} & \textbf{vs.} $\bm{\overline{w}=0}$\\
    \hline
     \multicolumn{2}{|c|}{\rule{0pt}{2ex}TimeGrad} & 3.35\std{0.0113} & - & - & - & - \\
     \hline
     \multicolumn{2}{|c|}{\rule{0pt}{2ex}CSDI} & 1.04\std{0.0048} & - & - & - & - \\
     \hline
     \multicolumn{2}{|c|}{\rule{0pt}{2ex}ARMD} & 8.98\std{0.0051} & - & - & - & - \\
     \hline
     \hline
     \multicolumn{1}{|c|}{\rule{0pt}{2ex}} & $\overline{w}=0$ & 0.88\std{0.0014} & 73.79\std{0.11} & 15.40\std{0.46} & 90.21\std{0.01} & - \\
     \cline{2-7}
     \multicolumn{1}{|c|}{\rule{0pt}{2ex}} & $\overline{w}=1$ & \best{0.82\std{0.0009}} & \best{75.40\std{0.10}} & \best{20.61\std{0.45}} & \best{90.82\std{0.01}} & 6.15\std{0.11} \\
     \cline{2-7}
     \multicolumn{1}{|c|}{\rule{0pt}{2ex}PGDM$_{\rm MAE}$} & $\overline{w}=2$ & 0.83\std{0.0013} & 75.11\std{0.09} & 19.68\std{0.43} & 90.71\std{0.01} & 5.05\std{0.11} \\
     \cline{2-7}
     \multicolumn{1}{|c|}{\rule{0pt}{2ex}} & $\overline{w}=3$ & 0.87\std{0.0008} & 74.14\std{0.09} & 16.54\std{0.45} & 90.34\std{0.01} & 1.34\std{0.15} \\
     \cline{2-7}
     \multicolumn{1}{|c|}{\rule{0pt}{2ex}} & $\overline{w}=4$ & 0.91\std{0.0011} & 72.74\std{0.10} & 12.03\std{0.50} & 89.82\std{0.01} & -3.98\std{0.20} \\
     \cline{2-7}
     \multicolumn{1}{|c|}{\rule{0pt}{2ex}} & $\overline{w}=5$ & 0.97\std{0.0010} & 71.07\std{0.10} & 6.64\std{0.49} & 89.20\std{0.01} & -10.36\std{0.17} \\
     \hline
     \hline
     \multicolumn{1}{|c|}{\rule{0pt}{2ex}} & $\overline{w}=0$ & 1.05\std{0.0034} & 68.67\std{0.13} & -1.11\std{0.37} & 88.30\std{0.03} & - \\
     \cline{2-7}
     \multicolumn{1}{|c|}{\rule{0pt}{2ex}} & $\overline{w}=1$ & 0.95\std{0.0038} & 71.69\std{0.11} & 8.64\std{0.25} & 89.43\std{0.04} & \best{9.64\std{0.20}} \\
     \cline{2-7}
     \multicolumn{1}{|c|}{\rule{0pt}{2ex}PGDM$_{\rm GDE}$} & $\overline{w}=2$ & 0.95\std{0.0031} & 71.56\std{0.05} & 8.21\std{0.32} & 89.38\std{0.03} & 9.22\std{0.31} \\
     \cline{2-7}
     \multicolumn{1}{|c|}{\rule{0pt}{2ex}} & $\overline{w}=3$ & 0.99\std{0.0026} & 70.50\std{0.06} & 4.78\std{0.36} & 88.98\std{0.03} & 5.83\std{0.25} \\
     \cline{2-7}
     \multicolumn{1}{|c|}{\rule{0pt}{2ex}} & $\overline{w}=4$ & 1.04\std{0.0022} & 69.06\std{0.07} & 0.16\std{0.45} & 88.45\std{0.02} & 1.25\std{0.35} \\
     \cline{2-7}
     \multicolumn{1}{|c|}{\rule{0pt}{2ex}} & $\overline{w}=5$ & 1.09\std{0.0027} & 67.37\std{0.11} & -5.30\std{0.58} & 87.82\std{0.02} & -4.14\std{0.36} \\
     \hline
     \end{tabular}
\end{table}
\begin{table}[!h]
\caption{
Mean absolute error (MAE) of PGDM$_{\rm MAE}$, PGDM$_{\rm GDE}$, and baselines on the pop dancing genre of the AIST++ dataset. Percent improvements over baselines are shown in the $\Delta$ MAE (\%) columns. Mean and standard deviation are taken across five samples. \label{tab:pop_model}}
\centering
    \small
     \begin{tabular}{cc|ccccc|}
     \hline
     \multicolumn{2}{|c|}{\rule{0pt}{2ex}\multirow{2}{*}{\textbf{Model}}} & \multirow{2}{*}{\textbf{MAE (dB)}} & \textbf{$\Delta$ MAE (\%)} & \textbf{$\Delta$ MAE (\%)} & \rev{\textbf{$\Delta$ MAE (\%)}} & \textbf{$\Delta$ MAE (\%)} \\
     \multicolumn{2}{|c|}{} & & \textbf{vs. TimeGrad} & \textbf{vs. CSDI} & \rev{\textbf{vs. ARMD}} & \textbf{vs.} $\bm{\overline{w}=0}$\\
    \hline
     \multicolumn{2}{|c|}{\rule{0pt}{2ex}TimeGrad} & 2.55\std{0.0105} & - & - & - & - \\
     \hline
     \multicolumn{2}{|c|}{\rule{0pt}{2ex}CSDI} & 0.70\std{0.0053} & - & - & - & - \\
     \hline
     \multicolumn{2}{|c|}{\rule{0pt}{2ex}ARMD} & 6.43\std{0.0019} & - & - & - & - \\
     \hline
     \hline
     \multicolumn{1}{|c|}{\rule{0pt}{2ex}} & $\overline{w}=0$ & 0.47\std{0.0008} & 81.57\std{0.09} & 32.95\std{0.58} & 92.69\std{0.01} & - \\
     \cline{2-7}
     \multicolumn{1}{|c|}{\rule{0pt}{2ex}} & $\overline{w}=1$ & \best{0.44\std{0.0018}} & \best{82.60\std{0.08}} & \best{36.71\std{0.68}} & \best{93.10\std{0.03}} & 5.60\std{0.38} \\
     \cline{2-7}
     \multicolumn{1}{|c|}{\rule{0pt}{2ex}PGDM$_{\rm MAE}$} & $\overline{w}=2$ & 0.45\std{0.0018} & 82.40\std{0.07} & 35.97\std{0.67} & 93.02\std{0.03} & 4.49\std{0.37} \\
     \cline{2-7}
     \multicolumn{1}{|c|}{\rule{0pt}{2ex}} & $\overline{w}=3$ & 0.47\std{0.0015} & 81.64\std{0.09} & 33.20\std{0.69} & 92.72\std{0.02} & 0.37\std{0.36} \\
     \cline{2-7}
     \multicolumn{1}{|c|}{\rule{0pt}{2ex}} & $\overline{w}=4$ & 0.50\std{0.0009} & 80.52\std{0.10} & 29.14\std{0.66} & 92.28\std{0.01} & -5.69\std{0.23} \\
     \cline{2-7}
     \multicolumn{1}{|c|}{\rule{0pt}{2ex}} & $\overline{w}=5$ & 0.53\std{0.0007} & 79.07\std{0.10} & 23.88\std{0.67} & 91.70\std{0.01} & -13.54\std{0.23} \\
     \hline
     \hline
     \multicolumn{1}{|c|}{\rule{0pt}{2ex}} & $\overline{w}=0$ & 0.52\std{0.0013} & 79.44\std{0.09} & 25.21\std{0.57} & 91.85\std{0.02} & - \\
     \cline{2-7}
     \multicolumn{1}{|c|}{\rule{0pt}{2ex}} & $\overline{w}=1$ & 0.48\std{0.0018} & 81.01\std{0.12} & 30.92\std{0.50} & 92.47\std{0.03} & \best{7.64\std{0.39}} \\
     \cline{2-7}
     \multicolumn{1}{|c|}{\rule{0pt}{2ex}PGDM$_{\rm GDE}$} & $\overline{w}=2$ & 0.49\std{0.0016} & 80.96\std{0.12} & 30.73\std{0.61} & 92.45\std{0.03} & 7.39\std{0.48} \\
     \cline{2-7}
     \multicolumn{1}{|c|}{\rule{0pt}{2ex}} & $\overline{w}=3$ & 0.50\std{0.0014} & 80.29\std{0.13} & 28.31\std{0.59} & 92.19\std{0.02} & 4.15\std{0.43} \\
     \cline{2-7}
     \multicolumn{1}{|c|}{\rule{0pt}{2ex}} & $\overline{w}=4$ & 0.53\std{0.0011} & 79.16\std{0.09} & 24.20\std{0.57} & 91.74\std{0.02} & -1.35\std{0.45} \\
     \cline{2-7}
     \multicolumn{1}{|c|}{\rule{0pt}{2ex}} & $\overline{w}=5$ & 0.57\std{0.0015} & 77.76\std{0.11} & 19.11\std{0.51} & 91.18\std{0.02} & -8.15\std{0.50} \\
     \hline
     \end{tabular}
\end{table}
\begin{table}[!h]
\caption{
Mean absolute error (MAE) of PGDM$_{\rm MAE}$, PGDM$_{\rm GDE}$, and baselines on the wack dancing genre of the AIST++ dataset. Percent improvements over baselines are shown in the $\Delta$ MAE (\%) columns. Mean and standard deviation are taken across five samples. \label{tab:wack_model}}
\centering
    \small
     \begin{tabular}{cc|ccccc|}
     \hline
     \multicolumn{2}{|c|}{\rule{0pt}{2ex}\multirow{2}{*}{\textbf{Model}}} & \multirow{2}{*}{\textbf{MAE (dB)}} & \textbf{$\Delta$ MAE (\%)} & \textbf{$\Delta$ MAE (\%)} & \rev{\textbf{$\Delta$ MAE (\%)}} & \textbf{$\Delta$ MAE (\%)} \\
     \multicolumn{2}{|c|}{} & & \textbf{vs. TimeGrad} & \textbf{vs. CSDI} & \rev{\textbf{vs. ARMD}} & \textbf{vs.} $\bm{\overline{w}=0}$\\
     \hline
     \multicolumn{2}{|c|}{\rule{0pt}{2ex}TimeGrad} & 1.03\std{0.0047} & - & - & - & - \\
     \hline
     \multicolumn{2}{|c|}{\rule{0pt}{2ex}CSDI} & 0.44\std{0.0042} & - & - & - & - \\
     \hline
     \multicolumn{2}{|c|}{\rule{0pt}{2ex}ARMD} & 3.39\std{0.0026} & - & - & - & - \\
     \hline
     \hline
     \multicolumn{1}{|c|}{\rule{0pt}{2ex}} & $\overline{w}=0$ & 0.44\std{0.0044} & 57.55\std{0.45} & 0.61\std{1.39} & 87.15\std{0.12} & - \\
     \cline{2-7}
     \multicolumn{1}{|c|}{\rule{0pt}{2ex}} & $\overline{w}=1$ & \best{0.39\std{0.0028}} & \best{61.54\std{0.28}} & \best{9.96\std{1.37}} & \best{88.36\std{0.08}} & \best{9.40\std{0.64}} \\
     \cline{2-7}
     \multicolumn{1}{|c|}{\rule{0pt}{2ex}PGDM$_{\rm MAE}$} & $\overline{w}=2$ & 0.40\std{0.0027} & 60.53\std{0.26} & 7.59\std{1.36} & 88.05\std{0.07} & 7.01\std{0.75} \\
     \cline{2-7}
     \multicolumn{1}{|c|}{\rule{0pt}{2ex}} & $\overline{w}=3$ & 0.43\std{0.0018} & 58.40\std{0.20} & 2.59\std{1.19} & 87.41\std{0.05} & 	1.99\std{0.69} \\
     \cline{2-7}
     \multicolumn{1}{|c|}{\rule{0pt}{2ex}} & $\overline{w}=4$ & 0.45\std{0.0015} & 55.70\std{0.20} & -3.71\std{1.06} & 86.59\std{0.05} & -4.36\std{0.76} \\
     \cline{2-7}
     \multicolumn{1}{|c|}{\rule{0pt}{2ex}} & $\overline{w}=5$ & 0.48\std{0.0019} & 52.72\std{0.25} & -10.69\std{0.96} & 85.69\std{0.05} & -11.38\std{0.83} \\
     \hline
     \hline
     \multicolumn{1}{|c|}{\rule{0pt}{2ex}} & $\overline{w}=0$ & 0.49\std{0.0054} & 52.11\std{0.44} & -12.12\std{1.81} & 85.50\std{0.16} & - \\
     \cline{2-7}
     \multicolumn{1}{|c|}{\rule{0pt}{2ex}} & $\overline{w}=1$ & 0.45\std{0.0079} & 56.17\std{0.63} & -2.63\std{2.45} & 86.73\std{0.23} & 8.47\std{0.75} \\
     \cline{2-7}
     \multicolumn{1}{|c|}{\rule{0pt}{2ex}PGDM$_{\rm GDE}$} & $\overline{w}=2$ & 0.46\std{0.0079} & 55.41\std{0.61} & -4.40\std{2.37} & 86.50\std{0.24} & 6.89\std{0.83} \\
     \cline{2-7}
     \multicolumn{1}{|c|}{\rule{0pt}{2ex}} & $\overline{w}=3$ & 0.48\std{0.0075} & 53.20\std{0.57} & -9.58\std{2.32} & 85.83\std{0.22} & 	2.28\std{0.66} \\
     \cline{2-7}
     \multicolumn{1}{|c|}{\rule{0pt}{2ex}} & $\overline{w}=4$ & 0.51\std{0.0059} & 50.33\std{0.45} & -16.29\std{2.06} & 84.96\std{0.18} & -3.72\std{0.28} \\
     \cline{2-7}
     \multicolumn{1}{|c|}{\rule{0pt}{2ex}} & $\overline{w}=5$ & 0.54\std{0.0046} & 46.95\std{0.30} & -24.20\std{1.83} & 83.94\std{0.14} & 	-10.78\std{0.55} \\
     \hline
     \end{tabular}
\end{table}
\begin{table}[!h]
\caption{
Continuous ranked probability score (CRPS) of PGDM$_{\rm MAE}$ and PGDM$_{\rm GDE}$ for both the visual field prediction (UWHVF dataset) and the human motion prediction (10 dance genres from the AIST++ dataset) case studies. Mean and standard deviation are taken across three seeds\label{tab:crps_guidance}}
\centering
    \small
    \begin{tabular}{c}
     \begin{tabular}{c|cccccc|}
     \cline{2-7}
     & \multicolumn{6}{|c|}{\rule{0pt}{2ex}\textbf{PGDM$_{\rm MAE}$}}\\
     & $\bm{\overline{w}=0}$ & $\bm{\overline{w}=1}$ & $\bm{\overline{w}=2}$ & $\bm{\overline{w}=3}$ & $\bm{\overline{w}=4}$ & $\bm{\overline{w}=5}$\\
     \hline
     \multicolumn{1}{|c|}{\rule{0pt}{2ex}\textbf{UWHVF}} & 0.794\std{0.0122} & \best{0.777\std{0.0040}} & 0.781\std{0.0040} & 0.786\std{0.0040} & 0.790\std{0.0039} & 0.794\std{0.0038} \\
     \hline
     \hline
     \multicolumn{1}{|c|}{\rule{0pt}{2ex}\textbf{Break}} & 0.039\std{0.0001} & \best{0.037\std{<0.0001}} & \best{0.037\std{0.0001}} & 0.040\std{0.0002} & 0.043\std{0.0002} & 0.046\std{0.0002} \\
     \hline
     \multicolumn{1}{|c|}{\rule{0pt}{2ex}\textbf{House}} & 0.077\std{0.0004} & \best{0.073\std{0.0002}} & 0.075\std{0.0001} & 0.080\std{0.0001} & 0.085\std{0.0001} & 0.092\std{0.0001}\\
     \hline
     \multicolumn{1}{|c|}{\rule{0pt}{2ex}\textbf{Ballet Jazz}} & 0.039\std{0.0001} & \best{0.035\std{0.0002}} & 0.037\std{0.0002} & 0.040\std{0.0003} & 0.046\std{0.0003} & 0.051\std{0.0003} \\
     \hline
     \multicolumn{1}{|c|}{\rule{0pt}{2ex}\textbf{Street Jazz}} & 0.050\std{0.0002} & \best{0.047\std{0.0002}} & \best{0.047\std{0.0002}} & 0.049\std{0.0002} & 0.052\std{0.0003} & 0.055\std{0.0003} \\
     \hline
     \multicolumn{1}{|c|}{\rule{0pt}{2ex}\textbf{Krump}} & 0.079\std{0.0003} & \best{0.070\std{0.0003}} & 0.071\std{0.0003} & 0.076\std{0.0003} & 0.083\std{0.0003} & 0.090\std{0.0003} \\
     \hline
     \multicolumn{1}{|c|}{\rule{0pt}{2ex}\textbf{LA Hip Hop}} & 0.075\std{0.0006} & \best{0.071\std{0.0005}} & 0.073\std{0.0005} & 0.078\std{0.0005} & 0.083\std{0.0005} & 0.089\std{0.0005} \\
     \hline
     \multicolumn{1}{|c|}{\rule{0pt}{2ex}\textbf{Lock}} & 0.068\std{0.0002} & \best{0.063\std{0.0001}} & 0.066\std{0.0002} & 0.070\std{0.0002} & 0.076\std{0.0001} & 0.083\std{0.0001} \\
     \hline
     \multicolumn{1}{|c|}{\rule{0pt}{2ex}\textbf{Middle Hip Hop}} & 0.085\std{0.0002} & \best{0.081\std{0.0003}} & 0.084\std{0.0002} & 0.089\std{0.0002} & 0.097\std{0.0001} & 0.104\std{0.0001} \\
     \hline
     \multicolumn{1}{|c|}{\rule{0pt}{2ex}\textbf{Pop}} & 0.046\std{0.0002} & \best{0.045\std{0.0002}} & 0.046\std{0.0002} & 0.048\std{0.0002} & 0.052\std{0.0002} & 0.057\std{0.0002} \\
     \hline
     \multicolumn{1}{|c|}{\rule{0pt}{2ex}\textbf{Wack}} & 0.043\std{0.0002} & \best{0.039\std{0.0003}} & 0.042\std{0.0002} & 0.046\std{0.0001} & 0.052\std{0.0001} & 0.057\std{0.0001} \\
     \hline
     \end{tabular}
     \\
     \\
      \begin{tabular}{c|cccccc|}
     \cline{2-7}
     & \multicolumn{6}{|c|}{\rule{0pt}{2ex}\textbf{PGDM$_{\rm GDE}$}}\\
     & $\bm{\overline{w}=0}$ & $\bm{\overline{w}=1}$ & $\bm{\overline{w}=2}$ & $\bm{\overline{w}=3}$ & $\bm{\overline{w}=4}$ & $\bm{\overline{w}=5}$\\
     \hline
     \multicolumn{1}{|c|}{\rule{0pt}{2ex}\textbf{UWHVF}} & 1.887\std{0.0119} & 0.927\std{0.0028} & 0.887\std{0.0024} & 0.857\std{0.0020} & 0.837\std{0.0024} & \best{0.825\std{0.0032}} \\
     \hline
     \hline
     \multicolumn{1}{|c|}{\rule{0pt}{2ex}\textbf{Break}} & 0.049\std{0.0001} & \best{0.043\std{0.0001}} & \best{0.043\std{0.0001}} & 0.046\std{0.0001} & 0.051\std{0.0001} & 0.056\std{0.0001} \\
     \hline
     \multicolumn{1}{|c|}{\rule{0pt}{2ex}\textbf{House}} & 0.085\std{0.0005} & \best{0.079\std{0.0003}} & 0.080\std{0.0002} & 0.085\std{0.0001} & 0.090\std{0.0001} & 0.095\std{0.0001} \\
     \hline
     \multicolumn{1}{|c|}{\rule{0pt}{2ex}\textbf{Ballet Jazz}} & 0.044\std{0.0002} & \best{0.039\std{0.0002}} & 0.040\std{0.0002} & 0.045\std{0.0002} & 0.051\std{0.0003} & 0.058\std{0.0003} \\
     \hline
     \multicolumn{1}{|c|}{\rule{0pt}{2ex}\textbf{Street Jazz}} & 0.055\std{0.0002} & \best{0.051\std{0.0002}} & 0.052\std{0.0001} & 0.054\std{0.0001} & 0.057\std{0.0001} & 0.061\std{0.0001} \\
     \hline
     \multicolumn{1}{|c|}{\rule{0pt}{2ex}\textbf{Krump}} & 0.091\std{0.0004} & \best{0.078\std{0.0003}} & 0.080\std{0.0004} & 0.085\std{0.0004} & 0.091\std{0.0004} & 0.097\std{0.0004} \\
     \hline
     \multicolumn{1}{|c|}{\rule{0pt}{2ex}\textbf{LA Hip Hop}} & 0.083\std{0.0007} & \best{0.077\std{0.0005}} & 0.079\std{0.0004} & 0.084\std{0.0004} & 0.090\std{0.0004} & 0.097\std{0.0004} \\
     \hline
     \multicolumn{1}{|c|}{\rule{0pt}{2ex}\textbf{Lock}} & 0.073\std{0.0001} & \best{0.066\std{0.0001}} & 0.069\std{<0.0001} & 0.074\std{0.0001} & 0.080\std{<0.0001} & 0.087\std{0.0001} \\
     \hline
     \multicolumn{1}{|c|}{\rule{0pt}{2ex}\textbf{Middle Hip Hop}} & 0.097\std{0.0003} & \best{0.089\std{0.0003}} & 0.091\std{0.0003} & 0.097\std{0.0002} & 0.104\std{0.0002} & 0.111\std{0.0002} \\
     \hline
     \multicolumn{1}{|c|}{\rule{0pt}{2ex}\textbf{Pop}} & 0.051\std{0.0002} & \best{0.047\std{0.0002}} & 0.048\std{0.0002} & 0.051\std{0.0002} & 0.054\std{0.0002} & 0.059\std{0.0002} \\
     \hline
     \multicolumn{1}{|c|}{\rule{0pt}{2ex}\textbf{Wack}} & 0.049\std{0.0002} & \best{0.044\std{0.0002}} & 0.046\std{0.0001} & 0.050\std{0.0002} & 0.055\std{0.0002} & 0.060\std{0.0002} \\
     \hline
     \end{tabular}
     \end{tabular}
\end{table}
\clearpage
\section{Ablations}\label{app:ablations}
\rev{We report ablations evaluating the impact of pattern mixing, dynamic guidance and mixing scales, and the pattern prediction model here. For these ablations, we fix either or both the maximum guidance scale $\overline{w}$ and the maximum mixing scale $\overline{w^*}$ to the optimal choice reported in Table~\ref{tab:wboth} in Appendix~\ref{app:implementation}. We evaluate performance with both MAE and CRPS$_{\rm SUM}$.

\rev{\textbf{Impact of pattern mixing.} In Algorithm~\ref{alg:inference}, the final pattern mixing step mixes the deterministic prediction from the pattern predictor with the probabilistic prediction from the denoiser. The maximum mixing scale $\overline{w^*}$ determines the strength of the pattern signal compared to the denoiser signal in the final forecast. In Tables~\ref{tab:abl:mix_mae} and~\ref{tab:abl:mix_crps}, we study the effect of pattern mixing by holding $\overline{w}$ constant and varying the mixing scale $\overline{w^*}$. On the UWHVF dataset, we find that a strong mixing signal provides a significant benefit to the prediction. In contrast, on the AIST++ dataset, a weaker mixing signal of 0.0 or 0.2 is preferable, and an excessive mixing scale can degrade performance. This suggests that the pattern prediction model itself makes much more accurate predictions on UWHVF. This observation highlights one benefit of the explicit pattern modeling used by PGDM. On a dataset as small as UWHVF, it is challenging to reliably learn the underlying distribution, as demonstrated by the high CRPS across all models on UWHVF in Table~\ref{tab:diff_crps}. In such low-data regimes, predicting in the low-dimensional pattern space is more sample-efficient, allowing the pattern prediction model to make strong point forecasts even when distributional estimation is unreliable. In this case, a designer may opt to use a higher mixing scale, leading to lowered MAE. We accomplish exactly this in Table~\ref{tab:diff_mae}.}

\begin{table}[!b]
\caption{
\rev{Impact of maximum mixing scale $\overline{w^*}$ on mean absolute error (MAE) of PGDM$_{\rm MAE}$ and PGDM$_{\rm GDE}$. MAE is reported on UWHVF and all ten genres of the AIST++ dataset with fixed $\overline{w}$. Mean and standard deviation are taken across five samples.}\label{tab:abl:mix_mae}}
\centering
    \small
    \begin{tabular}{c}
     \begin{tabular}{c|cccccc|}
     \cline{2-7}
     \rule{0pt}{2.5ex} & \multicolumn{6}{|c|}{\textbf{PGDM$_{\rm MAE}$ with Maximum Mixing Scale $\bm{\overline{w^*}} =$}}\\
    & \textbf{0.0} & \textbf{0.2} & \textbf{0.4} & \textbf{0.6} & \textbf{0.8} & \textbf{1.0} \\
    \hline
    \hline
    \multicolumn{1}{|c|}{\rule{0pt}{2ex}\textbf{UWHVF}} & 3.68\std{0.0415} & 3.45\std{0.0352} & 3.26\std{0.0293} & 3.11\std{0.0230} & 3.01\std{0.0168} & \best{2.96\std{0.0117}} \\
    \hline
    \hline
    \multicolumn{1}{|c|}{\rule{0pt}{2ex}\textbf{Break}} & 0.39\std{0.0011} & \best{0.38\std{0.0009}} & 0.41\std{0.0007} & 0.48\std{0.0006} & 0.56\std{0.0006} & 0.65\std{0.0006} \\
    \hline
    \multicolumn{1}{|c|}{\rule{0pt}{2ex}\textbf{House}} & \best{0.74\std{0.0012}} & \best{0.74\std{0.0014}} & 0.83\std{0.0012} & 0.97\std{0.0010} & 1.14\std{0.0009} & 1.34\std{0.0008} \\
    \hline
    \multicolumn{1}{|c|}{\rule{0pt}{2ex}\textbf{Ballet Jazz}} & 0.39\std{0.0002} & \best{0.38\std{0.0002}} & 0.40\std{0.0002} & 0.45\std{0.0002} & 0.52\std{0.0003} & 0.59\std{0.0002} \\
    \hline
    \multicolumn{1}{|c|}{\rule{0pt}{2ex}\textbf{Street Jazz}} & \best{0.48\std{0.0008}} & 0.50\std{0.0008} & 0.57\std{0.0008} & 0.68\std{0.0007} & 0.80\std{0.0006} & 0.94\std{0.0005} \\
    \hline
    \multicolumn{1}{|c|}{\rule{0pt}{2ex}\textbf{Krump}} & \best{0.70\std{0.0013}} & 0.71\std{0.0007} & 0.80\std{0.0004} & 0.92\std{0.0002} & 1.08\std{0.0002} & 1.25\std{0.0002} \\
    \hline
    \multicolumn{1}{|c|}{\rule{0pt}{2ex}\textbf{LA Hip Hop}} & \best{0.74\std{0.0007}} & \best{0.74\std{0.0006}} & 0.81\std{0.0005} & 0.93\std{0.0004} & 1.09\std{0.0004} & 1.26\std{0.0003} \\
    \hline
    \multicolumn{1}{|c|}{\rule{0pt}{2ex}\textbf{Lock}} & \best{0.67\std{0.0005}} & \best{0.67\std{0.0006}} & 0.74\std{0.0006} & 0.86\std{0.0006} & 1.01\std{0.0005} & 1.18\std{0.0004} \\
    \hline
    \multicolumn{1}{|c|}{\rule{0pt}{2ex}\textbf{Middle Hip Hop}} & \best{0.82\std{0.0009}} & \best{0.82\std{0.0009}} & 0.91\std{0.0010} & 1.06\std{0.0010} & 1.24\std{0.0010} & 1.45\std{0.0010} \\
    \hline
    \multicolumn{1}{|c|}{\rule{0pt}{2ex}\textbf{Pop}} & \best{0.44\std{0.0017}} & \best{0.44\std{0.0018}} & 0.49\std{0.0017} & 0.56\std{0.0017} & 0.65\std{0.0015} & 0.75\std{0.0015} \\
    \hline
    \multicolumn{1}{|c|}{\rule{0pt}{2ex}\textbf{Wack}} & 0.41\std{0.0031} & \best{0.39\std{0.0028}} & 0.41\std{0.0024} & 0.44\std{0.0020} & 0.50\std{0.0015} & 0.56\std{0.0010} \\
    \hline
     \end{tabular}
     \\
     \\
  \begin{tabular}{c|cccccc|}
     \cline{2-7}
     \rule{0pt}{2.5ex} & \multicolumn{6}{|c|}{\textbf{PGDM$_{\rm GDE}$ with Maximum Mixing Scale $\bm{\overline{w^*}} =$}}\\
    & \textbf{0.0} & \textbf{0.2} & \textbf{0.4} & \textbf{0.6} & \textbf{0.8} & \textbf{1.0} \\
    \hline
    \hline
    \multicolumn{1}{|c|}{\rule{0pt}{2ex}\textbf{UWHVF}} & 4.00\std{0.0217} & 3.73\std{0.0205} & 3.49\std{0.0198} & 3.30\std{0.0193} & 3.16\std{0.0180} & \best{3.08\std{0.0153}} \\
    \hline
    \hline
    \multicolumn{1}{|c|}{\rule{0pt}{2ex}\textbf{Break}} & 0.46\std{0.0013} & \best{0.45\std{0.0015}} & 0.47\std{0.0017} & 0.52\std{0.0018} & 0.59\std{0.0018} & 0.68\std{0.0018} \\
    \hline
    \multicolumn{1}{|c|}{\rule{0pt}{2ex}\textbf{House}} & 0.83\std{0.0012} & \best{0.82\std{0.0014}} & 0.89\std{0.0013} & 1.01\std{0.0011} & 1.18\std{0.0010} & 1.36\std{0.0009} \\
    \hline
    \multicolumn{1}{|c|}{\rule{0pt}{2ex}\textbf{Ballet Jazz}} & 0.45\std{0.0010} & \best{0.42\std{0.0009}} & 0.44\std{0.0008} & 0.48\std{0.0007} & 0.53\std{0.0006} & 0.61\std{0.0006} \\
    \hline
    \multicolumn{1}{|c|}{\rule{0pt}{2ex}\textbf{Street Jazz}} & \best{0.54\std{0.0005}} & 0.55\std{0.0006} & 0.61\std{0.0007} & 0.70\std{0.0007} & 0.82\std{0.0006} & 0.95\std{0.0005} \\
    \hline
    \multicolumn{1}{|c|}{\rule{0pt}{2ex}\textbf{Krump}} & \best{0.79\std{0.0010}} & \best{0.79\std{0.0005}} & 0.85\std{0.0003} & 0.96\std{0.0003} & 1.10\std{0.0004} & 1.26\std{0.0004} \\
    \hline
    \multicolumn{1}{|c|}{\rule{0pt}{2ex}\textbf{LA Hip Hop}} & 0.82\std{0.0009} & \best{0.81\std{0.0009}} & 0.87\std{0.0008} & 0.98\std{0.0007} & 1.12\std{0.0006} & 1.28\std{0.0005} \\
    \hline
    \multicolumn{1}{|c|}{\rule{0pt}{2ex}\textbf{Lock}} & \best{0.71\std{0.0022}} & \best{0.71\std{0.0023}} & 0.77\std{0.0023} & 0.89\std{0.0022} & 1.03\std{0.0020} & 1.19\std{0.0019} \\
    \hline
    \multicolumn{1}{|c|}{\rule{0pt}{2ex}\textbf{Middle Hip Hop}} & 0.96\std{0.0040} & \best{0.95\std{0.0038}} & 1.02\std{0.0036} & 1.14\std{0.0034} & 1.31\std{0.0032} & 1.50\std{0.0030} \\
    \hline
    \multicolumn{1}{|c|}{\rule{0pt}{2ex}\textbf{Pop}} & 0.49\std{0.0015} & \best{0.48\std{0.0018}} & 0.52\std{0.0018} & 0.58\std{0.0018} & 0.67\std{0.0018} & 0.77\std{0.0018} \\
    \hline
    \multicolumn{1}{|c|}{\rule{0pt}{2ex}\textbf{Wack}} & 0.47\std{0.0083} & \best{0.45\std{0.0079}} & 0.46\std{0.0075} & 0.49\std{0.0070} & 0.55\std{0.0064} & 0.61\std{0.0057} \\
    \hline
     \end{tabular}
     \end{tabular}
\end{table}

\rev{\textbf{Impact of dynamic scale.} For both the pattern guidance and pattern mixing, PGDM applies a dynamic scale $w\in(0, \overline{w})$ and $w^*\in(0, \overline{w^*})$, respectively. In Tables~\ref{tab:abl:dynamic_mae} and~\ref{tab:abl:dynamic_crps}, we compare PGDM to predictions made with a constant scale, holding $\overline{w}$ and $\overline{w^*}$ fixed. Across the UWHVF dataset and most genres of the AIST++ dataset, the dynamic scale reduces the prediction error of PGDM. On the ballet jazz and wack genres of AIST++, dynamic scaling performs comparably to constant scaling. It is likely that in these two cases, fewer novel patterns are seen at inference time. The dynamic scale, which is determined by the uncertainty of the pattern prediction, is most beneficial when the patterns extracted by archetypal analysis do not fully capture the patterns seen at inference time (i.e., distribution shift). The dynamic guidance scale ensures that, when novel patterns occur, the pattern guidance is not followed by the diffusion model. Similarly, the dynamic mixing scale ensures that pattern predictions on novel patterns are not heavily incorporated into the final forecast.}}

\begin{table}[!t]
\caption{
\rev{Impact of maximum mixing scale $\overline{w^*}$ on continuous ranked probability score (CRPS$_{\rm SUM}$) of PGDM$_{\rm MAE}$ and PGDM$_{\rm GDE}$. CRPS$_{\rm SUM}$ (lower is better) is reported on UWHVF and all ten genres of the AIST++ dataset with fixed $\overline{w}$. Mean and standard deviation are taken across three seeds.}\label{tab:abl:mix_crps}}
\centering
    \small
    \begin{tabular}{c}
     \begin{tabular}{c|cccccc|}
     \cline{2-7}
     \rule{0pt}{2.5ex} & \multicolumn{6}{|c|}{\textbf{PGDM$_{\rm MAE}$ with Maximum Mixing Scale $\bm{\overline{w^*}} =$}}\\
    & \textbf{0.0} & \textbf{0.2} & \textbf{0.4} & \textbf{0.6} & \textbf{0.8} & \textbf{1.0} \\
    \hline
    \hline
    \multicolumn{1}{|c|}{\rule{0pt}{2ex}\textbf{UWHVF}} & 0.795\std{0.0102} & 0.771\std{0.0096} & 0.756\std{0.0086} & \best{0.750\std{0.0071}} & 0.756\std{0.0054} & 0.777\std{0.0040} \\
    \hline
    \hline
    \multicolumn{1}{|c|}{\rule{0pt}{2ex}\textbf{Break}} & \best{0.037\std{<0.0001}} & \best{0.037\std{<0.0001}} & 0.041\std{<0.0001} & 0.048\std{<0.0001} & 0.059\std{0.0001} & 0.072\std{0.0001} \\
    \hline
    \multicolumn{1}{|c|}{\rule{0pt}{2ex}\textbf{House}} & \best{0.072\std{0.0003}} & 0.073\std{0.0002} & 0.082\std{0.0002} & 0.096\std{0.0001} & 0.114\std{0.0001} & 0.135\std{<0.0001} \\
    \hline
    \multicolumn{1}{|c|}{\rule{0pt}{2ex}\textbf{Ballet Jazz}} & 0.036\std{0.0002} & \best{0.035\std{0.0002}} & 0.038\std{0.0002} & 0.045\std{0.0001} & 0.055\std{0.0001} & 0.067\std{0.0001} \\
    \hline
    \multicolumn{1}{|c|}{\rule{0pt}{2ex}\textbf{Street Jazz}} & \best{0.047\std{0.0002}} & 0.051\std{0.0002} & 0.064\std{0.0002} & 0.082\std{0.0002} & 0.104\std{0.0002} & 0.127\std{0.0001} \\
    \hline
    \multicolumn{1}{|c|}{\rule{0pt}{2ex}\textbf{Krump}} & \best{0.071\std{0.0003}} & 0.073\std{0.0003} & 0.081\std{0.0003} & 0.093\std{0.0002} & 0.109\std{0.0002} & 0.128\std{0.0001} \\
    \hline
    \multicolumn{1}{|c|}{\rule{0pt}{2ex}\textbf{LA Hip Hop}} & \best{0.070\std{0.0006}} & 0.071\std{0.0005} & 0.078\std{0.0004} & 0.090\std{0.0003} & 0.106\std{0.0002} & 0.125\std{0.0001} \\
    \hline
    \multicolumn{1}{|c|}{\rule{0pt}{2ex}\textbf{Lock}} & \best{0.062\std{0.0001}} & 0.063\std{0.0001} & 0.071\std{0.0001} & 0.084\std{0.0001} & 0.102\std{0.0002} & 0.122\std{0.0002} \\
    \hline
    \multicolumn{1}{|c|}{\rule{0pt}{2ex}\textbf{Middle Hip Hop}} & \best{0.078\std{0.0002}} & 0.081\std{0.0003} & 0.090\std{0.0003} & 0.106\std{0.0002} & 0.126\std{0.0002} & 0.149\std{0.0001} \\
    \hline
    \multicolumn{1}{|c|}{\rule{0pt}{2ex}\textbf{Pop}} & \best{0.044\std{0.0002}} & 0.045\std{0.0002} & 0.049\std{0.0002} & 0.056\std{0.0002} & 0.065\std{0.0001} & 0.077\std{0.0001} \\
    \hline
    \multicolumn{1}{|c|}{\rule{0pt}{2ex}\textbf{Wack}} & \best{0.039\std{0.0003}} & \best{0.039\std{0.0003}} & 0.042\std{0.0002} & 0.050\std{0.0001} & 0.060\std{0.0001} & 0.072\std{0.0001} \\
    \hline
     \end{tabular}
     \\
     \\
  \begin{tabular}{c|cccccc|}
     \cline{2-7}
     \rule{0pt}{2.5ex} & \multicolumn{6}{|c|}{\textbf{PGDM$_{\rm GDE}$ with Maximum Mixing Scale $\bm{\overline{w^*}} =$}}\\
    & \textbf{0.0} & \textbf{0.2} & \textbf{0.4} & \textbf{0.6} & \textbf{0.8} & \textbf{1.0} \\
    \hline
    \hline
    \multicolumn{1}{|c|}{\rule{0pt}{2ex}\textbf{UWHVF}} & 0.888\std{0.0065} & 0.847\std{0.0045} & 0.816\std{0.0031} & \best{0.801\std{0.0024}} & 0.803\std{0.0023} & 0.825\std{0.0032} \\
    \hline
    \hline
    \multicolumn{1}{|c|}{\rule{0pt}{2ex}\textbf{Break}} & 0.044\std{0.0001} & \best{0.043\std{0.0001}} & 0.046\std{0.0001} & 0.053\std{0.0001} & 0.062\std{0.0001} & 0.073\std{0.0001} \\
    \hline
    \multicolumn{1}{|c|}{\rule{0pt}{2ex}\textbf{House}} & \best{0.078\std{0.0003}} & 0.079\std{0.0003} & 0.086\std{0.0001} & 0.098\std{0.0001} & 0.116\std{<0.0001} & 0.136\std{<0.0001} \\
    \hline
    \multicolumn{1}{|c|}{\rule{0pt}{2ex}\textbf{Ballet Jazz}} & 0.040\std{0.0002} & \best{0.039\std{0.0002}} & 0.041\std{0.0001} & 0.047\std{0.0002} & 0.056\std{0.0002} & 0.067\std{0.0001} \\
    \hline
    \multicolumn{1}{|c|}{\rule{0pt}{2ex}\textbf{Street Jazz}} & \best{0.052\std{0.0001}} & 0.055\std{0.0002} & 0.067\std{0.0002} & 0.084\std{0.0002} & 0.105\std{0.0001} & 0.128\std{0.0001} \\
    \hline
    \multicolumn{1}{|c|}{\rule{0pt}{2ex}\textbf{Krump}} & \best{0.077\std{0.0003}} & 0.078\std{0.0003} & 0.084\std{0.0003} & 0.094\std{0.0003} & 0.110\std{0.0003} & 0.128\std{0.0002} \\
    \hline
    \multicolumn{1}{|c|}{\rule{0pt}{2ex}\textbf{LA Hip Hop}} & \best{0.076\std{0.0006}} & 0.077\std{0.0005} & 0.083\std{0.0004} & 0.094\std{0.0003} & 0.109\std{0.0003} & 0.126\std{0.0002} \\
    \hline
    \multicolumn{1}{|c|}{\rule{0pt}{2ex}\textbf{Lock}} & \best{0.065\std{<0.0001}} & 0.066\std{0.0001} & 0.073\std{0.0001} & 0.086\std{0.0001} & 0.103\std{0.0002} & 0.122\std{0.0002} \\
    \hline
    \multicolumn{1}{|c|}{\rule{0pt}{2ex}\textbf{Middle Hip Hop}} & \best{0.087\std{0.0003}} & 0.089\std{0.0003} & 0.097\std{0.0003} & 0.111\std{0.0002} & 0.130\std{0.0001} & 0.152\std{0.0001} \\
    \hline
    \multicolumn{1}{|c|}{\rule{0pt}{2ex}\textbf{Pop}} & \best{0.047\std{0.0001}} & \best{0.047\std{0.0002}} & 0.051\std{0.0002} & 0.057\std{0.0002} & 0.066\std{0.0001} & 0.077\std{0.0001} \\
    \hline
    \multicolumn{1}{|c|}{\rule{0pt}{2ex}\textbf{Wack}} & 0.045\std{0.0002} & \best{0.044\std{0.0002}} & 0.048\std{0.0002} & 0.055\std{0.0002} & 0.064\std{0.0002} & 0.076\std{0.0002} \\
    \hline
     \end{tabular}
     \end{tabular}
\end{table}

\rev{\textbf{Impact of pattern prediction.} To address the challenge of dynamically evolving patterns, PGDM relies on a model that predicts the future patterns appearing in the sequence. The most similar related work to PGDM, Diff-MGR~\citep{zhao2024diff}, does not account for this realistic and common feature of temporal data. Diff-MGR assumes that patterns will remain constant over time. As we were able to obtain neither code nor sufficient implementation details to evaluate Diff-MGR on our applications, we instead perform an ablation study of PGDM where we assume constant patterns as a proxy for this baseline. In Tables~\ref{tab:abl:dynamic_mae} and~\ref{tab:abl:dynamic_crps}, we compare the performances of PGDM and predictions conditioned only on the most recently observed pattern $A c_A(x_{T})$ (i.e.,  without the pattern prediction model $f_A$). Across UWHVF and all ten genres of the AIST++ dataset, PGDM with $f_A$ outperforms PGDM without $f_A$.  On UWHVF, the pattern prediction model adds a slight performance gain, as visual fields and their patterns may progress slowly over several decades~\citep{saunders2016rates}. In other cases, the performance gap from adding pattern prediction is more significant, e.g., the house, krump, and middle hip hop genres of AIST++. In these particular dance genres, patterns may change more rapidly due to dance style or music tempo. For example, the reported music tempo for the house genre is 110-130 BPM, compared to the 80-130 BPM tempo of the remaining genres~\citep{li2021ai}. This study highlights the importance of the pattern prediction model in accounting for dynamic patterns.}

\begin{table}[!t]
\caption{
\rev{Impact of dynamic scaling and pattern prediction model $f_A$ on mean absolute error (MAE) of PGDM$_{\rm MAE}$ and PGDM$_{\rm GDE}$. MAE is reported for UWHVF and all ten genres of the AIST++ dataset with fixed $\overline{w}$ and $\overline{w^*}$. Mean and standard deviation are taken across five samples.}\label{tab:abl:dynamic_mae}}
\centering
    \small
     \begin{tabular}{c|ccc|ccc|}
     \cline{2-7}
     & \multicolumn{3}{|c|}{\rule{0pt}{2ex}\textbf{PGDM$_{\rm MAE}$}}  & \multicolumn{3}{|c|}{\textbf{PGDM$_{\rm GDE}$}}\\
    & \textbf{Const. Scale} & \textbf{Without $\bm f_A$} & \textbf{PGDM} & \textbf{Const. Scale} & \textbf{Without $\bm f_A$} & \textbf{PGDM} \\
    \hline
    \hline
    \multicolumn{1}{|c|}{\rule{0pt}{2ex}\textbf{UWHVF}} & 3.16\std{0.0000} & 2.99\std{0.0101} & \best{2.96\std{0.0117}} & 3.16\std{0.0000} & 3.19\std{0.0119} & \best{3.08\std{0.0153}}\\
    \hline
    \hline
    \multicolumn{1}{|c|}{\rule{0pt}{2ex}\textbf{Break}} & 0.39\std{0.0010} & 0.41\std{0.0008} & \best{0.38\std{0.0009}} & 0.46\std{0.0012} & 0.49\std{0.0025} & \best{0.45\std{0.0015}}\\
    \hline
    \multicolumn{1}{|c|}{\rule{0pt}{2ex}\textbf{House}} & 0.77\std{0.0013} & 0.84\std{0.0019} & \best{0.74\std{0.0014}} & 0.84\std{0.0012} & 0.94\std{0.0011} & \best{0.82\std{0.0014}}\\
    \hline
    \multicolumn{1}{|c|}{\rule{0pt}{2ex}\textbf{Ballet Jazz}} & \best{0.38\std{0.0002}} & 0.40\std{0.0006} & \best{0.38\std{0.0002}} & \best{0.42\std{0.0010}} & 0.45\std{0.0002} & \best{0.42\std{0.0009}}\\
    \hline
    \multicolumn{1}{|c|}{\rule{0pt}{2ex}\textbf{Street Jazz}} & 0.49\std{0.0010} & 0.49\std{0.0005} & \best{0.48\std{0.0008}} & 0.56\std{0.0006} & 0.55\std{0.0002} & \best{0.54\std{0.0005}}\\
    \hline
    \multicolumn{1}{|c|}{\rule{0pt}{2ex}\textbf{Krump}} & 0.74\std{0.0009} & 0.92\std{0.0012} & \best{0.70\std{0.0013}} & 0.81\std{0.0003} & 0.94\std{0.0009} & \best{0.79\std{0.0005}}\\
    \hline
    \multicolumn{1}{|c|}{\rule{0pt}{2ex}\textbf{LA Hip Hop}} & 0.77\std{0.0007} & 0.81\std{0.0005} & \best{0.74\std{0.0006}} & 0.84\std{0.0009} & 0.89\std{0.0009} & \best{0.81\std{0.0009}}\\
    \hline
    \multicolumn{1}{|c|}{\rule{0pt}{2ex}\textbf{Lock}} & 0.69\std{0.0002} & 0.75\std{0.0011} & \best{0.67\std{0.0006}} & 0.72\std{0.0015} & 0.79\std{0.0019} & \best{0.71\std{0.0023}}\\
    \hline
    \multicolumn{1}{|c|}{\rule{0pt}{2ex}\textbf{Middle Hip Hop}} & 0.86\std{0.0009} & 0.94\std{0.0009} & \best{0.82\std{0.0009}} & 0.97\std{0.0022} & 1.10\std{0.0039} & \best{0.95\std{0.0038}}\\
    \hline
    \multicolumn{1}{|c|}{\rule{0pt}{2ex}\textbf{Pop}} & 0.47\std{0.0019} & 0.46\std{0.0020} & \best{0.44\std{0.0018}} & 0.50\std{0.0017} & 0.51\std{0.0014} & \best{0.48\std{0.0018}} \\
    \hline
    \multicolumn{1}{|c|}{\rule{0pt}{2ex}\textbf{Wack}} & \best{0.39\std{0.0025}} & 0.41\std{0.0020} & \best{0.39\std{0.0028}} & \best{0.45\std{0.0074}} & 0.47\std{0.0070} & \best{0.45\std{0.0079}}\\
    \hline
     \end{tabular}
\end{table}

\begin{table}[!t]
\caption{
\rev{Impact of dynamic scaling and pattern prediction model $f_A$ on continuous ranked probability score (CRPS$_{\rm SUM}$) of PGDM$_{\rm MAE}$ and PGDM$_{\rm GDE}$. CRPS$_{\rm SUM}$ (lower is better) is reported for UWHVF and all ten genres of the AIST++ dataset with fixed $\overline{w}$ and $\overline{w^*}$. Mean and standard deviation are taken across three seeds.}\label{tab:abl:dynamic_crps}}
\centering
    \small
     \begin{tabular}{c|ccc|ccc|}
     \cline{2-7}
     & \multicolumn{3}{|c|}{\rule{0pt}{2ex}\textbf{PGDM$_{\rm MAE}$}}  & \multicolumn{3}{|c|}{\textbf{PGDM$_{\rm GDE}$}}\\
    & \textbf{Const. Scale} & \textbf{Without $\bm f_A$} & \textbf{PGDM} & \textbf{Const. Scale} & \textbf{Without $\bm f_A$} & \textbf{PGDM} \\
    \hline
    \hline
    \multicolumn{1}{|c|}{\rule{0pt}{2ex}\textbf{UWHVF}} & 0.865\std{0.0000} & 0.843\std{0.0034} & \best{0.777\std{0.0040}} & 0.865\std{0.0000} & 0.973\std{0.0014} & \best{0.825\std{0.0032}} \\
    \hline
    \hline
    \multicolumn{1}{|c|}{\rule{0pt}{2ex}\textbf{Break}} & 0.038\std{0.0001} & 0.043\std{0.0001} & \best{0.037\std{<0.0001}} & 0.046\std{0.0001} & 0.052\std{0.0001} & \best{0.043\std{0.0001}} \\
    \hline
    \multicolumn{1}{|c|}{\rule{0pt}{2ex}\textbf{House}} & 0.077\std{0.0002} & 0.089\std{0.0003} & \best{0.073\std{0.0002}} & 0.081\std{0.0001} & 0.099\std{0.0002} & \best{0.079\std{0.0003}} \\
    \hline
    \multicolumn{1}{|c|}{\rule{0pt}{2ex}\textbf{Ballet Jazz}} & 0.036\std{0.0002} & 0.039\std{0.0001} & \best{0.035\std{0.0002}} & \best{0.039\std{0.0002}} & 0.042\std{0.0001} & \best{0.039\std{0.0002}} \\
    \hline
    \multicolumn{1}{|c|}{\rule{0pt}{2ex}\textbf{Street Jazz}} & 0.049\std{0.0002} & 0.051\std{0.0002} & \best{0.047\std{0.0002}} & 0.054\std{0.0001} & 0.056\std{0.0002} & \best{0.052\std{0.0001}} \\
    \hline
    \multicolumn{1}{|c|}{\rule{0pt}{2ex}\textbf{Krump}} & 0.078\std{0.0003} & 0.133\std{0.0007} & \best{0.071\std{0.0003}} & 0.081\std{0.0004} & 0.122\std{0.0005} & \best{0.078\std{0.0003}} \\
    \hline
    \multicolumn{1}{|c|}{\rule{0pt}{2ex}\textbf{LA Hip Hop}} & 0.075\std{0.0004} & 0.085\std{0.0006} & \best{0.071\std{0.0005}} & 0.080\std{0.0004} & 0.093\std{0.0005} & \best{0.077\std{0.0005}} \\
    \hline
    \multicolumn{1}{|c|}{\rule{0pt}{2ex}\textbf{Lock}} & 0.066\std{0.0002} & 0.079\std{0.0002} & \best{0.063\std{0.0001}} & 0.068\std{0.0001} & 0.082\std{0.0001} & \best{0.066\std{0.0001}} \\
    \hline
    \multicolumn{1}{|c|}{\rule{0pt}{2ex}\textbf{Middle Hip Hop}} & 0.086\std{0.0003} & 0.107\std{0.0002} & \best{0.081\std{0.0003}} & 0.093\std{0.0003} & 0.122\std{0.0006} & \best{0.089\std{0.0003}} \\
    \hline
    \multicolumn{1}{|c|}{\rule{0pt}{2ex}\textbf{Pop}} & 0.047\std{0.0002} & 0.049\std{0.0002} & \best{0.045\std{0.0002}} & 0.049\std{0.0002} & 0.052\std{0.0003} & \best{0.047\std{0.0002}} \\
    \hline
    \multicolumn{1}{|c|}{\rule{0pt}{2ex}\textbf{Wack}} & \best{0.039\std{0.0002}} & 0.044
    \std{0.0003} & \best{0.039\std{0.0003}} & 0.045\std{0.0002} & 0.050\std{0.0003} & \best{0.044\std{0.0002}} \\
    \hline
     \end{tabular}
\end{table}

\end{document}